\renewcommand{\normalsize}{\fontsize{11.5pt}{19pt}\selectfont}
\newcommand{\PP}{\mathbb{P}}
\newcommand{\EE}{\mathbb{E}}
\begin{document}

\RUNAUTHOR{Chen, Shi and Pu}

\RUNTITLE{Data Pooling for Health Intervention}

\TITLE{Data-pooling Reinforcement Learning for Personalized Healthcare Intervention}

\ARTICLEAUTHORS{%
\AUTHOR{Xinyun Chen}
\AFF{The Chinese University of Hong Kong, Shenzhen, China \EMAIL{chenxinyun@cuhk.edu.cn}} 
\AUTHOR{Pengyi Shi}
\AFF{Purdue University, West Lafayette, IN, USA \EMAIL{shi178@purdeu.edu}}
\AUTHOR{Shanwen Pu}
\AFF{Shanghai University of Finance and Economics, Shanghai, China \EMAIL{2019212802@live.sufe.edu.cn}}
} 

\ABSTRACT{Motivated by the emerging needs of personalized preventative intervention in many healthcare applications, we consider a multi-stage, dynamic decision-making problem in the online setting with unknown model parameters. To deal with the pervasive issue of small sample size in personalized planning, we develop a novel data-pooling reinforcement learning (RL) algorithm based on a general perturbed value iteration framework. Our algorithm adaptively pools historical data, with three main innovations: (i) the weight of pooling ties directly to the performance of decision (measured by regret) as opposed to estimation accuracy in conventional methods; (ii) no parametric assumptions are needed between historical and current data; and (iii) requiring data-sharing only via aggregate statistics, as opposed to patient-level data. Our data-pooling algorithm framework applies to a variety of popular RL algorithms, and we establish a theoretical performance guarantee showing that our pooling version achieves a regret bound strictly smaller than that of the no-pooling counterpart. We substantiate the theoretical development with empirically better performance of our algorithm via a case study in the context of post-discharge intervention to prevent unplanned readmissions, generating practical insights for healthcare management. In particular, our algorithm alleviates privacy concerns about sharing health data, which (i) opens the door for individual organizations to levering public datasets or published studies to better manage their own patients; and (ii) provides the basis for public policy makers to encourage organizations to share aggregate data to improve population health outcomes for the broader community.
}

\KEYWORDS{Preventative Intervention, Personalized Decision, Reinforcement Learning, Data Sharing}

\maketitle

\vspace{-0.3in}
\section{Introduction}
\label{sec: introduction}

Preventative interventions, such as disease screening, monitoring, and follow-up phone calls or visits to enhance medical adherence, are important strategies to improve delivery of health care and population health. 
Progress has been made in managing major public health priorities, such as diabetes~\citep{newman2018community}, depression~\citep{lowe2004monitoring}, hypertension~\citep{liao2020personalized}, and maternal health~\citep{biswas21}. On the one hand, efficient intervention can improve individual health outcomes and reduce expenditures and burdens on the healthcare systems. On the other hand, interventions are costly, e.g., they require additional resources to staff healthcare workers to provide preventative interventions~\citep{biswas21}. Therefore, it is imperative for decision makers in the healthcare and public health sectors to manage these tradeoffs when allocating intervention resources to individuals in need. Dynamic intervention management has been a long-standing topic in the operations research fields~\citep{tunc2014opportunities}. In recent years, with the advancement of data analytics and integration of health information systems, there is a burgeoning interest in tailoring intervention strategies to different patients to best suit their individual needs~\citep{ayer2018}. 
Personalization requires prescribing tailored policies based on individuals' heterogeneous trajectory of health condition changes, presenting unique challenges. This research aims to address one critical challenge in data-driven personalized intervention: the \emph{small-sample issue}. Next, we elaborate on the small-sample issue and its associated challenges, 
which are pervasive not only in many healthcare problems, but also in domains beyond healthcare -- hence, of broad significance.

\subsection{Challenges}
\label{sec:challenge}

Personalized medicine has received increasing attention in recent years. Examples of preventative interventions, among others, include preventative screening for cancers~\citep{ayer2016heterogeneity, bertsimas2018optimal}, wearable devices for chronic disease management~\citep{gautier2021impact,liao2020personalized}, and post-discharge follow-ups with patients via phone calls or home visits to prevent hospital readmissions (the case study in this research). 
Markov Decision Process (MDP) is a common modeling tool to dynamically optimize the personalized intervention planning. 
The conventional MDP-based personalized medicine research focuses on tailored planning via updated action or belief based on individual-specific observations~\citep{piri2022individualized}. These studies often assume a common model of the condition progression for the population, where MDP parameters such as the transition matrix can be reliably estimated beforehand from existing clinical data or medical studies. Recent research has begun to do risk stratification based on regression or advanced machine learning models to capture the heterogeneity among different individuals in terms of the intervention effect and risk/condition evolution; see~\cite{ayer2018} for a comprehensive survey. 
However, these works still follow the estimate-then-optimize (ETO) framework; that is, they first estimate the MDP model parameters and then solve the optimal policies assuming that the MDP parameters are known exactly. ETO ignores the estimation errors when making decisions, which can be particularly problematic in personalized decisions due to far too few samples for each individual patient class. For example, in the hospital data used in our case study, the total patient sample size is around 20,000, but for each individual class, the size is only on the order of dozens. Using inaccurate parameters may lead ETO to prescribe highly suboptimal policies.

This research addresses challenges in dynamic, personalized intervention from a perspective that is \textbf{fundamentally different} from that of the mainstream literature in personalized medicine: the need to capture heterogeneity in individuals' different responses to intervention and risk-evolving trajectory, but the corresponding model parameters are not known a priori; only a small number of data observations are collected over the time. In other words, we consider the online learning setting in which one needs to optimize the policies while learning the model parameters. Compared to ETO methods, online learning methods explicitly accounts for parameter estimation errors when prescribing policies, and can explore potentially good policy that has not been used in the past, i.e., not restricted by the collected data. Specifically, we consider the \emph{reinforcement learning} (RL) framework for multi-stage, online decision-making. This is more suitable than static frameworks (e.g., bandit learning) for preventative intervention problems because current decisions can generate long-term impacts on patient outcomes, although the dynamic nature makes the problem a lot more complex without simple heuristics (see Section~\ref{subs:MDP}). Importantly, our focused regime is to learn and optimize the decision for each individual patient class -- the \emph{target patients} -- in a data-limited environment: each class has a small number of data observations, particularly at the beginning of the learning phase (e.g., in a new hospital). This ``small-data'' regime is common in personalized decisions in various applications~\citep{gupta2022optimization}; however, major challenges arise when applying existing RL algorithms in this regime.  


The obvious first challenge is the difficulty of learning caused by the limited data from target patients. 
To account for estimation errors, standard RL algorithms require exploration over possibly sub-optimal policies during learning. The small sample size leads to large variance, results in less ``confidence'' in the estimated parameters, and requires more exploration than usual. 
Our case study shows that using only the target patients' data, the decision quality measured via regret is significantly higher ($50\%$) than the best-performing policy when there are only dozens of observations. We note that, although typical preventative interventions are not medically based and explorations are relatively non-harmful, the additional explorations are still undesirable. For example, too-frequent follow-ups can cause patient fatigue and a waste of expensive intervention resources.

Regression-based methods such as contextual learning are sought to address the first challenge when data from other patient populations are available. However, these methods suffer major limitations in healthcare settings. We review these methods in Section~\ref{sec:lit} and illustrate the second challenge associated with them via the following example: A large health system has just opened a new hospital. 
A straightforward way to manage patient interventions in this new hospital is to run a regression combining target data and historical data from other locations in the system to facilitate learning. Indeed, regression-based methods are widely used in leveraging other datasets or previously trained models to improve personalized decisions, e.g., transfer learning in recommendation systems or dynamic pricing~\citep{bastani2021predicting}. However, these approaches often rely on a critical assumption: that the dependence of the outcome, with or without intervention, on the individual features (context) is similar across different populations.
Unfortunately, this is often not the case in healthcare, as different locations can have very different social determinants or other factors, which may not be captured by the contextual information collected by hospitals but can largely affect the individual risk evolution and treatment effects.\footnote{\cite{tomkins2021intelligentpooling} point out: ``While this approach (contextual learning) can have advantages compared to ignoring the user's context, it fails to address that users can respond differently to treatment even when they appear to be in the same context.''} Even within the same hospital, the risk and treatment effect can constantly change while the features remain similar. 
As we will demonstrate both analytically and numerically, contextual learning is prone to misspecification of the underlying models (how outcomes depend on features) across populations and can suffer from significant performance deterioration. 

The third challenge for applying existing RL methods that use multiple datasets is the typical need to share individual-level information when merging the datasets for learning. This need is extremely challenging to achieve across different health organizations due to privacy concerns. It also restricts researchers from using public datasets or published studies, which typically provide only aggregate-level data, to augment their own datasets for better intervention management. 

\vspace{-0.15in}
\subsection{Overview and Contributions}

In this paper, we develop a novel \emph{data-pooling} RL method 
to tackle the small-sample issue, while addressing the three challenges discussed above in a holistic way. Our method effectively leverages historical data (private or public data), via aggregate statistics only, to facilitate personalized learning and decision-making in an environment with limited data, with provable performance guarantee. We provide an overview of our development and highlight the contributions.

\textbf{I. Personalized Intervention RL with Model-free Design. } We formalize the MDP model for intervention planning and our learning framework in Section~\ref{sec:model-formulation}, along with defining the primary performance metric -- regret. 
Compared to standard no-pooling RL methods, our RL framework sets the structure of augmenting target data with historical data. This lays out the basis for our data-pooling algorithm design to address the first challenge associated with limited target data. 

A salient feature that differentiates our RL framework from other frameworks that include data from multiple populations is the \emph{model-free} design. We do not make specific parametric assumptions regarding how the historical and target populations are connected; for example, there are no assumed functional forms for modeling the dependence of outcomes on individual features, as in regression-based methods. Instead, we require only that the responses to interventions across these populations are sufficiently ``close'', measured in a non-parametric way (to be made precise in Section~\ref{sec:model-formulation}). Consequently, our data-pooling algorithm developed within this framework is model-free and (i) naturally deals with settings in which contextual information is non-stationary; and (ii) requires data sharing only via aggregate statistics, instead of patient-level data. 
Feature (i) addresses the second challenge regarding model misspecifications, while feature (ii) addresses the third challenge involving data privacy. 

\textbf{II. Data-pooling Algorithm and Regret Analysis}. 
The idea of using data from others to augment target data and facilitate learning is intuitive and can be traced back to the empirical Bayesian literature~\citep{efron_hastie_2016}. The innovation of our data-pooling algorithm is that its derivation ties directly to improving the decision quality, which contrasts with conventional methods that focus on improving prediction accuracy with more data. The derivation starts by formulating a ``meta'' algorithm and a unified regret analysis framework, covering major RL algorithms (Section~\ref{sec:overview-alg-regret}). Through this framework, we characterize the two key components in improving the performance (regret) of the algorithm -- the value function estimation error and exploration effort -- and we quantify their impact on the regret, which establishes connections between the regret and the design of a better estimator for value functions through a set of confidence radii. We then pin down the key mechanism in designing the data-pooling estimator -- how to adaptively weigh in the historical data -- and derive the optimal weights by minimizing the confidence radii, specified in Section~\ref{sec:data-pool-for-value}. Essentially, our data-pooling design improves the accuracy of the estimator, but this improvement is not solely for prediction accuracy. Rather, it has the more-sophisticated aim of improving value function estimation and reducing the unnecessary exploration of learning with only target data, hence improving the decision quality. This analytical framework of connecting data-pooling design with regret could generate independent theoretical interest for a broad class of RL problems.

\textbf{III. Practical Applicability and Algorithm Explainability. } We demonstrate the practical applicability of our algorithm for healthcare management in a specific context: the personalized post-discharge intervention. Leveraging a large patient-level dataset from a partner hospital, we construct a simulation platform and perform a case study in Section~\ref{sec:numerical}. 
We show the superior empirical performance of our data-pooling algorithm over benchmarks in three categories: ETO methods; learning with own (target) data; and state-of-the-art learning algorithms using both target and historical data (brief review of the methods in Section~\ref{sec:lit}).
For operations and implementation considerations, we also demonstrate how to use this algorithmic tool under various operational constraints in practice, particularly accounting for model misspecification and privacy in data sharing. Moreover, we investigate the drivers behind the better performance of our algorithm over other benchmarks, providing explainability that is critical for adoption by practitioners. 

We conclude this section by discussing the broader impact of our research and summarizing managerial implications of our algorithm for healthcare and public-sector operations.

\noindent\textbf{Broader Impact. }
While our primary context is health care, the growing needs of personalized intervention are pervasive in many applications, as are the challenges associated with the small-sample issue. 
Our analysis framework and the data-pooling estimator are developed for a rather general class of multi-stage decision problems and can be used to support personalized decision making in operational problems that share a similar modeling framework, such as incarceration diversion decisions for reducing recidivism in criminal justice systems~\citep{zhang2022routing}; revenue management (deciding which customers to follow up with to increase their chance of subscribing to paid services); and pricing~\citep{bastani2022meta}. 
The empirical success of our algorithm, as well as our proved performance guarantee, provide a stepping stone to using historical data to facilitate the learning of policies in many other data-limited settings.

\noindent\textbf{Implications. } 
Improving decision-making quality when facing the small-data regime is of great practical relevance and importance for healthcare management. Caution should be taken when using the conventional wisdom of pooling data by similar features as in the regression-based methods. The performance of these methods can deteriorate significantly when the imposed parametric model deviates from the ground truth (more than 40 folds in the case study). Practitioners should consider model-free methods like ours when the underlying individuals are highly heterogeneous. Moreover, our paper shows the success of data-sharing only via aggregate statistics. This provides a viable option for individual organizations to leverage public datasets or published studies to augment their own dataset and improve the outcomes of their own patients. It also provides the basis for public policy-makers to encourage organizations to share their aggregate data to facilitate population health outcomes for the broader community.  See more discussion in Section~\ref{subsec:robust-perform-extension}. 

\vspace{-0.1in}
\section{Literature Review}
\label{sec:lit}


\noindent\textbf{Personalized Medicine.}
\cite{ayer2018} provide a comprehensive survey on personalized interventions to contrast with traditional population-level intervention that typically uses a one-size-fits-all strategy; see references therein. Our focus is fundamentally different from that of the MDP-based personalized medicine research: we focus on joint learning (of individual class's risk trajectory) and personalized decision making. Our paper is also different from (i) those focusing on prediction, e.g., predictive models for estimating the readmission risk \citep{min2019predictive, choudhury2018evaluating}; and (ii) papers using the ETO framework. We prescribe optimal decision support with the estimation errors in the small-data regime being explicitly accounted for. 
A related field is personalization in clinical trial design~\citep{chick2022bayesian}. \cite{alban2022learning} use contextual multi-armed bandit to design a sequential clinical trial with patient-specific covariates. We note the different focus of clinical trial design as its objective is to identify the best treatment strategy (arm) over a population. The strategy remains fixed once being assigned to a patient, hence, individual patient's state transitions are usually not accounted for. One exception is \cite{bonifonte2022analytics}, who propose offline, population-based blood pressure management strategies to guide trial design, where the state (blood pressure) change is modeled via a Brownian motion. 


\noindent\textbf{RL in Healthcare Interventions.} 
Bandit learning has been used in personalized healthcare interventions; see, e.g., \cite{Feiyun2018ActorCritcMhealth} and \cite{keyvanshokooh2019contextual}. We focus on multi-stage sequential decision making in the online learning setting. 
The dynamic nature greatly complicates our problem. There is no simple heuristic to bypass the needs to estimate the risk evolution; see Section~\ref{subs:MDP}. In multi-stage learning, two types of regression-based methods are commonly used. The first type is clustering (classification), whereby patient clusters are first formed based on features and the response to treatment. Then, new patients are grouped into the corresponding cluster when learning and prescribing policies~\citep{shi2021timing,utomo2019personalised}.
The second type is contextual learning, whereby a regression model is used to estimate the value functions~\citep{liao2020personalized,zhou2018personalizing}. 
We will show the insufficiency of these regression-based methods analytically (Section~\ref{sec:model-misspec}) and numerically (Section~\ref{sec:robust-under-misspec}) due to model misspecification. In contrast, our model-free algorithm enjoys built-in robustness to misspecification.  

\noindent\textbf{Data-pooling for Small-data Regime. } 
The idea of ``learning from others" originated from empirical Bayesian statistics~\citep{efron_hastie_2016} for estimation/prediction problems. Data pooling in decision making is fundamentally different, since the design of the data-pooling estimators should tie to the performance of decisions, not just the prediction accuracy. In \cite{gupta2021data}, the authors compare between data pooling for estimation and pooling for optimization. They consider stochastic optimization in an offline setting, i.e., all data are collected in advance and fixed throughout the optimization procedure, which is different from the online setting we consider. For online decision-making, data pooling has been used under the contextual bandit framework.  \cite{tomkins2021intelligentpooling} consider personalized treatment in mobile health applications and develop an adaptive pooling method combining the empirical Bayesian idea with Thompson Sampling (TS) on Gaussian regression models. \cite{bastani2022meta} propose a meta-learning algorithm for dynamic pricing, which learns across sequential TS experiments. 
We stress that the RL setting has unique challenges so methods for the bandit setting do not directly extend. 
Among others, \cite{miao2019context} is the closest to our paper, in the sense that they also treat data from different populations in a model-free manner. Based on a semi-myopic pricing policy, \cite{miao2019context} propose an adaptive clustering method to pool data across different customer classes. We make a detailed comparison with their method in Sections~\ref{subs:EmpResult} and~\ref{subsec:robust-perform-extension}, showing the advantages of our algorithm in overcoming the three challenges discussed in Section~\ref{sec:challenge}.


\noindent\textbf{Regret Analysis for RL.} Our theoretic analysis roots in regret analysis for finite-horizon RL problems. There is a vast literature in this area; see, e.g., \cite{ortner2007logarithmic, jaksch2010near, wen2014efficient, osband2016generalization,azar2017minimax}, and \cite{russo2019worst}. Our regret bound analysis framework is developed based on \cite{russo2019worst} and connects between the data-pooling estimator design and the two key components impacting the regret. We stress that most research in this area focuses on the asymptotic dependence of regret bound on the problem scales, e.g., state-space size, while our work focuses on reducing regret bound in finite iterations via data pooling.
    


\vspace{-0.1in}
\section{Model Formulation}
\label{sec:model-formulation}

In this section, we specify our modeling framework. We introduce the general MDP model in Section~\ref{subsec:prob-setting}, and the online learning framework in Section~\ref{sec:RL-learning-framework}. 
We provide context for the framework via a specific example (post-discharge intervention) in Section~\ref{sec:motivate-example}, where we analyze its structural properties to showcase the challenges associated with data-driven personalized intervention.

\medskip

\subsection{MDP Model}
\label{subsec:prob-setting}

We consider a finite-horizon MDP $\mathcal{M} = (\mathcal{H}, \mathcal{S}, \mathcal{A}, R,P)$ for each \emph{target} individual (patient) class, with the goal of optimizing the policy for them. Here, $\mathcal{H}=\{1,2,...,H\}$ is the decision horizon, and $\mathcal{S}=\{0, 1, \dots, S\}$ and $\mathcal{A} = \{0, 1, \dots, A\}$ are the state and action spaces, respectively. The action is taken at each decision epoch $h\in\mathcal{H}$. To align with the learning literature, we consider maximizing reward as the objective. For simplicity, we further assume that the per-epoch reward is bounded and normalized, i.e., for all $h, s, a$, the collected random reward $R(h,s,a)\in [0,1]$ with the mean reward denoted as $r(h,s,a)$. For given $h,s,a$ and $s'\in\mathcal{S}$, we use $P(s';h,s,a)$ to denote the transition probability into state $s'$ given that, at epoch $h$, the system is in state $s$ and action $a$ is taken. We denote by $P(\cdot;h,s,a)$ or simply $P(h,s,a)$ the $S$-dimension vector of transition probabilities for given $h,s,a$. We use $s_1$ to denote the initial state. We call $\mathcal{M}$ the \emph{target MDP}. Our goal is to find an optimal policy $\pi:\mathcal{S}\times \mathcal{H}\to \mathcal{A}$ for $\mathcal{M}$ with the objective as 
\begin{equation}
\max_{\pi} V(\mathcal{M}, \pi) := \max_{\pi} \mathbb{E}\left[ \sum_{h=1}^H R\big( h, s_h, a_h=\pi(s_h,h) \big) ~{\Big|} s_1 \right].
\label{eq:target-obj}
\end{equation}

This MDP model is fairly general and applies to many applications. We take preventative intervention as the example to illustrate these model parameters. The state typically corresponds to the health status of the patient. Following the literature, the states $s \in \mathcal{S} = \{0, 1, \dots, S\}$ are ranked monotonically from healthy to less healthy, where $0$ typically stands for healthy, and $S$ stands for some absorbing state for adverse events (e.g., mortality, readmission, disengagement from mobile health programs, or ending of the preventative phase). The state in epoch $h$ is observed first, and if not entering the absorbing state, we then decide on the intervention $a \in \mathcal{A} = \{0, 1, \dots, A\}$, with $0$ being no intervention, $A$ being the strongest level of intervention, and other actions in-between being middle levels. $P(s';h,s,a)$ corresponds to the transition probability among different health states under choice $a$, where a useful intervention increases the chance of transitioning to healthier status. To capture the tradeoffs in intervention, a common form of the cost (negative reward) is, for $s\neq S$, $-r(h,s,a) = c(h, s, a) = c_a + c_R\cdot P(s'=S;h,s,a)$, where $c_a$ is the intervention cost for action $a$ ($c_0=0$ for no intervention), and $c_R$ is the penalty for entering $S$ (adverse event occurring).

Note that the Markovian intervention can be easily extended to be history-dependent by enlarging the state (e.g., add a state dimension to track the number of previous interventions). We focus on the Markovian policy setting in the rest of the paper for ease of exposition. 
Similarly, a partially observable setting could be accounted for by augmenting the state with an information state~\citep{piri2022individualized}; we focus on the observable state setting in this paper.

The optimization problem~\eqref{eq:target-obj} can be solved via the Bellman equation and value (or policy) iteration methods; see a brief review in Appendix~\ref{section:VIPI}. Our MDP model formulation covers a wide range of multi-stage decision making in operational problems beyond the preventative intervention planning. For example, in the criminal justice setting, community-based services for post-incarcerated individuals are shown to help their reintegration into society and reduce the recidivism rate~\citep{zhang2022routing}. This planning problem can be modeled with a similar MDP. 

\vspace{-0.15in}
\subsection{Reinforcement Learning Framework}
\label{sec:RL-learning-framework}

The MDP model introduced in Section~\ref{subsec:prob-setting} is for one target class. For personalized intervention, the reward $R$ and transition probabilities $P$ are often highly \emph{heterogeneous} across different target classes, but the true model parameters for each class are not known exactly. Estimating the reward and transition probabilities $R^{(i)}, P^{(i)}$ accurately for each class $i$ is critical when managing a panel of heterogeneous patients in the data-driven context; however, this is a difficult task given the far too few data samples for each class. The ETO framework plugs in the estimated parameters and solves the corresponding MDP model. The resulting policies can be highly suboptimal due to the inaccuracy of the estimation~\cite{gupta2022optimization}. 

To account for the parameter uncertainty, we deviate from ETO and use the online learning framework -- specifically, the reinforcement learning (RL) framework for multi-stage, dynamic decisions. 
For exposition, our main framework considers the learning task for each target class $i$ separately, without interaction among the classes. When referring to target individuals or patients, we always consider a given class and we drop the superscript $i$; see Remark~\ref{rmk:model-ext} for an extension on different classes interacting. In RL, we do not know the parameters of the target MDP $\mathcal{M}$ a priori. This is a major difference from conventional research, in which all the model parameters are assumed to be known. Instead, we gradually collect data when we choose an action according to some policy and interact with $\mathcal{M}$. The goal is to learn an optimal policy for $\mathcal{M}$ during the interaction. The data collected from the interaction with $\mathcal{M}$ is called the target data and we denote it as $\mathcal{D}$.

\noindent\textbf{Performance Analysis. } The primary performance metric for a given RL algorithm is its \emph{regret}, defined as the difference between the maximum total reward under the true (oracle) optimal policy $\pi^*$ and the expected reward collected when decisions are made by the RL algorithm under consideration. An RL algorithm is more efficient if it has a smaller regret, indicating that the decision quality is better when we do not know the true parameters exactly. Mathematically, the total regret accumulated in $T$ iterations is defined as
\begin{equation}
Regret(T) := \sum_{t=1}^T \left( V(\mathcal{M},\pi^*) - \EE[V(\mathcal{M},\pi_t)] \right), \label{eq:regret-def}
\end{equation}
where $\pi_t$ is the policy prescribed by the RL algorithm in iteration $t$.  
In Section~\ref{sec:overview-alg-regret} we introduce a meta framework covering a class of commonly used RL algorithms in the literature, for the learning task using target data $\mathcal{D}$ and its associated regret analysis.

A significant challenge with applying the existing RL algorithm to our problem is that $\mathcal{D}$ contains no data or very few data points at the beginning stage of learning -- the small-sample issue as discussed in the introduction. To recap, in the online learning setting, to account for the parameter uncertainty, exploration over possibly non-optimal policies during the learning process is required to avoid getting stuck in a sub-optimal policy (a main differentiator from ETO). The small sample size, however, leads to large variance and less ``confidence'' in the estimated parameters and, thus, requires more exploration of non-optimal policies and increases the regret. We formalize this intuition via the regret analysis in Section~\ref{sec:overview-alg-regret} and numerically demonstrate in Section \ref{sec:numerical} the high regret when there are just dozens of target patients at the beginning.

\noindent\textbf{Improving Performance with Historical Data. }
To tackle the small-sample issue and improve the performance of existing RL algorithms, we consider developing a new class of algorithms with offline historical data from other users/systems that are similar to $\mathcal{M}$. Reducing regret over the existing algorithms is not only beneficial from the technical perspective, but also has important practical implications. In the preventative intervention setting, targeting the expensive intervention resources to patients who need them most is critical to generating system-wide improvements. Moreover, even though the typical preventative interventions are non-medically based, the additional exploration efforts associated with using only target data may still cause undesirable outcomes for patients (such as sending too many follow-up text messages) and lead to the waste of the intervention resources. Thus, the performance improvement is essential for healthcare management. 

We denote the MDP for the historical patient populations as $\mathcal{M}_0=(\mathcal{H},\mathcal{S},\mathcal{A}, R_0,P_0)$. The historical patients share the same state and action spaces with the target patients, as well as the same horizon length $H$. However, they differ in the reward and transition probabilities ($R_0,P_0$), which are also not known. We do not interact with the historical patients during the learning but have access to data generated from $\mathcal{M}_0$ offline, denoted as $\mathcal{D}_0$ -- namely, the \emph{historical data}. Next, we formalize the ``closeness'' between the target and historical MDPs in Assumption~\ref{assmpt: difference bound}, followed by two remarks.
	
\begin{assumption}\label{assmpt: difference bound}
		There exists a known constant $\Delta>0$, such that $\forall (s,a)\in\mathcal{S}\times\mathcal{A}$, $1\leq h\leq H$,
		\begin{align*}
		&|r(h,s,a)-r_0(h,s,a)|\leq \Delta, \quad \|P(\cdot; h,s,a) -P_0(\cdot;h, s,a)\|_1\leq \Delta.
		\end{align*}
\end{assumption} 
It is worth emphasizing that we do not need to make any parametric assumptions on how the target patients and historical patients are connected in terms of the model parameters. All we need is this non-parametric Assumption~\ref{assmpt: difference bound} to measure their closeness. This is a significant difference from regression-based methods such as contextual learning or clustering: their key assumption is that target and historical patients who share similar features have similar outcomes (reward) and transition probabilities, so a parametric regression model is often in place to capture this similarity. In other words, our learning framework is ``model-free'' and is more general than the existing regression-based methods. Importantly, regression-based methods are prone to model misspecification while ours is not; we show this analytically in Section~\ref{sec:model-misspec}. Our learning algorithm developed under this model-free framework is specified in Section~\ref{sec:data-pool-for-value}. We will further discuss how to deal with $\Delta$ in implementation after specifying the algorithm there.


\vspace{-0.05in}
\begin{remark}[Aggregate Statistics]
\label{rmk:mix-MDP}
Another primary benefit of our model-free formulation is that only aggregate statistics in terms of $(R_0,P_0)$ are required from the historical data $\mathcal{D}_0$, instead of individual-level data. Our algorithm developed under this framework naturally is model-free, a very appealing feature for settings in which data privacy is a main concern. Because it allows the use of public datasets or published studies (which usually just provide only aggregate statistics) to facilitate the learning of individuals; see further discussions in Section~\ref{subsec:robust-perform-extension}. This representation via $(R_0,P_0)$ also covers cases in which the historical data observations are from a mixture of patients from different classes/populations; that is, $\mathcal{D}_0$ is generated by a mixture of multiple (unknown) MDPs~\citep{iclr2020}. In these cases, $(R_0,P_0)$ corresponds to the mixture of different MDPs (according to their sample proportions) and can be estimated via the empirical averages. 
\end{remark}

\vspace{-0.08in}
\begin{remark}[Model Extensions]
\label{rmk:model-ext}
First, in the baseline setting, the historical data $\mathcal{D}_0$ are treated as though they are from a single group. When more information is available, such as which subsets of observations in $\mathcal{D}_0$ belongs to the same subgroup (e.g., from the same hospital or region), we can further refine our analysis and algorithm with this information; see Appendix \ref{subsec: extension-to-multi-group}.
Second, we treat the learning task for each target patient class independently, assuming the same access to historical data during the learning. We can incorporate the interactions among target classes in a batched fashion: we learn and collect target data separately in each batch and add them into the historical data before starting the next batch. Lastly, the baseline model focuses on the setting without resource constraints. To deal with resource constraints, one can adapt the well-known Whittle index and incorporate it into the learning, e.g., see~\cite{grand2020robust}. 
\end{remark}

\vspace{-0.15in}		
\subsection{Structural Analysis: Post-discharge Intervention Management}
\label{sec:motivate-example}


In this section, we provide a detailed structural analysis of one specific preventative intervention example: the post-discharge intervention planning to reduce hospital readmissions. This is also the main application used in our case study in Section~\ref{sec:numerical}. The analysis in this example serves three purposes: (i) it provides an elaborated operational context for the general MDP and RL framework (Section~\ref{sec:ops-context-readm}); (ii) the structural properties for this example problem show the criticality of tailored intervention and the need to accurately estimate the MDP parameters for each target class (Section~\ref{subs:MDP}); and (iii) showcasing the insufficiency of existing learning methods (Section~\ref{sec:model-misspec}). 

\vspace{-0.1in}
\subsubsection{Operational Context}
\label{sec:ops-context-readm}

Patients who fail to comply with the medical advice given upon discharge (e.g., they forget to take their medications as prescribed) are not only contributing to millions of dollars in avoidable healthcare costs and increased hospital congestion, but are also putting themselves at higher risks for adverse events and relapses that land them back in the inpatient ward~\citep{bresnick_2016}. Follow-ups via text messages, calls or home visits to increase patients' adherence to medical advice have proven  effective in reducing hospital readmissions~\citep{yiadom2020impact,takchi2020extending}. Specifically, after a patient is discharged, the hospital makes a plan for whether to follow up with the patient at each ``check point'', typically each week~\citep{liu2018missed}. This plan is carried out by a case manager, who may dynamically adjust the plan given the state of the patient (e.g., healthy or at-risk) each week. Despite the benefits, due to the additional staffing requirements for the followups, such interventions could result in an overall net loss if not properly designed~\citep{chan2022dynamic}.

This post-discharge planning problem can be modeled via the MDP introduced in Section~\ref{subsec:prob-setting}. Each checkpoint is a decision epoch $h$, and without loss of generality, each week is an epoch. We consider a 30-day or 90-day readmission window, hence 4 or 12 decision epochs in total. For each decision epoch, the home visit is the strongest level of intervention, followed by phone calls and text messages. As mentioned, we focus on the Markovian intervention policy, but it is easy to incorporate history-dependent policies by enlarging the state space. In the simplest case with  $|\mathcal{S}|=2$, the state just corresponds to healthy (0) and readmitted (1), and the transition probability from $s=0$ to $s'=1$ corresponds to the readmission risk. The cost (negative reward) $-r(h,s,a) = c(h, s, a) = c_a\cdot 1(a\neq 0) + c_R\cdot P(s'=1;h,s,a)$ for $s\neq 1$, where $c_a$ is the intervention cost, and $c_R$ is the penalty cost for readmission. See more illustration in Appendix~\ref{app:post-charge}. 


\vspace{-0.1in}
\subsubsection{Structural Properties and Criticality of Risk Trajectory Estimation}
\label{subs:MDP}

We derive structural properties of the optimal post-discharge intervention policy, focusing on the simplest setting: $\mathcal{S}=\{0,1\}$ and $\mathcal{A}=\{0,1\}$ with $a=1$ being the intervention taken. All MDP parameters are known (no learning needed).  
We consider a given target class and use $p_{h,a}$ to denote its readmission probability in epoch $h$ under action $a$. We use $\beta \equiv c_a/c_R$ to denote the treatment cost ratio, and $\Delta_h\equiv p_{h,0}-p_{h,1}$ the intervention effect in epoch $h$. Proposition \ref{prop: temporal pattern} shows the temporal pattern of the optimal decision, i.e., when to start and stop intervention. We consider the typical risk evolution pattern in the post-discharge phase: the risk increases first and then decreases~\citep{shi2021timing}. This pattern also covers the scenario in which the risk only increases or decreases. 
\begin{proposition}[Temporal Pattern]
\label{prop: temporal pattern}
	Assume that the following two conditions hold:
		\begin{enumerate}
			\item[(a)] For fixed $a\in\{0,1\}$, the readmission probability $p_{h,a}$ is monotone non-decreasing in $h$ for $1\leq h\leq  h_c$, and $p_{h,a}$ is monotone non-increasing in $h$ for $h_c \le h \le H$.
			\item[(b)] The change in treatment effect $\Delta_h - \Delta_{h+1}$ satisfies that
			$$ \frac{\Delta_h}{p_{h,0}} - \frac{\Delta_{h+1}}{p_{h+1,0}}\begin{cases}
				< \frac{\beta}{\prod_{i=h+1}^{H}\left(1-p_{i, 1}\right)},& \text{ for }  1 \leq h \leq h_c-1,\\
				> \frac{\beta }{\prod_{i=h+1}^{H}\left(1-p_{i, 0}\right)}, &\text{ for } h_c \le h \leq H - 1.\end{cases}.$$
		\end{enumerate}
	Then, the optimal intervention policy takes the form of
		$$a^*_0=...=a^*_{h_0}=0,\quad a^*_{h_0+1}=...=a^*_{h_1}=1,\quad a^*_{h_1+1}=...=a^*_{H}=0,$$
		for some $1\leq h_0\leq h_c\leq h_1\leq H$.
\end{proposition}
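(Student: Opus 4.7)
The plan is to reduce the intervention decision at each epoch to a scalar threshold comparison and then use the phase-wise monotonicity of condition~(a) together with condition~(b) to show that the optimal intervention set is a contiguous interval straddling $h_c$. Let $V^*_h$ denote the optimal expected cost-to-go from the healthy state at epoch $h$ and set $u^*_h := 1 - V^*_h/c_R$. A direct manipulation of the Bellman equation shows that intervention is (weakly) optimal at epoch $h$ iff
\[
\Delta_h\, u^*_{h+1} \;\ge\; \beta, \qquad u^*_{H+1}=1.
\]
A uniform envelope on $u^*$ is the second ingredient: comparing against the never-intervene policy (lower) and lower-bounding the total cost by $c_R$ times the smallest achievable readmission probability (upper),
\[
\prod_{i=h+1}^{H}(1-p_{i,0}) \;\le\; u^*_{h+1} \;\le\; \prod_{i=h+1}^{H}(1-p_{i,1}).
\]

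The core of the proof is two one-step monotonicity lemmas, each proved by contradiction. \emph{Lemma~A (phase~1):} for $h \le h_c-1$, $a^*_h=1 \Rightarrow a^*_{h+1}=1$. Suppose instead $a^*_h=1$ and $a^*_{h+1}=0$. The latter gives $u^*_{h+1}=(1-p_{h+1,0})u^*_{h+2}$; substituting into $\Delta_h u^*_{h+1}\ge\beta>\Delta_{h+1}u^*_{h+2}$ and cancelling $u^*_{h+2}>0$ yields $(\Delta_h-\Delta_{h+1})/p_{h+1,0} > \Delta_h$. Condition~(a) gives $p_{h,0}\le p_{h+1,0}$ in phase~1, hence $\Delta_h/p_{h,0}\ge \Delta_h/p_{h+1,0}$; chaining this with condition~(b) line~1 produces $(\Delta_h-\Delta_{h+1})/p_{h+1,0} < \beta/\prod_{i=h+1}^{H}(1-p_{i,1})$, so $\Delta_h < \beta/\prod_{i=h+1}^{H}(1-p_{i,1})$. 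But $a^*_h=1$ combined with the upper envelope on $u^*_{h+1}$ forces $\Delta_h \ge \beta/u^*_{h+1} \ge \beta/\prod_{i=h+1}^{H}(1-p_{i,1})$, a contradiction.

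\emph{Lemma~B (phase~2):} for $h\ge h_c$, $a^*_h=0 \Rightarrow a^*_{h+1}=0$. Symmetrically, assume $a^*_{h+1}=1$ and $a^*_h=0$. Since $u^*_{h+1}$ is at least its non-intervene continuation $(1-p_{h+1,0})u^*_{h+2}$, the assumptions $\Delta_{h+1}u^*_{h+2}\ge\beta > \Delta_h u^*_{h+1}$ together force $\Delta_h(1-p_{h+1,0}) < \Delta_{h+1}$, i.e.\ $(\Delta_h-\Delta_{h+1})/p_{h+1,0}<\Delta_h$. Combined with $p_{h,0}\ge p_{h+1,0}$ in phase~2 and condition~(b) line~2 this yields $\Delta_h > \beta/\prod_{i=h+1}^{H}(1-p_{i,0})$, while $a^*_h=0$ together with the lower envelope gives $\Delta_h < \beta/u^*_{h+1} \le \beta/\prod_{i=h+1}^{H}(1-p_{i,0})$, another contradiction. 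Lemmas~A and~B together imply that once intervention starts in phase~1 it must continue through $h_c$, and once it stops in phase~2 it never resumes, so $\{h:a^*_h=1\}$ is a (possibly empty) contiguous interval $\{h_0+1,\dots,h_1\}$ with $h_0 \le h_c \le h_1$, as claimed.

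The hard part, and the reason condition~(b) has its particular form, is the algebraic bridge between the one-step Bellman threshold $\Delta_h u^*_{h+1}\ge\beta$ and the per-epoch ratios $\Delta_h/p_{h,0}$ appearing in condition~(b): the denominators $\prod(1-p_{i,1})$ and $\prod(1-p_{i,0})$ on the right-hand sides of condition~(b) match exactly the two phase-specific envelopes on $u^*$, so that the envelope and condition~(b) collide to squeeze $\Delta_h$ into an infeasible range. Once this alignment is spotted, each lemma reduces to a short inequality chase; without it, the ratios $\Delta_h/p_{h,0}$ do not obviously interact with the one-step recursion on $u^*$.
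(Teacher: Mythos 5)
Your proof is correct and follows essentially the same route as the paper's: the same two one-step contradiction lemmas (no $1\to 0$ switch before $h_c$, no $0\to 1$ switch after $h_c$), the same Bellman threshold characterization of when intervention is optimal, and the same envelope bounds on the value function (the paper's auxiliary lemma), combined with condition (b) and the phase-wise monotonicity from condition (a). The only difference is cosmetic: you squeeze $\Delta_h$ between two incompatible bounds, while the paper subtracts the two threshold inequalities and contradicts condition (b) directly.
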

This proposition says that the optimal policy is to initiate and then stop intervention at some proper epochs. Intuitively, the two conditions jointly imply that treatments in the middle are more effective, so it is optimal to allocate interventions in the middle. 
Note that Condition (b) involves the relative treatment effect $\Delta_h/p_{h,0}$ and the entire future risk trajectory consisting of all $p_{i,0}$'s and $p_{i,1}$'s after the current epoch $h$. Though the structure of the temporal pattern is intuitive, the detailed times of the start epoch $h_0$ and the ending epoch $h_1$ depend on the actual values of these parameters. Thus, when managing a panel of heterogeneous patients with potentially very different risk trajectories, tailored intervention planning is necessary. To achieve this, decision makers need to track the risk trajectory for each target patient class, a task that soon becomes unmanageable for clinicians if purely relying on human memory, even in this simplest setting. Furthermore, we show in Appendix~\ref{app:pat-pri} that there is no simple rule-of-thumb to bypass the need to estimate each individual class's risk trajectory, e.g., neither prioritizing with respect to the absolute risk level nor prioritizing to the treatment effect is optimal.

The complex dependence of the optimal policy structure on the entire risk trajectory, demonstrated via this simplest MDP setting, is due to the dynamic nature of our problem. This is in sharp contrast to static or one-stage intervention~\citep{tomkins2021intelligentpooling, gupta2020maximizing}. One cannot rely solely on human intuitions or ad-hoc decision criteria.  
Moreover, the optimal policies critically depend on the estimation of the parameters (the readmission probabilities). If the estimations are inaccurate, the resulting policy could be far from  optimal. Algorithmic decision-support based on effective learning of the MDP parameters in a data-driven fashion is critical. 

\subsubsection{Insufficiency of Regression-Based Methods}
\label{sec:model-misspec}

Regression-based methods such as contextual learning and clustering are commonly used tools for online learning, as we reviewed in Section~\ref{sec:lit}. They typically assume that the dependence of the reward or transition on the features is similar among patients. 
In this section, we construct an example in the readmission context with just one patient feature $x$. We show that the common wisdom of grouping patients by similar features may never recover the true model due to misspecification.  

We consider two classes of patients: target (class 1) and historical (class 2), and 30-day readmission window (so $H=4$); other settings remain the same as in the simplest MDP setting. We assume that the 30-day readmission risk follows a common linear model for class $i$: 
\begin{align} \label{eqn: synthetic linear model-main}
p_i(a,x) = c_{1} a + c_{2} x + c_{3}, \quad i=1,2,  
\end{align}
where the readmission risk depends on the action $a=\{0,1\}$ (with treatment effect $c_{1} < 0$), a continuous feature drawn from a Gaussian distribution $x\sim \mathcal{N}(\mu_i, \sigma_i^2)$, and the intercept $c_3$. We omit the state variable $s\in\{0,1\}$ (readmitted or not) for simplicity. The temporal pattern, conditional on being readmitted, is given by the proportions $\{\alpha_i^h, h=1,\dots,4\}$ for class $i=1,2$. 
A typical contextual-based method tries to estimate time-dependent readmission probabilities via a linear regression in the form of
$p^{h}(a,x) = \theta_1^h a + \theta_2^h x + \theta_3^h$ for each $h$ with $\theta$'s being the parameters to estimate. The following lemma shows that when the dependence on $x$ is different between the two classes of patients, an imposed parametric model, even if it is the correct form for each class separately, differs from the ground truth for the mixture of the two classes.

\begin{proposition}
\label{lemma: linear model for period} 
\begin{enumerate}[label=(\roman*)]
\item For each class $i=1,2$, the readmission risk in week $h$ follows the linear model
$p_i^h(a,x) =  c_{1,i}^h a +  c_{2,i}^h x +  c_{3,i}^h$, 
with $c^h_{j,i}=\alpha_i^h c_{j}$ where $c_{j}$'s are the linear coefficients in~\eqref{eqn: synthetic linear model-main}.
\item For a mixture of the two classes of patients with mixture probabilities $q_i$'s, if $\alpha_1^h\neq \alpha_2^h$ and the feature distributions are different, the readmission risk $p^h(a,x)$ over the mixture of patients is \emph{not} a linear function of $(a,x)$.
\end{enumerate}
\end{proposition}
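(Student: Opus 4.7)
The plan is to prove the two parts separately. Part (i) will follow from a direct decomposition of the 30-day readmission probability into weekly components via the temporal pattern, while part (ii) will require an explicit Bayes'-rule derivation for the mixture's conditional readmission probability followed by a non-linearity argument using mixed partial derivatives.

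For part (i), I would start from the model assumption that the 30-day readmission probability for class $i$ satisfies $p_i(a,x) = c_1 a + c_2 x + c_3$, and that the temporal pattern $\{\alpha_i^h\}_{h=1}^{4}$ specifies the conditional probability that a readmission (when it occurs) falls in week $h$. Because this temporal pattern depends only on the class and not on the features $(a,x)$, the week-$h$ readmission probability factorizes as $p_i^h(a,x) = \alpha_i^h \cdot p_i(a,x)$, which is linear in $(a,x)$ with coefficients $c^h_{j,i} = \alpha_i^h c_j$ for $j=1,2,3$, as claimed.

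For part (ii), the key step is to derive an explicit expression for the mixture's week-$h$ readmission probability conditional on $(a,x)$. Letting $Z \in \{1,2\}$ denote the unobserved class indicator with $P(Z=i) = q_i$, and letting $f_i$ be the Gaussian density of $X$ under class $i$, Bayes' rule gives the posterior $P(Z=i \mid X=x) = q_i f_i(x)/(q_1 f_1(x) + q_2 f_2(x))$. Combining with part (i) via the tower property yields
\[ p^h(a,x) \;=\; \gamma^h(x)\cdot(c_1 a + c_2 x + c_3), \qquad \gamma^h(x) \;:=\; \frac{q_1 \alpha_1^h f_1(x) + q_2 \alpha_2^h f_2(x)}{q_1 f_1(x) + q_2 f_2(x)}. \]
Since any linear function of $(a,x)$ has vanishing mixed partial $\partial^2/\partial a\,\partial x$, it suffices (in the generic case $c_1\neq 0$) to show $(\gamma^h)'(x) \not\equiv 0$. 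Rewriting $\gamma^h(x) = \alpha_2^h + (\alpha_1^h - \alpha_2^h)\cdot P(Z=1\mid X=x)$ reduces non-constancy of $\gamma^h$ to non-constancy of the posterior $P(Z=1\mid X=x)$, which follows once $f_1 \not\equiv f_2$.

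The main obstacle I expect is handling the edge case $c_1 = 0$, where the mixed-partial argument collapses; there I would instead examine $\partial^2 p^h/\partial x^2$ and argue that a non-constant ratio of two distinct Gaussian densities, times the non-trivial affine factor $c_2 x + c_3$, cannot itself be affine. A bit of routine bookkeeping is also needed to verify that ``feature distributions are different'' (in either mean or variance) implies $f_1 \not\equiv f_2$ and hence that $P(Z=1\mid X=x)$ is genuinely non-constant in $x$; this is a standard property of two distinct Gaussian densities, since their log-ratio is a non-constant polynomial in $x$.
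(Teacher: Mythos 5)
Your proposal is correct and takes essentially the same route as the paper: part (i) via the identical factorization $p_i^h(a,x)=\alpha_i^h\,p_i(a,x)$, and part (ii) via the same Bayes-posterior decomposition $p^h(a,x)=\bigl(\alpha_2^h+(\alpha_1^h-\alpha_2^h)\,\PP(Z=1\mid X=x)\bigr)(c_1a+c_2x+c_3)$ together with non-constancy of the posterior for distinct Gaussian feature densities. Your mixed-partial check and the $c_1=0$ edge case are merely a slightly more explicit finish (and the latter is moot, since the model assumes a nonzero treatment effect $c_1<0$), so no substantive difference from the paper's argument.
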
 

The proof of this proposition is detailed in Appendix~\ref{app:proof-structural}. Part (i) says that for each class of patients, the linear format is indeed the correct form, and a linear regression can recover the true model. However, part (ii) implies that when we mix these patients, the weekly readmission risk cannot be represented by a linear model. In other words, when we mix data from historical and target patients and try to fit a linear model, the fitted values will deviate from the ground truth. This is exactly the issue caused by \emph{model misspecificiation}: even though past experience tells us that a linear model is sufficient for historical patients, the linear model is no longer sufficient when we impose it on the new patients while mixing data from the historical patients. We will demonstrate through numerical experiments in Section~\ref{subsec:numerical on bandit} that such model misspecification can cause the performance of contextual-learning methods to deteriorate significantly. 

\vspace{-0.02in}
\section{Background: Perturbed-LSVI Reinforcement Learning and Regret Analysis}
\label{sec:overview-alg-regret}

In this section, we lay the foundation for our RL algorithm. 
In Section \ref{subsec: PLSVI}, we introduce a meta-algorithm framework, the \textit{perturbed least-squares value iteration} (perturbed LSVI), and highlight two of its key components. In Section \ref{subsec:regret-framework}, we provide an overview of the performance analysis of regret under this meta framework, with an emphasis on the connection between the regret bound and the two key components. 
The primary benefit of using this meta framework is that it unifies several commonly used algorithms in the RL literature. This provides wide applicability for our data-pooling algorithm developed based on this meta framework. That is, for these commonly used algorithms, we have a unified approach to deriving their data-pooling version and improving the regret performance (to be detailed in Section~\ref{sec:data-pool-for-value}). Note that, in this section, we use only the target data $\mathcal{D}$ when describing the meta framework, as that is the status quo in the literature.

\vspace{-0.1in}
\subsection{Perturbed LSVI Framework for RL algorithm}\label{subsec: PLSVI}

The perturbed LSVI framework is based on the fundamental value iteration method for solving MDP problems. Recall that value iteration first solves the state-action value function -- namely, the \emph{Q-function} -- from the Bellman optimality equation
$$Q^*(h,s,a):=r(h,s,a)+\sum_{s'}P(s';h,s,a)\max_{y\in\mathcal{A}}Q^*(h+1,s',y), \quad\text{with }Q^*(H+1,\cdot,\cdot)\equiv 0,$$ and then obtain the optimal policy $\pi^*(s,h)=\argmax_{a\in\mathcal{A}}Q^*(h,s,a)$. As the term perturbed LSVI suggests, it adapts the value iteration method for the online learning setting with two key components: (i) value function estimation via least squares fitting, and (ii) adding perturbation terms. We elaborate on these two components in sequence below. 

The first key component is the estimation of the Q-functions by approximating it with a linear combination of basis functions and solving the coefficients for each basis via the least squares method~\citep{bradtke1996linear}; see a brief review of the Q-function approximation in Appendix~\ref{section:LSVI}. In the case of tabular representation, the basis functions are chosen as the indicator functions for each $(s,a)$ pair. Correspondingly, the estimated Q-function can be solved with an explicit expression 
\begin{align}
 \hat{Q}(h,s,a)&
 =\bar{r}(h,s,a)+\sum_{s'\in\mathcal{S}}\bar{P}(s';h,s,a)\max_{y\in\mathcal{A}}\hat{Q}(h+1, s',y) \notag\\
 &\triangleq\bar{r}(h,s,a)+\bar{P}(h,s,a)\cdot\max_y\hat{Q}(h+1,\cdot,y)  
\label{eq:convent-LSVI-tabular}
\end{align}
for $h=H, H-1, \dots, 1$, with $\hat{Q}(H+1,s,a)=0$. Here, $\cdot$ represents the inner product of two vectors, $\bar{r}(h,s,a)\in\mathbb{R}$ and $\bar{P}(h,s,a)\in\mathbb{R}^S$, which are the maximum likelihood estimators (MLE) calculated from the target data $\mathcal{D}$. We focus on this tabular representation in the rest of this paper.

The second key component is the addition of a perturbation term to address the exploration issue in the online learning setting. It is known that value iteration based solely on the estimated value function could get stuck in sub-optimal solutions without any exploration~\citep{osband2016generalization}. The perturbed LSVI framework introduces a set of perturbation terms $\{w_t(h,s,a)\}$ and injects them into the estimated value functions, i.e.,  
\begin{align}
	\tilde{Q}(h,s,a)=\bar{r}(h,s,a)+w_t(h,s,a)+\bar{P}(h,s,a)\cdot\max_y\tilde{Q}(h+1,\cdot,y) . 
	\label{eq:perturbed-LSVI-tabular}
\end{align} 
Then, the value iteration is applied using the perturbed version $\tilde{Q}(h,s,a)$ instead of  $\hat{Q}(h,s,a)$.  

\begin{algorithm}[ht]
	\caption{Perturbed LSVI}
	\begin{algorithmic}
	\small{
		\STATE Initialize $\tilde{r}_{1}=0$ and $\bar{P}_1=0$, data set $\mathcal{D}=\{\}$,  $\tilde{Q}_t(H+1,s,a)=0,~\forall t, s,a$. 
		\FOR{$t=1$ {\bfseries to} $T$}
		\STATE Compute $\bar{r}_{t},\bar{P}_{t}$ from $\mathcal{D}$ 
		\FOR{$h=H,H-1,\ldots,1$}
		\FORALL{$(s,a)$ pairs}
		\STATE Generate $\xi_t(h,s,a)$ and compute $w_t(h,s,a) =  \varepsilon_{V}(n_t(h,s,a))\cdot \xi_t(h,s,a)$
		\STATE Compute $\tilde{Q}_t(h,s,a)=\bar{r}_t(h,s,a)+w_t(h,s,a)+\bar{P}_t(h,s,a)
		    \cdot\max_y\tilde{Q}_{t}(h+1,\cdot,y)$ 
		\ENDFOR
		\ENDFOR
		\FOR{$h=1,\ldots,H$}
		\STATE Collect state $S^t_h$
		\STATE Select action $A^t_h=\arg\max_a\tilde{Q}_t(h,S^t_h,a)$
		\STATE Collect reward $R^t_h$
		\ENDFOR
		\STATE Update  $\mathcal{D}=\mathcal{D}\cup\{S^t_1,A^t_1,R^t_1,\ldots,S^t_H,A^t_H,R^t_H\}$
		\ENDFOR	
	}
	\end{algorithmic}
	\label{Alg:P-LSVI}
\end{algorithm}

The perturbed LSVI covers two popular classes of RL methods depending on the design of the perturbation terms $w_t(h,s,a)$: \texttt{UCB-based} RL algorithm and \texttt{TS-based} RL algorithm, where UCB stands for Upper-Confidence-Bound, and TS stands for Thompson-Sampling. We elaborate $w_t(h,s,a)$ in Section~\ref{subsec:regret-framework}. The TS-based version is also known as the Randomized Least-Squares Value Iteration (RLSVI), initially developed in \cite{wen2014efficient} and extended in \cite{osband2016generalization,russo2019worst}. A complete description of the perturbed LSVI algorithm is given in Algorithm \ref{Alg:P-LSVI}.

\vspace{-0.1in}
\subsection{Regret Analysis Framework}
\label{subsec:regret-framework}

Recall the definition of regret given in~\eqref{eq:regret-def}. 
For the perturbed LSVI algorithm, the regret can be separated into two parts: estimation errors and exploration efforts, which are contributed by the two key components, respectively. 
Below, we provide a high-level idea on how the regret connects to each of the two key components. Streamlining these connections provides the important basis to derive our data-pooling algorithm in Section~\ref{sec:data-pool-for-value}, which directly connects its design with the goal of improving the regret performance.   

The first part of the regret relates to the accuracy of the value function estimation, which further depends on the estimates of the MDP parameters: reward and transition probabilities $(R,P)$. 
This connection is intuitive since the perturbed LSVI algorithm belongs to the class of value-iteration algorithms; it can prescribe better decisions if the value function and MDP parameters are estimated more accurately. Through standard regret analysis, one can show that the accuracy of these estimates can be measured by their corresponding \emph{confidence radius}, which is derived from Hoeffding's inequality and probability models for tabular RL. Formally, we use $\varepsilon^H_R(n)$ and $\varepsilon^H_P(n)$ to denote the (Hoeffding) confidence radius in estimating the mean reward and transition probabilities, respectively, from $n$ samples in the target data $\mathcal{D}$. We can then bound the part of the regret caused by estimation error by some increasing function in terms of these confidence radii, $f_1(\varepsilon^H_R, \varepsilon^H_P)$.

The second part of the regret comes from the exploration cost, contributed by the perturbation terms $\{w_t(h,s,a)\}$. Intuitively, $w_t(h,s,a)$ must compensate the estimation error in the value function in order to ensure sufficient exploration. A common practice in the design of the perturbed LSVI algorithm is to set $w_t(h,s,a)$ ``in proportion" to the confidence radii of the estimated value functions, denoted as $\varepsilon^H_{V}(\cdot)$, with
\begin{equation}\label{eq: perturbation terms}
	w_t(h,s,a) = \varepsilon^H_{V}(n_t(h,s,a))\cdot \xi_t(h,s,a),
\end{equation}
where $n_t(h,s,a)$ is the number of target samples right before iteration $t$ at the $(h,s,a)$ tuple and $\xi_t(h,s,a)$ are i.i.d. copies of some random variable $\xi$. For example, in \texttt{UCB-based} RL, $\xi$ is set to be a fixed constant, while in \texttt{TS-based} RL, $\xi$ is a normal random variable.
; see more details in Appendix~\ref{app:additional-PLSVI}
Under~\eqref{eq: perturbation terms}, the corresponding exploration cost can be bounded by another function $f_2(\varepsilon^H_V, \varepsilon^H_P,\bar{W})$, increasing in each term, where $\bar{W}>0$ depends on the distribution of $\xi$. The appearance of $\varepsilon^H_P$ here is due to the multi-stage nature of the RL setting, i.e., the exploration cost across different stage $h$ accumulates through state transitions. The exploration cost connects to the estimation accuracy of the value function via $\varepsilon^H_V$.

Based on the two parts of the regret, one can derive an upper bound for the total regret of the perturbed LSVI algorithm in the following form
\begin{equation}\label{eq: regert informal}
Regret(T) \leq \sum_{n=1}^{\lceil T/SA\rceil} f_1\big( \varepsilon_R^H(n), \varepsilon_P^H(n)\big)
+ f_2\big( \varepsilon^H_V(n), \varepsilon^H_P(n),\bar{W} \big).
\end{equation}
The rigorous definition of these confidence radii, the detailed expression of $f_1(\cdot)$ and $f_2(\cdot)$, and the regret bound and its proof are given in Appendix \ref{subsec: connectiong regret to CI}. The main idea of the proof is based largely on~\cite{russo2019worst}, 
and our primary adaptation is to extend the proof for the unified meta-framework (the perturbed LSVI) instead of the focal case (the \texttt{TS-based} RL, RLSVI) in~\cite{russo2019worst}. The bound in \eqref{eq: regert informal} indicates that the key to designing an improved version of a perturbed LSVI algorithm is to reduce these confidence radii to achieve a better regret bound. 


\section{Data-pooling Algorithm}
\label{sec:data-pool-for-value}
In this section, we develop a novel data-pooling RL algorithm framework. This framework provides a unified approach to convert any perturbed LSVI algorithm given in Section \ref{sec:overview-alg-regret} into its data-pooling version, which effectively leverages the historical data to address the small-sample issue and improves the performance over the original no-pooling version. We first provide the overall design ideas, and then formalize the details in Sections \ref{subsec:data-pool-estimator} and \ref{subsec:main-result}. 

{\textbf{Desgin Ideas.}}
To leverage both the target and historical data, a natural idea is to estimate the model parameters, i.e., mean rewards and transition probabilities, in the form of a weighted average of the MLE obtained from the two datasets (given by \eqref{eq: data-pooling estimates fixed n} below). This leads to a data-pooling estimator for the value function that potentially is more accurate and would improve both parts of the regret bound in~\eqref{eq: regert informal}: the first part on estimation errors and the second part on the exploration efforts. The intuition is that more accurate estimates lead to smaller estimation errors, more confidence in the estimates and less exploration efforts on suboptimal policies, which then reduce regret. Once the new estimators are designed, we also need to implement them within the perturbed LSVI framework with proper modifications to the two key components: (i) the value function estimation and (ii) the perturbation design, through which we get the final product: the data-pooling RL algorithm.

The rest of this section works on the tasks in getting the final algorithm, 
and we provide a roadmap here. First, consider a fixed $(h,s,a)$ tuple; let $\bar{r}_n$ and $\bar{P}_n$ be the MLE of the mean reward and vector of transition probabilities obtained from $n$ samples from the target data, and let $\bar{r}_0$ and $\bar{P}_0$ be the MLE obtained from $N$ samples from the historical data (satisfying the model-free RL framework in Section~\ref{sec:RL-learning-framework}).
The data-pooling estimators for the mean reward and the vector of transition probabilities follow a weighted average form as  
\begin{equation}
\label{eq: data-pooling estimates fixed n}
	\hat{r}^{DP}_n=\lambda \bar{r}_n+(1-\lambda)\bar{r}_0,
	\quad 
	\hat{P}^{DP}_n = \lambda \bar{P}_n+(1-\lambda)\bar{P}_0,
\end{equation}
where the weight $\lambda\in [0,1]$. The core for our data-pooling design is to choose a proper $\lambda$. Our major innovation is to choose this $\lambda$ based on the decision quality -- the regret performance. Recall that reducing the regret bound~\eqref{eq: regert informal} essentially depends on the reduction in the confidence radii (because both $f_1$ and $f_2$ in \eqref{eq: regert informal} are increasing functions in the radii). This leads us to choose $\lambda$ by minimizing the confidence radii, specified in Section \ref{subsec:data-pool-estimator}. Based on the chosen $\lambda$, we then incorporate the new estimators into the perturbed LSVI framework, specified in Section~\ref{subsec:main-result}.


\vspace{-0.12in}
\subsection{Data-pooling Estimators via Minimizing Confidence Radii}
\label{subsec:data-pool-estimator}


As highlighted in the design ideas, we choose the weight $\lambda$ to minimize the confidence radii to improve the decision quality.  The formal definition of the confidence radii, in the context of regret bound analysis, is as follows.  For a given confidence parameter $\delta$ and each $(h,s,a)$ tuple, we say that a sequence of functions  $\{\varepsilon_V(n),\varepsilon_P(n),\varepsilon_R(n): 0\leq n\leq T\}$ are the confidence radii for given estimators $\{\hat{r}_n, \hat{P}_n\}$ if  	
\begin{align}\label{eq: CI fixed n}
	&\PP\left(|\hat{r}_n-r(h,s,a)|>\varepsilon_R(n)\right)<\delta/HSAT,\notag\\
	&\PP\left(\|\hat{P}_n-P(h,s,a)\|_1>\varepsilon_P(n)\right)<\delta/HSAT,\\
	&\PP\left(\left\vert\hat{r}_n-r(h,s,a)+\langle\hat{P}_n-P(h,s,a), V^*_{h+1}\rangle\right\vert>\varepsilon_V(n)\right)<\delta/HSAT ,\notag
\end{align}
where $V^*_{h+1}(\cdot) = \max_{a\in\mathcal{A}} Q^*(h+1,\cdot,a)\in \mathbb{R}^{S}$ is the optimal value-to-go from horizon $h+1$. For example, the Hoeffding confidence radii $\{\varepsilon_{V}^H(n), \varepsilon_{P}^H(n),\varepsilon_{R}^H(n)\}$ mentioned in Section~\ref{subsec:regret-framework} satisfy the above inequalities for the MLE $\{\bar{r}_n,\bar{P}_n\}$ from the target data without pooling.

Next, for a given confidence parameter $\delta$ and the difference gap $\Delta$ as specified in Assumption \ref{assmpt: difference bound}, we derive an expression of the confidence radius for the data-pooling reward estimator $\hat{r}^{DP}$ (given in~\eqref{eq: data-pooling estimates fixed n}) as a function $\varepsilon(\lambda)$ of the weight $\lambda$. 
The function $\varepsilon(\lambda)$ is composed of two terms: one measures the deviation of $\hat{r}^{DP}_n$ from its mean, caused by its variability; and the other measures the bias of $\hat{r}^{DP}_n$ from $r(h,s,a)$, caused by the underlying difference between the target and historical patients. In other words, optimizing $\lambda$ is to strike a balance in the variance-bias trade-off. However, this variance-bias trade-off is different from the common ones in the statistics literature, in that the trade-off comes from the regret analysis via the confidence radii instead of prediction accuracy, which we will elaborate on later. Given $\varepsilon(\lambda)$, we then choose its minimizer $\argmin_\lambda \varepsilon(\lambda)$ as the weight for our data-pooling estimator. In Theorem \ref{thm: data-pooling estimator}, we specify the explicit form of the weight $\lambda^{DP}_{n,N}$, for $n$ target samples and $N$ historical samples, solved from the minimization problem. 
\begin{theorem}
	\label{thm: data-pooling estimator} 
	Let $\delta>0$ be a given confidence parameter. For any triple of $(h,s,a)$, let $\bar{r}_0$ and $\bar{P}_0$ be the MLE (empirical averages) for the reward and transition probabilities estimated from $N\geq 0$ i.i.d. observations in the historical data. Let $\bar{r}_n$ and $\bar{P}_n$ be the MLE from $n\geq 1$ i.i.d. observations in the target data. Under Assumption \ref{assmpt: difference bound} with $\Delta>0$, we define a data-pooling estimator as 
	
	$$\hat{r}^{DP}_n=\lambda_{n,N}^{DP} \bar{r}_n+(1-\lambda_{n,N}^{DP})\bar{r}_0,
	\quad 
	\hat{P}^{DP}_n = \lambda_{n,N}^{DP} \bar{P}_n+(1-\lambda_{n,N}^{DP})\bar{P}_0,$$
	with
	\begin{equation}\label{eq: lambda-RL}
		\lambda^{DP}_{n,N}=\lambda(n,\Delta, N,\delta)\triangleq\begin{cases}
			\frac{n+Nn\Delta/\sqrt{(N+n)\log(2HSAT/\delta)/2-\Delta^2Nn}}{N+n}& \text{if }n <\log(2HSAT/\delta)/2\Delta^2,\\
			~~~~~~~1& \text{otherwise.}
		\end{cases}
	\end{equation}
	Then, there exists a sequence of confidence radii  $\{\varepsilon^{DP}_R(n,N), \varepsilon^{DP}_P(n,N),\varepsilon^{DP}_V(n,N)\}$ for the data-pooling estimators $\{\hat{r}_n^{DP}, \hat{P}_n^{DP} \}$ such that $$\varepsilon_R^{DP}(n,N)\leq \varepsilon_R^H(n),\quad \varepsilon_P^{DP}(n,N)\leq \varepsilon_P^H(n), \quad \varepsilon_V^{DP}(n,N)\leq \varepsilon_V^H(n). 
	$$
	The inequalities are strict when $ n<\log(2HSAT/\delta)/2\Delta^2$ and $N\geq 1$. 
\end{theorem}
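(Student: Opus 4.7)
The plan is to treat the pooled estimator as a weighted average of independent bounded samples, derive a confidence radius $\varepsilon(\lambda)$ by combining a Hoeffding-type concentration bound with the closeness condition in Assumption~\ref{assmpt: difference bound}, and then pick $\lambda\in[0,1]$ by convex minimization. I sketch the argument for $\varepsilon_R^{DP}$ in detail and indicate how $\varepsilon_V^{DP}$ and $\varepsilon_P^{DP}$ are handled by the same template. Fix a tuple $(h,s,a)$ and set $K:=\log(2HSAT/\delta)/2$. Decompose
\[
\hat{r}^{DP}_n - r \;=\; \bigl[\hat{r}^{DP}_n - \mathbb{E}\hat{r}^{DP}_n\bigr] \;+\; (1-\lambda)(r_0 - r).
\]
The first bracket is the fluctuation of a weighted sum of $n+N$ independent $[0,1]$-valued samples with weights $\lambda/n$ and $(1-\lambda)/N$; Hoeffding's inequality controls it by $\sqrt{K(\lambda^2/n+(1-\lambda)^2/N)}$ at the required failure level $\delta/HSAT$, and Assumption~\ref{assmpt: difference bound} bounds the second term by $(1-\lambda)\Delta$. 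So
\[
\varepsilon_R(\lambda) \;:=\; \sqrt{K\bigl(\lambda^2/n+(1-\lambda)^2/N\bigr)} \;+\; (1-\lambda)\Delta
\]
is a valid confidence radius for every $\lambda\in[0,1]$, and $\varepsilon_R(1)=\sqrt{K/n}=\varepsilon_R^H(n)$. The function is convex (strictly convex square root plus linear), so I set its derivative to zero, square, and simplify; the resulting quadratic has discriminant that collapses neatly to $\Delta^2 n^2 N^2/A$ with $A:=K(N+n)-\Delta^2 nN$. The root in $(0,1]$ equals $(n+nN\Delta/\sqrt{A})/(N+n)$, matching~\eqref{eq: lambda-RL}. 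A short calculation shows this root is $\le 1$ iff $n\le K/\Delta^2=\log(2HSAT/\delta)/(2\Delta^2)$; for $n$ above this threshold the unconstrained root overshoots $1$, so the constrained minimum lies at $\lambda=1$, which is the second branch of~\eqref{eq: lambda-RL}.

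\textbf{Comparison with $\varepsilon^H$, and extension to $\varepsilon_V^{DP}, \varepsilon_P^{DP}$.} Since $\varepsilon_R(1)=\varepsilon_R^H(n)$, the inequality $\varepsilon_R^{DP}(n,N)\le\varepsilon_R^H(n)$ is immediate from optimality. For strictness I evaluate $\varepsilon_R'(1)=\sqrt{K/n}-\Delta$, which is strictly positive exactly when $n<K/\Delta^2$; convexity then forces $\varepsilon_R(\lambda^{DP}_{n,N})<\varepsilon_R(1)$ as soon as the interior minimizer lies in $(0,1)$ and historical samples are available ($N\ge 1$). For the value-function radius I would replay the setup on the scalar $Z_i:=R_i+V^*_{h+1}(S'_i)$, bounded in $[0,1+H]$; the weighted average of $Z$'s is exactly $\hat{r}^{DP}_n+\langle\hat{P}^{DP}_n,V^*_{h+1}\rangle$, and Hoeffding plus Assumption~\ref{assmpt: difference bound} together with $\|V^*_{h+1}\|_\infty\le H$ yield $\varepsilon_V(\lambda)=(1+H)\,\varepsilon_R(\lambda)$, so the same minimizer and the same endpoint comparison transfer verbatim. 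For $\varepsilon_P^{DP}$ in $\ell_1$, I would apply a multinomial $L_1$-concentration bound (Weissman's inequality, or a state-wise Hoeffding followed by $\ell_1$ aggregation) to $\hat{P}^{DP}_n$, which is a weighted sum of independent one-hot vectors, and combine with the $\|P-P_0\|_1\le\Delta$ bias from Assumption~\ref{assmpt: difference bound}; the resulting $\varepsilon_P(\lambda)$ has the same $\lambda^2/n+(1-\lambda)^2/N$ variance structure and the same linear-in-$(1-\lambda)$ bias, so the convex-minimization argument reproduces the domination.

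\textbf{Main obstacle.} The substantive work is the algebra in the reward step: verifying that the quadratic from the first-order condition yields exactly the closed form~\eqref{eq: lambda-RL} (picking the correct root and identifying the discriminant as $\Delta^2 n^2 N^2/A$), and that the interior-versus-boundary split occurs precisely at $n=K/\Delta^2$. The secondary difficulty is the transition-probability step, where an $L_1$ concentration bound for a weighted empirical distribution needs slightly more care than the scalar Hoeffding arguments that suffice for $\varepsilon_R$ and $\varepsilon_V$; if one is willing to absorb a small logarithmic factor, a coordinate-wise union bound across the $S$ components is the cleanest route, and the resulting confidence radius still has the convex structure needed for the same minimization argument to go through.
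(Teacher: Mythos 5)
Your treatment of the reward radius is essentially the paper's own argument: the same deviation-plus-bias decomposition, a bounded-difference/Hoeffding bound giving $\varepsilon_R(\lambda)=\sqrt{K(\lambda^2/n+(1-\lambda)^2/N)}+(1-\lambda)\Delta$, convex minimization recovering \eqref{eq: lambda-RL} with the threshold $n=K/\Delta^2$, and strictness from $\varepsilon_R'(1)=\sqrt{K/n}-\Delta>0$. That part is correct. However, the two extensions you treat as routine contain real gaps. First, for the value-function radius your choice $Z_i=R_i+V^*_{h+1}(S_i')\in[0,1+H]$ gives $\varepsilon_V(\lambda)=(1+H)\varepsilon_R(\lambda)$, whose value at $\lambda=1$ is $(1+H)\sqrt{K/n}$, which is strictly larger than $\varepsilon_V^H(n)=H\sqrt{K/n}$; so the ``endpoint comparison transfers verbatim'' claim fails and the required domination $\varepsilon_V^{DP}\le\varepsilon_V^H$ does not follow. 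The fix is the paper's tighter accounting: since rewards lie in $[0,1]$ and $\|V^*_{h+1}\|_\infty\le H-1$, the per-sample range is $H$ (or, via McDiarmid on the reward and next-state coordinates separately, $\sqrt{1+(H-1)^2}\le H$), yielding $\varepsilon_V(\lambda)=H\,\varepsilon_R(\lambda)$, which coincides with $\varepsilon_V^H(n)$ at $\lambda=1$ and makes the comparison immediate.

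Second, for the transition radius two issues remain. The estimator $\hat P^{DP}_n$ is defined with the \emph{same} weight $\lambda^{DP}_{n,N}$ that minimizes the reward radius, and this weight is \emph{not} the minimizer of $\varepsilon_P(\lambda)$ (the variance constant $2(S\log 2+\log(2HSAT/\delta))$ differs from $K$, so the minimizers differ); hence ``the convex-minimization argument reproduces the domination'' does not by itself give $\varepsilon_P(\lambda^{DP}_{n,N})\le\varepsilon_P(1)=\varepsilon_P^H(n)$. You need an extra step: the paper bounds $\varepsilon_P(\lambda)\le 2\sqrt{(S\log 2+L)/L}\;\varepsilon_R(\lambda)$ with $L=\log(2HSAT/\delta)$ and chains through $\varepsilon_R(\lambda^{DP})\le\varepsilon_R^H(n)$, recovering exactly $\varepsilon_P^H(n)$; alternatively one can argue that $\lambda^{DP}_{n,N}$ lies between the $\varepsilon_P$-minimizer and $1$ and invoke convexity. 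Also, your fallback of a coordinate-wise union bound is not merely ``a small logarithmic factor'' away: summing $S$ scalar Hoeffding bounds gives an $\ell_1$ radius of order $S/\sqrt{n}$ rather than $\sqrt{S/n}$, which is not dominated by $\varepsilon_P^H(n)$ as defined, so only the Weissman-style max-over-subsets argument (union bound over $2^S$ subsets combined with the bounded-difference inequality applied to the weighted two-sample empirical measure, as in the paper) delivers the stated inequality.
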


The proof for Theorem \ref{thm: data-pooling estimator}, including detailed expression of $\varepsilon(\lambda)$, is given in Appendix \ref{subsec: CI for data-pooling estimators}. 
Note that the explicit expression of the weight $\lambda^{DP}_{n,N}$ provides great benefit in the computational time, which is in contrast to other methods that are computationally burdensome when updating the estimates (e.g., contextual learning requires re-fitting the regression model). Moreover, the explicit form provides intuitive insights into the weight of using own data versus historical data. For example, $\lambda^{DP}_{n,N}$ is increasing in $n$ and $\Delta$. This implies that (i) when we collect more data from the target patients (larger $n$) during the learning, we can trust the estimate from the target data more and gradually reduce the reliance on using the historical data; and (ii) the larger the difference between the historical and target patients, the less (more cautious) we will use the historical data. 


\begin{remark}[Difference Gap]\label{rmk: difference gap}
In our theoretic analysis, the difference gap $\Delta$ is assumed known with sufficient prior knowledge. In practical implementation, in which $\Delta$ is not known exactly, we propose an easy-to-implement approach to estimate $\Delta$. We set $\Delta$ as the difference between the MLEs from the historical and target data multiplied by a constant $\gamma>0$. We treat $\gamma$ as a hyper-parameter and fine tune its value around one via cross-validation. We implement this approach in our case study in Section~\ref{sec:numerical} and show that there is a fairly large range of values of $\gamma$ that produce similar performance for our data-pooling algorithm (and beat other benchmarks), implying the robustness to the choice of $\Delta$ and the practical implementability of our approach. A detailed description of this approach is in Appendix~\ref{Appendix:algorithm details} with robustness check in~\ref{subsection:robust_check}. 
\end{remark}


\vspace{-0.2in} 
\subsection{Data-pooling Perturbed LSVI and Regret Bound}
\label{subsec:main-result}

With the new estimators provided in Theorem \ref{thm: data-pooling estimator}, the next step is to incorporate them within the perturbed LSVI framework to get the final data-pooling RL algorithm. This involves modifications to the two key components: new value function estimates and new perturbation term design. 
Note that in each iteration $t$ of an RL algorithm, the number of samples observed at a $(h,s,a)$ tuple from the target data, denoted by $n_t(h,s,a)$, is random and path-dependent. Theorem~\ref{thm: data-pooling estimator} is for a fixed sample size $n$. The results on fixed $n$ can be applied to the random and path-dependent $n_t(h,s,a)$ by conditioning on a ``good event'' and the pigeon's hole principle as in standard regret analysis.
We specify these details in the proof in the appendix and use the path-dependent $n_t(h,s,a)$ in the following analysis. We use $N(h,s,a)$ to denote the sample size for the tuple in the historical data, which is fixed through the learning procedure. 

\noindent\textbf{Modified Value Function Estimate. }
In each iteration $1\leq t\leq T$, substituting $n$ with $n_t(h,s,a)$ for the pooling estimators in Theorem~\ref{thm: data-pooling estimator}  we get the following  estimates
\begin{align}
	\label{eq: data-pooling estimates}
		\hat{r}_t^{DP}(h,s,a) &=\lambda_t(h,s,a)\bar{r}_t(h,s,a) + \big(1-\lambda_t(h,s,a)\big)\hat{r}_0(h,s,a),\notag\\
		\hat{P}^{DP}_t(h,s,a) &=\lambda_t(h,s,a)\bar{P}_t(h,s,a) + \big(1-\lambda_t(h,s,a)\big)\hat{P}_0(h,s,a),
\end{align}
where the weight $\lambda_t(h,s,a) = \lambda^{DP}_{n_t(h,s,a),N(h,s,a)}$ is computed according to \eqref{eq: lambda-RL}. Correspondingly, the data-pooling estimate for the value function $Q(h,s,a)$ follows 
\begin{equation}\label{eq: data-pooling value}
	\hat{Q}^{DP}_t(h,s,a)=\hat{r}^{DP}_t(h,s,a) +  \hat{P}^{DP}_t(h,s,a)\cdot\max_y\hat{Q}^{DP}_{t}(h+1,\cdot,y) 
\end{equation}
for $h=H, H-1, \dots, 1$ with $\hat{Q}^{DP}_t(H+1,s,a)\equiv 0$.

\noindent\textbf{Modified Perturbation Term. } 
The last piece in designing the data-pooling RL algorithm is to modify the perturbation terms. Recall that in~\eqref{eq: perturbation terms}, the perturbation terms $w_t(h,s,a)$ in the no-pooling algorithm are specified by the random variable $\xi$. We show that if $\xi$ can ensure sufficient exploration (to prevent getting stuck in suboptimal policies caused by estimation error) in the original no-pooling version, it also works for the data-pooling version and its associated confidence radii. As a consequence, we set the data-pooling perturbation terms as  
\begin{equation}\label{eq:data-pooling perturbation}
	w_t^{DP}(h,s,a) \equiv \varepsilon_{V}^{DP}(n_t(h,s,a))\cdot\xi_t(h,s,a),\quad\xi_t(h,s,a)\stackrel{i.i.d.}{\sim} \xi.
\end{equation}
Since $\varepsilon_{V}^{DP}(\cdot)\leq \varepsilon_{V}^H(\cdot)$, this modification leads to a smaller amount of perturbation while maintaining sufficient exploration, which reduces the part of the regret from the exploration cost.

\noindent\textbf{Data-pooling RL.} Given the modified value function estimator \eqref{eq: data-pooling value} and perturbation terms \eqref{eq:data-pooling perturbation}, we plug them into Algorithm \ref{Alg:P-LSVI} and get our data-pooling perturbed LSVI algorithm. See the complete specification in Algorithm \ref{Alg:DP-LSVI} below. Note that we use the perturbed value function: 
\begin{equation}
\label{eq: data-pooling value-perturb}
\tilde{Q}^{DP}_t(h,s,a) = \hat{r}^{DP}_t(h,s,a)+w^{DP}_t(h,s,a) +
\hat{P}^{DP}_t(h,s,a)\cdot\max_y\tilde{Q}^{DP}_t(h+1,\cdot,y),
\end{equation}
with $\tilde{Q}_t^{DP}(H+1,s,a)\equiv 0$.

\begin{algorithm}[thbp]
	\caption{Data-pooling Perturbed LSVI}
	\begin{algorithmic}
		\small{
			\STATE Initialize $\tilde{r}_{1}=0$ and $\bar{P}_1=0$, data set $\mathcal{D}=\{\}$,  $\tilde{Q}^{DP}_{t}(H+1,s,a)\equiv 0,~\forall t, s,a$. 
			\STATE Input hyperparameter $\Delta>0$ and $\delta$.
			\STATE Input $(N(h,s,a),\bar{r}_0(h,s,a),\bar{P}_0(h,s,a))$ obtained from the historical data.
			\FOR{$t=1$ {\bfseries to} $T$}
			\STATE Compute $\bar{r}_{t},\bar{P}_{t}$ from $\mathcal{D}$ 
			\FOR{$h=H,H-1,\ldots,1$}
			\FORALL{$(s,a)$ pairs}
			\STATE Generate $\xi_t(h,s,a)$ and compute $w_t^{DP}(h,s,a) =  \varepsilon_{V}^{DP}\big( n_t(h,s,a) \big) \cdot \xi_t(h,s,a)$ 
			\STATE Compute the data-pooling estimators $\hat{r}^{DP}_t(h,s,a)$ and $\hat{P}^{DP}_t(h,s,a)$ according to \eqref{eq: data-pooling estimates}
			\STATE Compute data-pooling perturbed value function $$\tilde{Q}_t^{DP}(h,s,a)=\hat{r}_t^{DP}(h,s,a)+w^{DP}_t(h,s,a)+\hat{P}^{DP}_t(h,s,a)
			\cdot\max_y\tilde{Q}_{t}^{DP}(h+1,\cdot,y)$$
			\ENDFOR
			\ENDFOR
			\FOR{$h=1,\ldots,H$}
			\STATE Collect state $S^t_h$
			\STATE Select action $A^t_h=\arg\max_a\tilde{Q}^{DP}_t(h,S^t_h,a)$
			\STATE Collect reward $R^t_h$
				\ENDFOR
				\STATE Update  $\mathcal{D}=\mathcal{D}\cup\{S^t_1,A^t_1,R^t_1,\ldots,S^t_H,A^t_H,R^t_H\}$
				\ENDFOR	
			}
		\end{algorithmic}
		\label{Alg:DP-LSVI}
	\end{algorithm}

Although we call our algorithm the data-pooling algorithm, it is a general representation of the data-pooling version of a given perturbed LSVI algorithm (the base algorithm). For example, if the base is \textit{TS-based} RL, then the corresponding data-pooling algorithm is the TS-based data-pooling algorithm. We use ``data-pooling RL'' as a general term and stress that what we provided here is a unified approach to covert a wide class of existing RL algorithms to their data-pooling version. Next, we state the performance guarantee of the data-pooling RL in Theorem \ref{thm: main}.

\begin{theorem}[Informal]\label{thm: main} Suppose that a perturbed LSVI algorithm satisfies certain technical conditions such that its regret is bounded by $$
		Regret(T) \leq (1+2p_0^{-1})HSA\sum_{n=1}^{\lceil  T/SA \rceil}\left(\varepsilon^H_R(n)+H(\bar{W}+1)\varepsilon^H_P(n)+\bar{W}\varepsilon_{V}^H(n)\right)+4HT\delta,
$$
where $\delta$ is the confidence parameter and the positive constants $\bar{W}$ and $p_0$ are determined by specific choice of the random variable $\xi$ for the perturbation terms in \eqref{eq: perturbation terms}.
Under Assumption~\ref{assmpt: difference bound}, the corresponding data-pooling version as given in Algorithm \ref{Alg:DP-LSVI} satisfies the same set of technical conditions and its regret 
	$$
	Regret(T) \leq (1+2p_0^{-1})HSA\sum_{n=1}^{\lceil  T/SA \rceil}\left(\varepsilon^{DP}_R(n,N)+H(\bar{W}+1)\varepsilon^{DP}_P(n,N)+\bar{W}\varepsilon_{V}^{DP}(n,N)\right)+4HT\delta,
	$$
where $N=\min_{h,s,a}N(h,s,a)$. Consequently, if $N(h,s,a)\geq 1$ $\forall (h,s,a)$ and $\Delta\leq \sqrt{\log(2HSAT/\delta)/2}$, the data-pooling version has a strictly smaller regret bound than the no-pooling version.   
\end{theorem}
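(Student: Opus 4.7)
The plan is to lift the regret bound for the generic perturbed LSVI verbatim, replacing the Hoeffding radii $\varepsilon^H_R, \varepsilon^H_P, \varepsilon^H_V$ with the data-pooling radii $\varepsilon^{DP}_R, \varepsilon^{DP}_P, \varepsilon^{DP}_V$ supplied by Theorem~\ref{thm: data-pooling estimator}. The enabling observation is that $\{\varepsilon^{DP}_R, \varepsilon^{DP}_P, \varepsilon^{DP}_V\}$ satisfy exactly the three deviation inequalities~\eqref{eq: CI fixed n}, only with tighter constants. Because the regret analysis outlined in Section~\ref{subsec:regret-framework} (adapted from Russo and Van Roy, 2019) takes an abstract set of confidence radii as its only quantitative input, substituting $\varepsilon^{DP}$ for $\varepsilon^H$ throughout preserves every algebraic step, provided that the ``technical conditions'' on the perturbation distribution still hold.

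Verifying those conditions is the first concrete task. The constants $\bar{W}$ and $p_0$ appearing in the bound are intrinsic properties of the seed random variable $\xi$: $\bar{W}$ comes from a tail bound on $\xi$ and $p_0$ from an anti-concentration property of the form $\PP(\xi \geq 1)\geq p_0$. Since Algorithm~\ref{Alg:DP-LSVI} uses the same $\xi$, both constants are unchanged. What does change is the \emph{scale} of the perturbation, from $\varepsilon^H_V \xi_t$ to $\varepsilon^{DP}_V \xi_t$; but the optimism argument only requires the perturbation magnitude to dominate the value-function estimation error with probability at least $p_0$, and on the good event the pooled estimate $\hat{Q}^{DP}_t$ deviates from $Q^\ast$ by at most a quantity controlled by $\varepsilon^{DP}_V$, so anti-concentration of $\xi$ still yields optimism at the matched scale. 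The second preparatory step is to extend Theorem~\ref{thm: data-pooling estimator} from deterministic $n$ to the path-dependent visit count $n_t(h,s,a)$. A union bound over $n \in \{1,\dots,T\}$ and $(h,s,a,t)$ together with the per-inequality budget $\delta/HSAT$ yields a single good event, on which the required deviation bounds hold simultaneously at the realized $n_t(h,s,a)$; the complementary probability produces the $4HT\delta$ bad-event term.

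The main obstacle is the backward induction over the horizon that propagates errors across stages. At stage $h$, a one-step error in $\hat{P}^{DP}_t$ is amplified by the $H$-bound on future value functions, and the perturbation injected at later stages is transmitted through $\hat{P}^{DP}_t$ as well, which is what produces the $H(\bar{W}+1)\varepsilon^{DP}_P$ contribution per step. I would follow the Russo--Van Roy accumulation argument verbatim, carrying $\varepsilon^{DP}$ in place of $\varepsilon^H$; summing the resulting per-tuple contribution over the pigeonhole counts $n = 1, \dots, \lceil T/SA\rceil$ of visits to each $(h,s,a)$ and adding the bad-event penalty yields the claimed inequality. Strict improvement over the no-pooling counterpart is then immediate: under the hypothesis $\Delta \leq \sqrt{\log(2HSAT/\delta)/2}$ and $N(h,s,a) \geq 1$, the range $n < \log(2HSAT/\delta)/(2\Delta^2)$ is non-empty and Theorem~\ref{thm: data-pooling estimator} delivers strict inequality $\varepsilon^{DP}_\cdot(n,N) < \varepsilon^H_\cdot(n)$ there, which is preserved under summation.
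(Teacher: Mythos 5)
Your proposal follows essentially the same route as the paper's proof: conditions (i)--(iii) of the meta regret bound are re-verified for the data-pooling radii via Theorem~\ref{thm: data-pooling estimator} (extended to path-dependent visit counts), conditions (iv)--(v) are unchanged because they depend only on $\xi$, the generic perturbed-LSVI regret argument is then rerun verbatim with $\varepsilon^{DP}$ in place of $\varepsilon^{H}$, and strict improvement follows from the strict inequality of the radii for $n<\log(2HSAT/\delta)/2\Delta^2$ when $N(h,s,a)\geq 1$. No substantive gap; your reading of where $p_0$ and $\bar{W}$ enter and how the optimism/anti-concentration step scales with $\varepsilon^{DP}_V$ matches the paper's treatment.
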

The formal statement of Theorem~\ref{thm: main} and its proof are given in Appendix \ref{subsec: regret bound for data-pooling RL}. 
This theorem shows that, as long as three conditions are met (i.e., the base algorithm is well designed; the data difference $\Delta$ is small; and there are sufficient observations in the historical data\footnote{The data-pooling weight $\lambda_t(h,s,a)$ is computed according to $N(h,s,a)$ in the algorithm while we use $N$ in the theoretic regret analysis, mainly to obtain a simple expression of the regret bound. See more in remark~\ref{rmk:hist-sample-size} in Appendix~\ref{subsec: regret bound for data-pooling RL}.}), the data-pooling RL algorithm achieves a smaller regret bound compared to the original no-pooling algorithm. 

To recap, the entire design of our data-pooling estimator is driven by improving the regret performance, as now shown by Theorem~\ref{thm: main}. This highlights the fundamental difference between the design of our data-pooling estimator and those in the statistics literature: pooling for better decision quality versus for better prediction. Our data-pooling design does improve the accuracy of the estimator,
but this improvement is more intricate: the design improves value function estimation and reduces the unnecessary exploration in learning with only target data and, hence, supports  the end goal of improving decision quality. Note that, although our regret analysis is based largely on the perturbed LSVI framework and its regret bound~\eqref{eq: regert informal}, our approach of connecting the data-pooling design with regret analysis can be applied within other algorithm frameworks and motivate the design of even-more-efficient algorithms, upon better regret bound available. We leave this as an important future direction. The key novelty in this paper is to identify this new design approach.


\medskip
\noindent\textbf{Extension to Multiple Historical Groups. }
Our analysis has been focused on the case in which all historical data are generated from a single mixture MDP $\mathcal{M}_0$. In practice, it is common that multiple sets of historical data are available from different sources (populations). If the decision maker has prior information on this, we show that it is possible to choose different values of the weight parameter $\lambda_t$ for different historical datasets and one can further improve the regret performance. We formalize this idea in Appendix~\ref{subsec: extension-to-multi-group} with proof. 

\vspace{-0.07in}
\section{Case Study: Post-Discharge Intervention Planning}
\label{sec:numerical}

We demonstrate the practical
applicability and empirical success of our data-pooling RL algorithm via a specific healthcare problem: the personalized post-discharge intervention to reduce hospital readmissions. We first specify the dataset and calibrate the evaluation platform in Section~\ref{subs:Data}. 
In Section~\ref{subs:EmpResult}, we compare our data-pooling algorithm with other benchmarks and show the superior performance of ours in reducing the regret, the total cost, and the overall readmission rates. In Section~\ref{subsec:numerical on bandit}, we examine the key drivers behind the better performance of our algorithm to provide explainability. 
In Section~\ref{sec:robust-under-misspec}, we show the robustness of our algorithm with respect to model misspecification. 
In Section~\ref{subsec:robust-perform-extension}, we discuss insights for practical implementation. 

\vspace{-0.15in}
\subsection{Data Description and Simulator Calibration}
\label{subs:Data}

Our dataset comes from a large teaching hospital in Asia. It contains 21,596 records of patients discharged from July 2010 to February 2011. We focus on the 30-day readmission rate as the main outcome variable; extension to the 60-day or 90-day readmissions is straightforward. There are 21 feature variables, including basic demographics (e.g., age, gender), medical information (e.g., medical specialty, number of prior visits), insurance and diagnostic information, and intervention records (whether follow-up was scheduled upon discharge). 




We use the 9,766 samples from July to October 2010 as the historical data and calibrate a simulator for patients discharged between November 2010 and February 2011. The simulator provides the environment (the ``oracle'' $\mathcal{M}$ for each target class) for us to generate target data and evaluate policies. Note that we use the first part of the data from the same hospital to be the historical dataset, instead of using public datasets or published studies, for the benefits of benchmark algorithms such as contextual-based methods because they require individual-level information to train and learn the model. In Section~\ref{subsec:robust-perform-extension}, we show the advantage of our algorithm in requiring aggregate statistics only from the historical data. 


We consider four decision epochs corresponding to weekly decisions in the 30-day readmission window. The MDP setup follows the simple setting specified in Section~\ref{subs:MDP}, with two states $(s\in\{0,1\})$ and two actions $(a\in\{0,1\})$. The key to building the simulator lies in calibrating the transition probabilities, i.e., the readmission probability $P(s'=1 | h, s=0,a)$ for different individual classes in each week $h$ with or without the intervention. In calibrating these probabilities, the main difficulty is properly estimating the treatment effect, i.e., how the followup intervention impacts the readmission probability when \emph{selection bias} is present. 
That is, patients were not randomly selected for the follow-up intervention in the historical data. To begin with, the hospital would select patients for intervention that they believed to have a higher risk of readmission. These selected patients had higher baseline readmission rates than those who were not selected; hence, even though they received the intervention, they still had a higher chance of being readmitted than those who were not selected. 
To correct for the selection bias, we apply the Heckman's two-step correction~\citep{heckman1976common}. See Appendix~\ref{app:case-study} for the complete details.  

The performance of our two-step regression is measured by the out-of-sample Area Under the Curve (AUC), which is 0.71. This AUC performance is comparable with those reported in the literature for readmission prediction. The estimated treatment effect is significant, along with the following predictive features:
age, number of transfers, whether admitted from the emergency department (ED), number of prior visits, the Charlson comorbidity score, had an operations or not, and the hospital length-of-stay (LOS). This is consistent with findings in the medical literature on predictors of readmission risk, such as the LACE score~\citep{robinson2017hospital}. 
Based on these predictive features, we classify the target patients into $146$ individual classes as the ground truth. Table~\ref{table: target info example} shows five representative classes, and we observe that patients in different target classes can have very different optimal policies. These optimal policies are calculated by solving the corresponding MDP model (assuming that the true parameters are known) under the intervention cost and readmission cost, $c_a = 0.13$ and $c_R = 10$, respectively.\footnote{The cost is not normalized to $[0,1]$, as in the theoretical analysis, but that does not affect our comparison. The values for $c_a$ and $c_R$ are set to ensure enough difference in the optimal policies across patients. The values can be varied to generate tradeoff curves -- tradeoff between the number of interventions versus the readmission rates.} 
This significant difference in the optimal policies confirms the necessity of tailored intervention planning, as discussed in Section~\ref{subs:MDP}. The full table with all $146$ classes is available in Appendix~\ref{app:case-study}. 
For historical patients, we consider two setups: one treats all historical data points as generated from one mixed MDP $\mathcal{M}_0$ (i.e., single group for all historical patients); the other setup classifies the historical patients into eight groups based on the predictive features (see Table \ref{Table: historical data classification} in the Appendix). In the baseline experiments, we use the latter setup with eight groups; the former setup is discussed in Section~\ref{subsec:robust-perform-extension} for sensitivity analysis. 


\vspace{-0.06in}
\begin{table}[htbp] 
\centering
\scalebox{0.8}{
\begin{tabular}{ccccccccccc}
\toprule
Index & \begin{tabular}[c]{@{}l@{}}Sample \\ Size\end{tabular} & \begin{tabular}[c]{@{}l@{}}Weekly \\ Arrival\end{tabular} & \begin{tabular}[c]{@{}l@{}}Optimal \\ Policy\end{tabular} & \begin{tabular}[c]{@{}l@{}}Admission \\ Source\end{tabular} & Age         & \#Transfers & \begin{tabular}[c]{@{}l@{}}\#Prior \\ Visits\end{tabular} & \begin{tabular}[c]{@{}l@{}}Had oper-\\  ations\end{tabular} & \begin{tabular}[c]{@{}l@{}}Charlson \\ Score\end{tabular} & \begin{tabular}[c]{@{}l@{}}Length \\ of Stay\end{tabular} \\
\midrule
20    & 139                                                    & 7                                                        & 1111                                                      & ED                                                          & {[}61,80{]} & $\ge$ 2     & 0                                                        & 0,1                                                   & $\ge$ 4                                                   & 2-4                                                       \\
33    & 35                                                     & 2                                                        & 1111                                                      & ED                                                          & {[}41,61)   & 1           & 0                                                        & 0                                                         & 1-3                                                       & 2-4                                                       \\
93    & 139                                                    & 5                                                        & 1100                                                      & ED                                                          & {[}41,61)   & $\ge$ 3     & 1-3                                                      & 0,1                                                   & 0                                                         & 1                                                         \\
11    & 108                                                    & 4                                                        & 1000                                                      & nonED                                                          & {[}41,61{)} & $\ge$ 2     & 0                                                        & 1                                                   & 0                                                         & $\ge$ 2                                                         \\
144   & 128                                                    & 5                                                        & 0000                                                      & nonED                                                          & {[}18,41)   & 3           & 0                                                        & 1                                                         & 0                                                         & 1  
\\
\bottomrule                               
\end{tabular}
}
\caption{Representative target classes and their optimal policies (1 = intervention; 0 = no intervention), total sample size and number of arrival per week (rounded to integer). The length of stay is measured in days.
}\label{table: target info example}
\end{table}


\vspace{-0.15in}
\subsection{Algorithm Comparison and Results}
\label{subs:EmpResult} 

We use the calibrated simulator to evaluate the performance of our data-pooling algorithm and other benchmark algorithms. 
To evaluate a given algorithm, we run 100 replications with 50 iterations. In each iteration (equivalent to 1 week), new patients in each of the $146$ target classes are generated according to their empirical weekly arrival rates. The simulator generates patient evolution trajectories (data points) under the given algorithm. The data points collected in this iteration are added to the corresponding target dataset to help the algorithm prescribe actions for new patients in the next iteration. The historical data remains fixed. We report the average performance across the 100 replications and all target classes.

For our data-pooling algorithm, we use the \texttt{TS-based} RL as our base algorithm, i.e., the data-pooling version of the RLSVI algorithm. We compare the performance of our data-pooling algorithm with a few benchmarks in the following three categories. 
\begin{enumerate}[leftmargin=*]
\item Estimate-then-optimize (ETO) methods:
    \begin{itemize}
        \item The \textit{JS} method applies the James-Stein (JS) estimator~\citep{efron_hastie_2016} to estimate the reward and transition probability by merging the historical and target data. 
        \item The \textit{Contextual-P} method fits a regression model for the reward and transition probability by merging the target and historical data. 
    \end{itemize}
We adapt these ETO methods for the online setting for a more fair comparison, i.e., we continue to update parameters as new target data are collected, and perturbation terms are added to the value function estimation to encourage exploration.  
\item Learning with only target data: the \textit{Personalized} method uses RLSVI for learning but only utilizes the target data, i.e., the standard RLSVI without pooling. 
\item Learning with both target and historical data:
    \begin{itemize}
        \item The \textit{Complete} method uses RLSVI by simply merging the historical and target data in learning. Due to the dominantly large historical sample size, this method behaves almost like an offline method, as the optimal policies are driven by the historical data.
        \item The \textit{Contextual-Q} method fits a regression model for the Q-function. 
        \item The \textit{Clustering} method adaptively pools the target data and groups of historical data according to the distance of the estimated transition probability for each $(h,a)$ pair; see \cite{miao2019context} for motivation of this method and Appendix \ref{Appendix:algorithm details} for adaptation to our RL setting. 
    \end{itemize} 
\end{enumerate} 
The implementation details of data-pooling RLSVI and benchmark algorithms are in Appendix~\ref{Appendix:algorithm details}. Moreover, to give enough ``benefit of the doubt'' to the benchmark algorithms, we tune the necessary hyper-parameters for each benchmark and choose the best one to report. All the hyper-parameters used are specified in Appendix~\ref{Appendix: Hyper-parameters}.

Figure \ref{fig:regret personalized} displays the regret performance of our data-pooling RLSVI and other benchmarks. We see that the average regret of our method decreases much faster than others, especially in the first 10 iterations, in which the small-sample issue is particularly prominent. Table \ref{table: Performance using Corrected Historical data} reports the numeric values of the total regret for each algorithm, as well as the total cost, the main patient outcome (30-day readmission rate), and the computational time. From the table, it is clear that our data-pooling algorithm achieves the smallest total regret, the smallest cost, and the lowest readmission rate (except for two contextual methods) over 50 iterations. Compared to the two contextual methods, our algorithm enjoys faster computational performance. This benefits from the fact that the weight $\lambda_t$ in our data-pooling algorithm can be calculated very efficiently in each iteration. In contrast, contextual methods would require that we recalculate the regression in each iteration and this results in excessively long computational times. Among all the benchmark algorithms, the clustering and contextual-Q methods are the most competitive ones, with the performance gap with our algorithm being 37\% and 25\%, respectively. We take a closer look at the reasons that our algorithm outperforms these benchmarks in the next section. 

\begin{figure}[htbp]
	\begin{center}
		\centerline{\includegraphics[width=0.8\columnwidth]{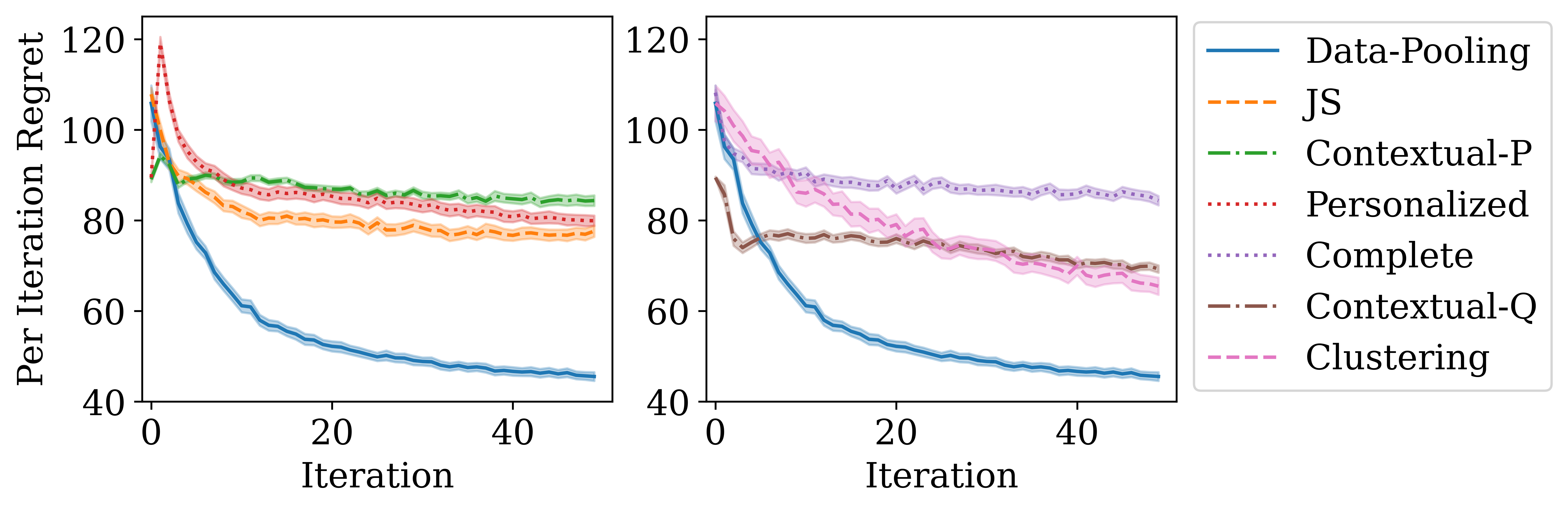}}
		\caption{Regret per iteration of 146 target patient classes. 
		The solid line represents the average from 100 replications, and the shaded area corresponds to the 95\% confidence interval.}
		\label{fig:regret personalized}
	\end{center}
	\vskip -0.2in
\end{figure}

\vspace{-0.15in}
\begin{table}[htbp]
\centering 
\scalebox{0.9}{
\begin{tabular}{@{}cllcc@{}}
\toprule
Algorithm             & Total Regret & Total Cost & \begin{tabular}[c]{@{}c@{}}Readmission \\  Rate\end{tabular} & CPU time/class \\ \midrule
 Data-pooling            &       2813.33$\pm$35.77 & 50663.38$\pm$115.06 & 9.80\% & 25.401 \\ 
 JS                      &        3994.96$\pm$43.89 & 51868.8$\pm$135.1 & 9.82\% & 16.93 \\ 
 Contextual-P &         4342.01$\pm$9.27 & 47297.71$\pm$91.78 & 9.73\% & 891.87 \\
 Personalized            &       4405.01$\pm$55.27 & 52261.2$\pm$138.66 & 10.58\% & 7.36 \\ 
  Complete                &       4095.43$\pm$14.83 & 51971.87$\pm$120.58 & 9.98\% & 6.47 \\ 
  Contextual-Q     &       3529.49$\pm$19.35 & 51392.18$\pm$112.42 & 8.93\% & 666.38 \\ 
  Clustering              &        3848.24$\pm$49.72 & 51692.41$\pm$117.9 & 10.29\% & 11.85 \\ 
 \hline
\end{tabular}
}
\caption{Performance of different algorithms in 50 iterations estimated via 100 rounds of simulation. 
The numbers following the $\pm$ sign are half-width of the 95\% confidence intervals. CPU times are from using the same server.}
\label{table: Performance using Corrected Historical data}
\end{table}

\subsection{Why Data-pooling Outperforms the Benchmarks}
\label{subsec:numerical on bandit}

The two benchmark algorithms in the first category (ETO methods), i.e., JS and contextual-P, use both the target data and the historical data. However, the way that they pool data is based on prediction accuracy, ignoring the subsequent needs of decision making. For example, the JS weight is chosen based on minimizing the MSE in the prediction. The numerical results clearly show that, without connecting to the decision-making quality, the conventional estimate-then-optimize procedure results in suboptimal performance even with proper adaptation to the online setting.

For the second category, the personalized method suffers from the small sample issue in the initial learning by using only the target data. It is evident from Figure~\ref{fig:regret personalized} that the regret per iteration is particularly high before iteration 20.

For the third category, we note that the optimal policy derived from the historical data is $(0000)$, i.e., no treatment (action $a=0$) in any of the four weeks. In the complete algorithm, the historical data dominates. Hence, the algorithm would prescribe this no-treatment policy for all patients at the beginning, which results in more than half of the patients not receiving their optimal policies. This sub-optimality demonstrates the significant performance gap between the one-size-fits-all policy and tailored intervention policies. Moreover, the large sample size in the historical data leads to a slow update in the policy for target patients, and the sub-optimality persists in every iteration, leading to the highest regret after 50 iterations. Next, we discuss the comparison with clustering in details, and leave the comprehensive comparison with the contextual-Q algorithm in Section~\ref{sec:robust-under-misspec}.

\noindent\textbf{Comparison with Clustering. }
A careful examination reveals that our data-pooling algorithm and the clustering algorithm are properly grouping the target patient class with similar patients from the historical data, where the similarity is in terms of the readmission risk in each week. However, the grouping was done in a ``soft'' way in our algorithm, but in a ``hard'' way in clustering. 

\begin{figure}[H]
\centering
\vspace{-25pt}
\includegraphics[width=0.4\textwidth]{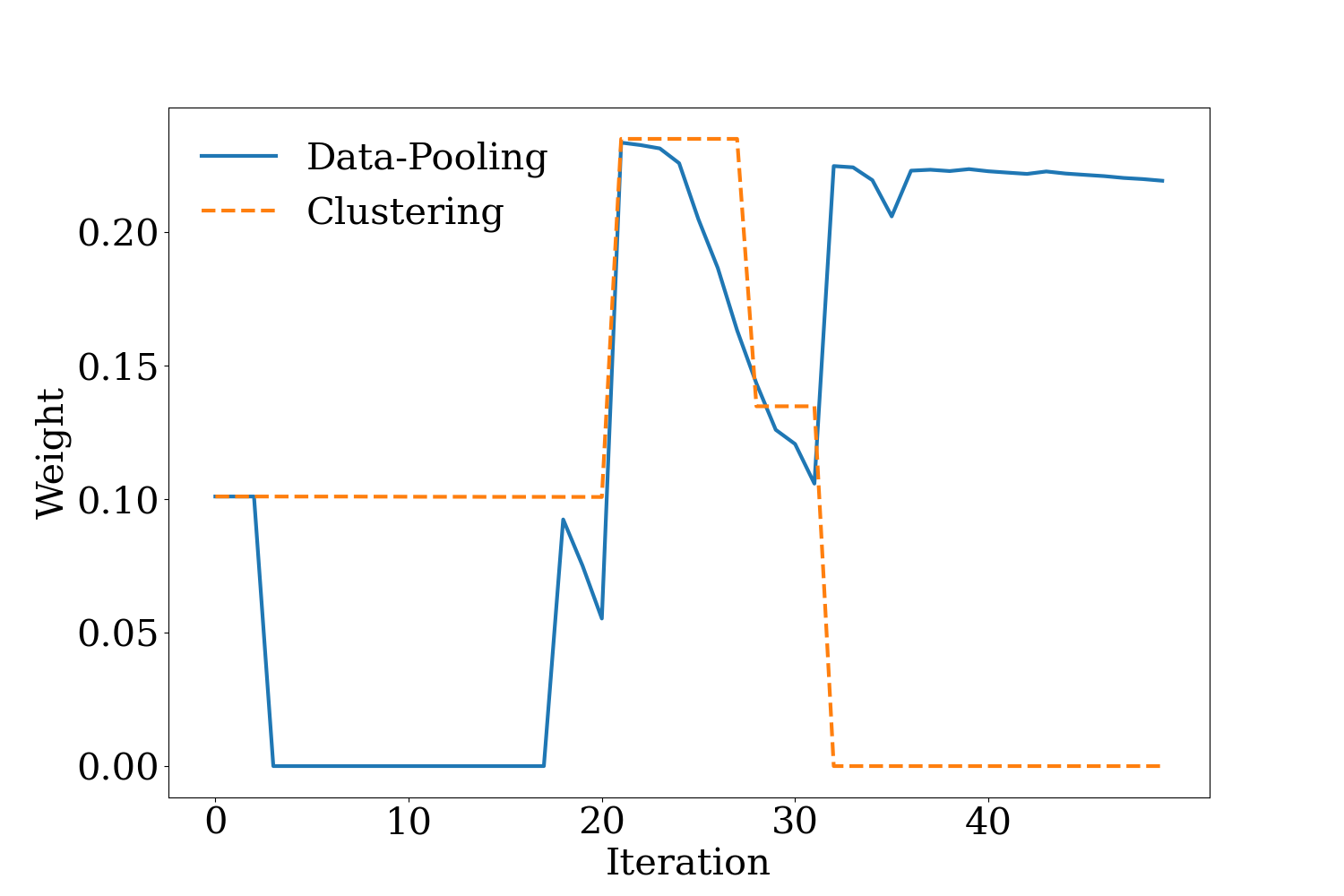}
\caption{Weight $\lambda$ of historical group $3$ at epoch $h=2$ with intervention ($a=1$).}
\label{fig:weight_comparison_clustering_pooling}
\end{figure}

\vspace{-25pt}
To illustrate, we use target class 1 as an example. 
Among the eight groups in the historical data, groups~ 3, 4, and 8 are the closest to target class 1, in the sense that they have the closest readmission risks with and without interventions. Zooming into epoch $h=2$ and action $a=1$, we find that within 20 iterations, data-pooling is able to learn the transition probability fairly closely to the optimal value on nearly all the sample paths. In contrast, clustering may end up staying at a point that is far away from the optimal on a significant number of paths, leading to the choice of the suboptimal action $a=0$ (even the exploration is not enough to compensate). To see why, we plot in Figure~\ref{fig:weight_comparison_clustering_pooling} the weight in pooling historical group 3 with the target class on one such sample path. Indeed, both algorithms correctly used the data from historical group 3 to estimate the transition probability. However, the weight under the clustering algorithm can change drastically from iteration to iteration, whereas the weight under our data-pooling algorithm is much smoother. This is because clustering essentially uses ``hard'' pooling. That is, if a historical group is deemed to be close enough to the target group under the radius calculation, to estimate the transition probability, the entire group is merged with the target data as if they are from the same population; otherwise, the entire group is excluded from estimation. This hard pooling renders the algorithm sensitive to bias caused by sampling variances and, hence, could wrongly include or exclude the group to be pooled. 
Group 3 was wrongly excluded from iterations 31 to 50 in Figure~\ref{fig:weight_comparison_clustering_pooling} as a result of this under clustering. In contrast, our data-pooling algorithm \emph{properly} accounts for group 3 in the learning, which leads to the more stable performance. See more analysis in Appendix~\ref{app:comparison-cluster-context}. 

\vspace{-0.1in}
\subsection{Synthetic Experiment: Significantly Different Historical vs Target Patients}
\label{sec:robust-under-misspec}

In this section, we compare the algorithm performance via a synthetic experiment for scenarios in which the reaction to responses is very different between the historical and target populations. We show that our algorithm, designed with model-free properties, significantly outperforms contextual-based methods when there are model misspecifications.

\noindent\textbf{Experiment Setting. } This synthetic experiment is constructed in a fashion similar to that of our real-data case study, except that we limit the target and the historical patients to two classes each. The underlying (true) linear model for the 30-day readmission risk is 
$p(a,x) = c_1 a + c_2 x + c_3$, 
which is the same as in the example constructed in Section~\ref{sec:model-misspec} (the state variable $s\in\{0,1\}$ is omitted). 
The continuous feature $x\sim \mathcal{N}(\mu, \sigma^2)$.  The coefficients for the four classes, as well as the feature distributions, are shown in Table~\ref{tab:synth-exp-setting}. We note that though $c_2$, $c_3$, and the feature distributions are different across the four classes, the overall readmission probabilities are similar. Thus, at a glance, it would be reasonable for decision-makers to pool the target and historical data to learn the readmission probability and policy for the target patients.


\vspace{-0.2in}
\begin{table}[H]
\centering
\begin{tabular}{c|c|c|c|c|c|c}
class & $c_1$ & $c_2$ & $c_3$ & mean($\mu$) & std($\sigma$) & average readm prob \\ \hline
target 0                            & -0.055                             & 0.5                                  & 0.l                        & 0.18       & 0.03  & 0.192 (0.137)      \\
target 1                            & -0.055                             & 0.5                                  & 0.l                          & 0.22       & 0.03 & 0.209 (0.154)      \\
history 0                           & -0.055                             & -0.5                                  & 0.3                          & 0.18       & 0.03 & 0.210 (0.155)     \\
history 1                           & -0.055                             & -0.5                                  & 0.3    & 0.22       & 0.03    & 0.190 (0.135)
\end{tabular}
\caption{Synthetic experiment settings. The average readmission probabilities column displays the probability without and with intervention(in the parentheses). }
\label{tab:synth-exp-setting}
\end{table}

\vspace{-0.3in}
\noindent\textbf{Different Time-dependent Response. }
Though the 30-day readmission rates are similar, the timing of readmission is significantly different between the historical and target groups, with two temporal patterns: high-low (35\% in the first two weeks, and 15\% in the latter two) for target 1 and history 0; and low-high (reverse of high-low) for target 0 and history 1. 
Under this construction, the final readmission probability in each week is summarized in Table~\ref{table: synthetic probability}, which also shows the optimal policy for each class. We can see that even though target 0 and history 0 (or target 1 and history 1) share similar feature values, their optimal policies are completely opposite due to the temporal patterns -- recall Proposition~\ref{prop: temporal pattern} on the structural properties for temporal patterns. 


\vspace{-0.15in}
\begin{table}[H]
\centering
\begin{tabular}{c|ccccccccc}
\multicolumn{1}{c|}{\textbf{class}} & \multicolumn{1}{c}{\textbf{$p_{00}$}} & \multicolumn{1}{c}{\textbf{$p_{01}$}} & \multicolumn{1}{c}{\textbf{$p_{10}$}} & \multicolumn{1}{c}{\textbf{$p_{11}$}} & \multicolumn{1}{c}{\textbf{$p_{20}$}} & \multicolumn{1}{c}{\textbf{$p_{21}$}} & \multicolumn{1}{c}{\textbf{$p_{30}$}} & \multicolumn{1}{c}{\textbf{$p_{31}$}} & \multicolumn{1}{c}{\textbf{opt policy}} \\ \hline
target  0                            & 2.9\%                            & 2.1\%                            & 2.9\%                            & 2.1\% & 6.7\% & 4.8\% & 6.7\% & 4.8\% & 0011   \\
target  1                            & 7.3\%                            & 5.4\%                            & 7.3\%                            & 5.4\% & 3.1\% & 2.3\% & 3.1\% & 2.3\% & 1100   \\
history 0                            & 7.4\%                            & 5.4\%                            & 7.4\%                            & 5.4\% & 3.2\% & 2.3\% & 3.2\% & 2.3\% & 1100   \\
history 1                             & 2.8\%                            & 2.0\%                            & 2.8\%                            & 2.0\% & 6.6\% & 4.7\% & 6.6\% & 4.7\% & 0011 
\end{tabular}
\caption{Average readmission rates over the four-week time window and the optimal policies for each class.}
\label{table: synthetic probability}
\end{table}

\vspace{-0.2in}
\noindent\textbf{Performance Comparison. } 
We compare the performance of three policies, generated from three algorithms, in this synthetic setting: (i) our data-pooling algorithm; (ii) the contextual-Q algorithm; and (iii) the contextual-P algorithm; the latter two are described in Section~\ref{subs:EmpResult}. 
We still run 100 replications with 50 iterations in each replication when evaluating each policy. Table \ref{table: regret synthetic} in Appendix~\ref{app:synth-exp} summarizes the performance in numeric values for the three policies. Our data-pooling method outperforms the two contextual-based methods in terms of the regret and cost, with the performance gaps of 81\% and 134\%, more significant than the baseline setting.


To elaborate on the reasons behind this larger performance gap, we plot the Q-function values under actions $a=0$ and $a=1$, as well as the difference in the Q-functions, respectively, for the three policies in Figure~\ref{fig:q_comparing_synthetic}. The estimated Q-function values from our data-pooling method are close to the optimal ones, whereas the estimates from the two contextual methods are far from optimal. So is the difference in the Q-function values. 

\vspace{-0.1in}
\begin{figure}[htbp]
\centering
\subfigure[$a=0$]{\includegraphics[width=0.28\linewidth]{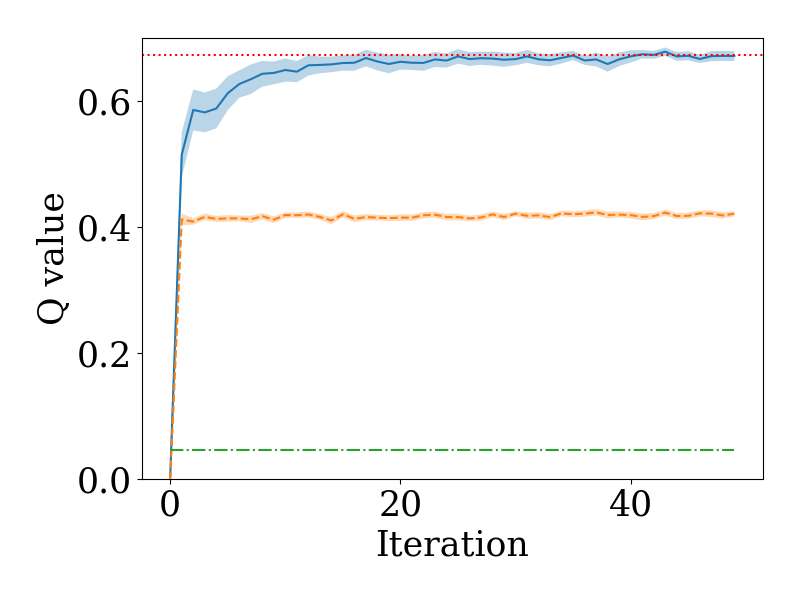}\label{fig:syn_q_0}} 
\subfigure[$a=1$]{\includegraphics[width=0.28\linewidth]{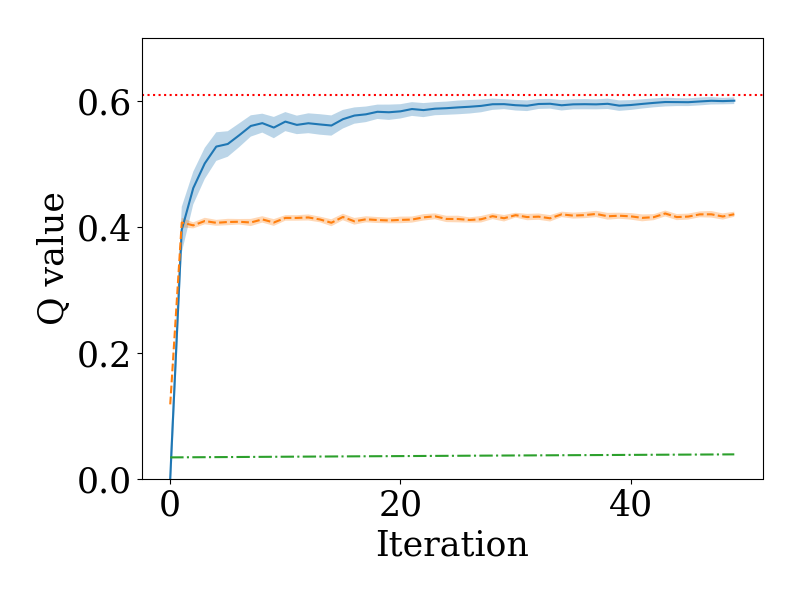}\label{fig:syn_q_1}} 
\subfigure[$Q_0 - Q_1$]{\includegraphics[width=0.42\linewidth]{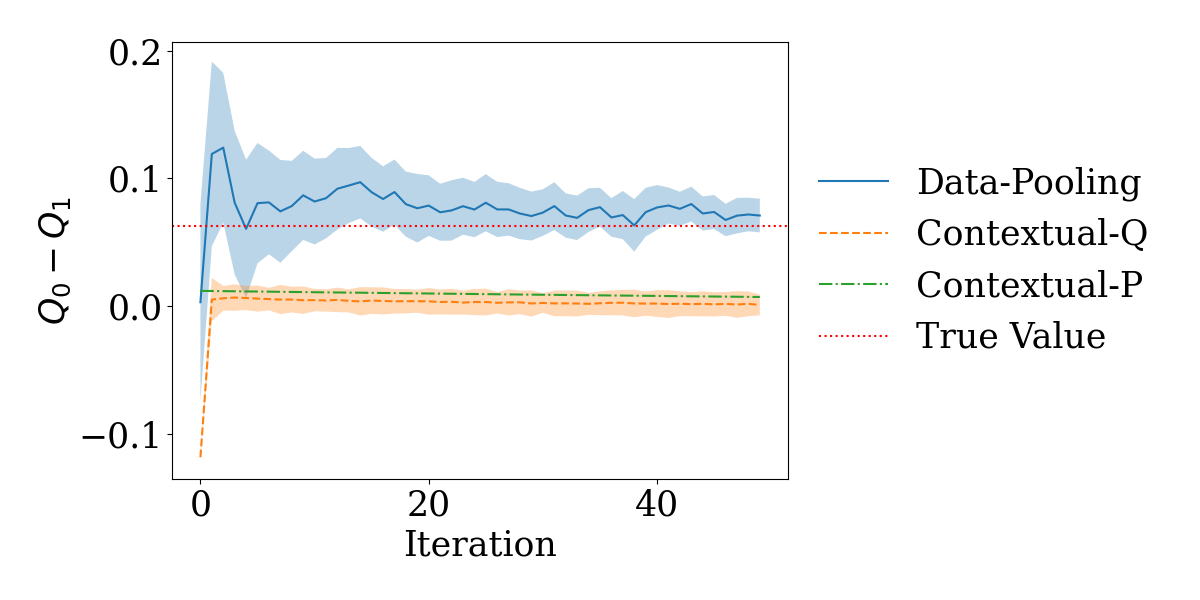}\label{fig:syn_q_diff}} 
\vspace{-0.1in}
\label{fig:q_comparing_synthetic}
\caption{Estimated Q-function value plot for target class 0 at period 0.
}
\end{figure}

\vspace{-0.2in}
The drastic difference in the estimation accuracy lies in how data were pooled by our method versus the contextual methods. Due to the regression mechanism, the contextual methods pool data by similar features. For example, the contextual-Q method directly estimates Q-function via a linear regression $Q_{h}(a;x) = \theta_1^h a +\theta_2^h x+\theta_3^h$. Since target class 0 and historical class 0 share the same feature distributions, the contextual method would tend to pool their data to learn a common Q-function depending on $x$. However, even though these two classes share similar feature values, their different temporal patterns lead to very different optimal Q-functions; for example, the optimal coefficient of action $a$, $\theta_1^h$, has opposite signs for target class 0 and historical class 0. As a result, when the contextual-Q method tries to fit a common function $Q_{h}(a;x)$ for the two classes, the fitted values become way off from the true values. This is exactly the issue caused by the model misspecification, as we formalized in Lemma~\ref{lemma: linear model for period}. The same explanation applies to the contextual-P method. In contrast, our data-pooling method pools data based on the closeness of the transition probabilities; for example, it pools target class 0 with historical class 1, and gets more accurate estimates $P_h(a)$ for $a=0,1$. This benefits from our model-free learning design. 

\subsection{Sensitivity Analyses and Practical Implications}
\label{subsec:robust-perform-extension}


\noindent\textbf{Sensitivity Analysis. } 
When implementing algorithm-based decision support in practice, there could be different operational barriers. We discuss how to deal with a few possible ones in sequence: data-privacy environment, robustness to hyper parameters, and different treatment effect. First, sharing health data across different organizations is extremely challenging due to privacy concerns. 
We design a numerical experiment in which data-sharing is restricted to aggregate statistics, and compare our mode-free algorithm with the clustering method adapted to this data-privacy environment (contextual methods no longer work here). Our algorithm significantly outperforms the other; see details in Appendix~\ref{app:data-privacy}. 
Second, the difference gap $\Delta$ is used as a tuning parameter for implementing the data-pooling algorithm. We run a robustness check and show that, without much tuning, our algorithm outperforms others; see Appendix~\ref{subsection:robust_check}. 
Finally, in~\ref{app:diff-treat}, 
we construct synthetic experiments in which the treatment effect is different and show that our data-pooling method continues to enjoy more robust performance than contextual methods.

\noindent\textbf{Practical Insights. }
Via the case study, we show that personalized intervention, rather than the one-size-fits-all policy, is necessary to improve patient outcomes, substantiating earlier analytical results. However, using only target data suffers greatly from the small-sample issue at the beginning of learning. In a data-limited environment, our algorithm can effectively facilitate the learning of new individuals' health trajectories and intervention strategies by properly pooling historical data. Even though the pooling idea is intuitive, accurately calculating the pooling weights in real time is unrealistic via rule-of-thumb or high-level medical intuition, particularly in a highly uncertain environment. This necessitates our sophisticated pooling development.


A somewhat ``surprising'' finding is that our algorithm developed under the model-free framework can achieve better performance than contextual-based methods, even though our algorithm does not use detailed contextual information. Importantly, our algorithm has even greater benefits than these methods when the target and historical patients are significantly different. Conventional wisdom usually suggests pooling data by similar features as in contextual or clustering methods, and a regression model is often in place to capture this similarity. 
The implicit assumption is that historical patients and new patients react to the treatment in a similar way as long as their features are similar. We show both analytically and numerically that when this assumption does not hold, the performance of conventional methods deteriorates significantly since the imposed parametric model can deviate from the true model. In contrast, our data-pooling method does not need to rely on such parametric assumptions and, hence, is robust to model misspecifications. This suggests that practitioners should consider model-free methods like ours when the underlying patients are highly heterogeneous, and historical data cannot be directly extended to a new environment.

Finally, the model-free nature of our algorithm has the built-in benefit of only requiring the sharing of aggregate statistics. This has significant implications for healthcare researchers and policy makers in the public sectors.
Healthcare researchers usually have access to many public datasets with aggregate-level data, but this has traditionally had limited value for personalized intervention/treatment. Our algorithm allows health organizations and researchers to augment their own datasets -- which may be detailed enough but limited in sample size -- with public datasets or published studies to improve the outcomes of individuals whom they manage. This was particularly useful when different interventions (for managing the same disease/condition) were used in different regions in the past. Our data-pooling algorithm allows one to quickly identify a smaller set of effective interventions in a new environment without the need to try all available options. Moreover, concerns about Private Health Information (PHI) make organizations reluctant to share individual-level data. The algorithm developed in this paper alleviates this major concern. It would make data owners who are concerned about PHI more willing to share aggregate statistics, which provides the basis for public policy makers to encourage organizations to share their aggregate data to facilitate population health outcomes for the broader healthcare community. This could further increase the volume of publicly available data, creating a positive feedback loop. 


\vspace{-0.1in}
\section{Conclusion and Future Work}
\label{sec:conclu}

In this paper, we study personalized preventative intervention planning as a multi-stage decision problem in the online learning setting, with unknown rewards and transitions. Via both analytical and numerical results, we show that managing a panel of heterogeneous patients is highly nontrivial and cannot rely solely on heuristics or human experience. Algorithmic-based, personalized decision support is necessary. To tackle the small-sample issue in personalization and improve performance over existing learning algorithms, we propose a novel data-pooling algorithm that properly incorporates data collected from other individuals (e.g., historical data, published studies), overcoming shortages associated with existing methods. 
We establish a theoretical performance guarantee and demonstrate the empirical success via a case study. From the technical perspective, our algorithm develops a unified and generalizable approach to convert a broad class of RL algorithms into their data-pooling version. For adoption in practice, we demonstrate how to use this algorithm under various operational constraints, particularly accounting for data privacy, which is often a barrier in the healthcare world. We also investigate the drivers behind the better performance of our algorithm over other benchmarks, providing explainability that is important for adoption by practitioners. We stress that, even though the algorithm works consistently with intuitions, the key is the delicate design of the pooling weight, with our innovation being tied into decision quality.

Regarding future directions, on the theoretical side, the current RL formulation we consider requires interactive engagement with target patients, i.e., on-policy evaluation. We leave the extension to the off-policy setting as an important but challenging future research direction. 
One could also design a more rigorous algorithm for on-policy learning in which target data from different individual classes are used in real time. On the practical side, we are in the process of working with some partner hospitals to design randomized control trials in the readmission context to evaluate the algorithm's performance in a real environment.

\bibliographystyle{pomsref}  \let\oldbibliography\thebibliography
\renewcommand{\thebibliography}[1]{%
	\oldbibliography{#1}%
	\baselineskip10pt 
	\setlength{\itemsep}{4.5pt}
}
\bibliography{ref2.bib}

\newpage
\ECSwitch 

\ECHead{
\begin{center}
Appendix for\\
``Data-pooling Reinforcement Learning for Personalized
Healthcare Intervention''
\end{center}
}

\section{Post-Charge Planning Problem}\label{app:post-charge}
Let $p_{ha}$ denote the transition (readmission) probability in decision epoch $h$ under action $a$. As shown in Figure \ref{fig:trans}, the readmitted state ($S=1$) is absorbing. In each epoch $h$, a cost of $c_a$ units occurs if the intervention is taken, and a  cost of $c_R$ units occurs if the patient transits to the readmitted state. We consider the setting in which $c_R \gg c_a$ and $p_{h0} > p_{h1}$; otherwise, there is no need for interventions.
\begin{figure}[H]
    \centering
    \includegraphics[scale=0.22]{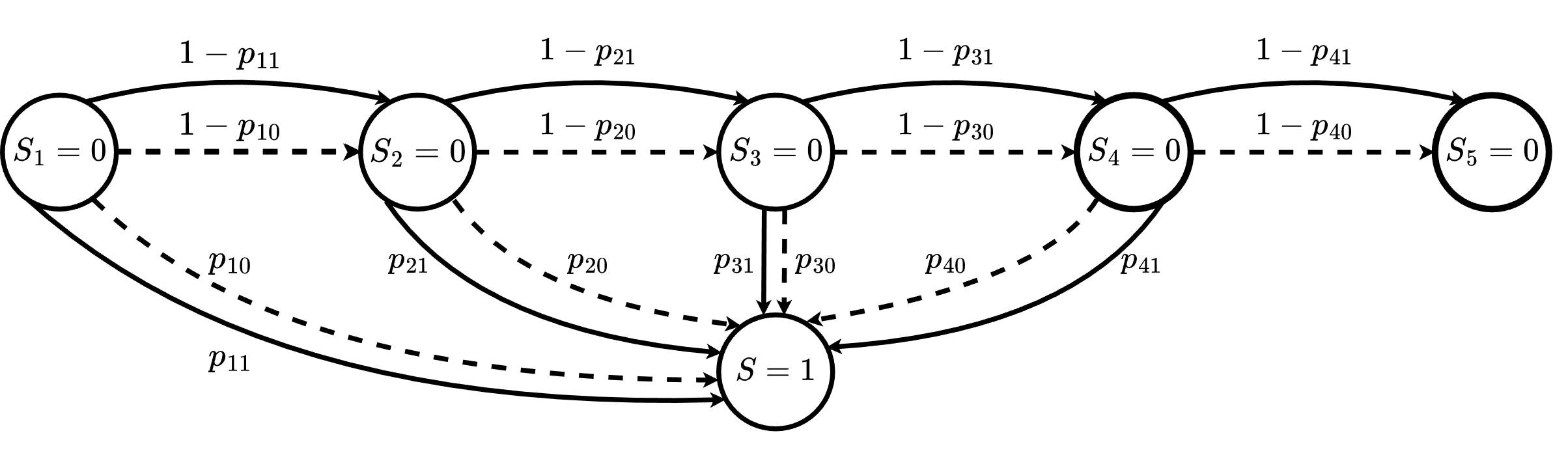}
    \caption{The transition diagram of the MDP model for post-discharge planning}
    \label{fig:trans}
\end{figure}

\section{Proofs of Structural Properties}
\label{app:proof-structural}

\subsection{Temporal Pattern}

\begin{proof}{Proof of Proposition \ref{prop: temporal pattern}}
		It suffices to prove that the optimal intervention policy $\{a_h^*:1\leq h\leq H\}$ must satisfy the following two statements:
		\begin{enumerate}
		    \item $a^*_1=...=a^*_{h_0-1}=0, a^*_{h_0}=...=a^*_{h_c}=1, \text{ for some }1\leq h_0 \le h_c.$
		   \item $a^*_{h_c}=...=a^*_{h_1-1}=1, a^*_{h_1}=...=a^*_{H}=0, \text{ for some }h_c \le h_1 \leq H.$
		\end{enumerate}

		We now prove the first statement by contradiction. In particular, we show that there must not exist $1 \leq h\leq h_c$ such that $a^*_h=1$ and $a^*_{h+1}=0$. Otherwise, by Bellman optimality equation,
		\begin{align*}
			a^*_{h}= 1  \Leftrightarrow \Delta_h > \frac{\beta}{1-V_{h+1}/c_R}, \quad
			a^*_{h+1}=0  \Leftrightarrow \Delta_{h+1} \le \frac{\beta}{1-V_{h+2}/c_R}.
		\end{align*}
		As a consequence, we have
		\begin{align*}
			\Delta_{h}-\Delta_{h+1} 
			& > \frac{\beta(V_{h+1}-V_{h+2})/c_R}{(1-V_{h+1}/c_R)(1-V_{h+2}/c_R)}\\
			& = \frac{\beta(\min\{c_{a}+p_{h+1, 1} (c_{R} -V_{h+2}), p_{h+1, 0} (c_{R} -V_{h+2})\})/c_R}{(1-V_{h+1}/c_R)(1-V_{h+2}/c_R)}\\
			& \stackrel{(i)}{=} \frac{\beta(p_{h+1, 0} (c_{R} -V_{h+2}))/c_R}{(1-V_{h+1}/c_R)(1-V_{h+2}/c_R)}\\
			& = \frac{\beta p_{h+1, 0} }{(1-V_{h+1}/c_R)} \ge \frac{\beta p_{h+1, 0} }{\prod_{i=h+1}^{H}\left(1-p_{i, 1}\right)}~,
		\end{align*}
		where equation (i) follows from the fact that $a^*_{h+1}=0$ and the last inequality follows from Lemma~\ref{lemma: lb value func ub} below. Since $p_{h,a}$ is monotone non-decreasing in $h$ for $1 \leq h \leq h_c - 1$ and $\Delta_h \geq 0$, then
		$$
		\frac{\Delta_h}{p_{h,0}} - \frac{\Delta_{h+1}}{p_{h+1,0}} \geq 
		\frac{\Delta_h}{p_{h+1,0}} - \frac{\Delta_{h+1}}{p_{h+1,0}} \geq  \frac{\beta}{\prod_{i=h+1}^{H}\left(1-p_{i, 1}\right)}
		$$
		This leads to a contradiction with Condition (b) and completes the proof of the first statement.
		
		We prove the second statement also by contradiction. We show that there must not exist $h_c \leq h \leq H$ such that $a^*_h=0$ and $a^*_{h+1} =1$. Otherwise, by Bellman optimality equation,
		\begin{align*}
			a_{h}^*= 0 & \Leftrightarrow \Delta_h \le \frac{\beta}{1-V_{h+1}/c_R}, \quad
			a_{h+1}^*=1  \Leftrightarrow \Delta_{h+1} > \frac{\beta}{1-V_{h+2}/c_R}.
		\end{align*}
		As a consequence,
		\begin{align*}
			\Delta_{h}-\Delta_{h+1}
			& < \frac{\beta(V_{h+1}-V_{h+2})/c_R}{(1-V_{h+1}/c_R)(1-V_{h+2}/c_R)}\\
			& = \frac{\beta(\min\{c_{a}+p_{h+1, 1} (c_{R} -V_{h+2}), p_{h+1, 0} (c_{R} -V_{h+2})\})/c_R}{(1-V_{h+1}/c_R)(1-V_{h+2}/c_R)}\\
			& < \frac{\beta(p_{h+1, 0} (c_{R} -V_{h+2}))/c_R}{(1-V_{h+1}/c_R)(1-V_{h+2}/c_R)}\\
			& = \frac{\beta p_{h+1, 0} }{(1-V_{h+1}/c_R)} \le \frac{\beta p_{h+1, 0} }{\prod_{i=h+1}^{H}\left(1-p_{i, 0}\right)},
		\end{align*}
		where the last inequality follows from Lemma \ref{lemma: lb value func ub}. Since $p_{h,a}$ is monotone non-increasing in $h$ for $h_c \leq h \leq H$ and $\Delta_h \geq 0$, then
		$$
		\frac{\Delta_h}{p_{h,0}} - \frac{\Delta_{h+1}}{p_{h+1,0}} \leq 
		\frac{\Delta_h}{p_{h+1,0}} - \frac{\Delta_{h+1}}{p_{h+1,0}} \leq  \frac{\beta}{\prod_{i=h+1}^{H}\left(1-p_{i, 0}\right)}
		$$
		This leads to a contradiction with Condition (b) and completes the proof.
		\hfill $\Box{}$
	\end{proof}


	\begin{lemma} \label{lemma: lb value func ub}
		If $~V_{H+1} = 0$ and $p_{h,0} \geq p_{h,1}$, then
		$$\left(1-\prod_{i=h}^{H}\left(1-p_{i, 1}\right)\right) c_{R} \leq V_{h} \leq\left(1-\prod_{i=h}^{H}\left(1-p_{i, 0}\right)\right) c_{R}, \forall~1\leq h\leq H.$$
	\end{lemma}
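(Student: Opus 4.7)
The plan is to establish both inequalities simultaneously by backward induction on $h$, working downward from the boundary condition $V_{H+1}=0$, at which both the upper and the lower bounds equal $0$. Conceptually, each bound corresponds to the expected cost under a specific (possibly suboptimal) ``policy'': the upper bound is the expected readmission cost of the never-intervene policy $a\equiv 0$, whereas the lower bound is the readmission-only component of the cost of always intervening and enjoying the more favorable transition $p_{h,1}$. Since $V_h$ is the optimum cost-to-go, it must lie between these two.

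For the inductive step, I would first rewrite the Bellman recursion as
\begin{equation*}
V_h=\min_{a\in\{0,1\}}\Bigl\{c_a\mathbb{1}(a\neq 0)+p_{h,a}c_R+(1-p_{h,a})V_{h+1}\Bigr\}
=\min_{a}\Bigl\{c_a\mathbb{1}(a\neq 0)+V_{h+1}+p_{h,a}(c_R-V_{h+1})\Bigr\}.
\end{equation*}
The upper bound then follows by plugging in $a=0$ (paying no treatment cost) and using the inductive hypothesis $V_{h+1}\leq (1-\prod_{i=h+1}^H(1-p_{i,0}))c_R$: one gets $V_h\leq p_{h,0}c_R+(1-p_{h,0})V_{h+1}\leq c_R\bigl[1-(1-p_{h,0})\prod_{i=h+1}^H(1-p_{i,0})\bigr]$, which telescopes into the claimed upper bound.

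For the lower bound, I would drop the non-negative treatment term $c_a\mathbb{1}(a\neq 0)$, yielding $V_h\geq \min_a\{V_{h+1}+p_{h,a}(c_R-V_{h+1})\}$. The upper bound already proven at level $h+1$ gives $V_{h+1}\leq c_R$, so $c_R-V_{h+1}\geq 0$; therefore the map $a\mapsto p_{h,a}(c_R-V_{h+1})$ is minimized at the smaller of $p_{h,0},p_{h,1}$, which is $p_{h,1}$ by the hypothesis $p_{h,0}\geq p_{h,1}$. Hence $V_h\geq p_{h,1}c_R+(1-p_{h,1})V_{h+1}$, and substituting the inductive lower bound for $V_{h+1}$ and simplifying yields $V_h\geq (1-\prod_{i=h}^H(1-p_{i,1}))c_R$.

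The main obstacle is the coupling between the two directions of the induction: the monotonicity argument that drives the lower bound step requires $c_R-V_{h+1}\geq 0$, and the cleanest way to guarantee this at every level is to carry the upper bound through the induction in parallel rather than treat the two bounds independently. Once that coupling is set up, the rest reduces to the telescoping identity $p+(1-p)\bigl(1-\prod_{i=h+1}^H(1-q_i)\bigr)=1-\prod_{i=h}^H(1-q_i)$ with $q_i=p_{i,0}$ for the upper bound and $q_i=p_{i,1}$ for the lower bound, so no further ideas beyond the induction itself should be needed.
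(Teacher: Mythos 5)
Your proposal is correct and follows essentially the same route as the paper: backward induction on the Bellman recursion, with the upper bound obtained by plugging in $a=0$ and the lower bound by dropping the treatment cost and using $p_{h,0}\ge p_{h,1}$ together with $c_R-V_{h+1}\ge 0$, then telescoping. The only cosmetic difference is that you run one parallel induction carrying both bounds, and you make explicit the nonnegativity of $c_R-V_{h+1}$, which the paper secures by proving the upper bound first and leaves implicit in its lower-bound step.
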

	\begin{proof}{Proof of Lemma \ref{lemma: lb value func ub}}
		We first prove the second inequality by induction. For $h = H$, by Bellman optimality equation,
		\begin{align*}
			V_H & = \min\{c_a + p_{H,1} c_R + (1 - p_{H,1}) V_{H+1}, p_{H,0} c_R + (1 - p_{H,0}) V_{H+1}\}   \leq p_{H,0} c_R,
		\end{align*}
		where we used the condition $V_{H+1} = 0$. Thus, the inequality holds for $h = H.$ Suppose the inequality holds for $h + 1$. Then, by Bellman optimality equation,
		\begin{align*}
			V_h & = \min\{c_a + p_{h,1} c_R + (1 - p_{h,1}) V_{h+1}, p_{h,0} c_R + (1 - p_{h,0}) V_{h+1}\} \\
			& \leq p_{h,0} c_R + (1 - p_{h,0}) V_{h+1} \\
			& \leq p_{h,0} c_R + (1 - p_{h,0}) \left(1-\prod_{i=h+1}^{H}\left(1-p_{i, 0}\right)\right) c_{R} \\
			& =\left(1-\prod_{i=h}^{H}\left(1-p_{i, 0}\right)\right) c_{R},
		\end{align*}
		where the last inequality follows from the inductive assumption. Thus, we can conclude that, if the second inequality holds for $h+1$, it also holds for $h$ and, therefore, it holds for all $h$ by induction.
		
		Next, we prove the first inequality by induction. For $h = H$, by Bellman optimality equation,
		\begin{align*}
			V_{H} & = \min\{c_a + p_{H,1} c_R + (1 - p_{H,1}) V_{H+1}, p_{H,0} c_R + (1 - p_{H,0}) V_{H+1}\} \\
			& \geq \min\{p_{H,1} c_R, p_{H,0}c_R \} \geq p_{H,1} c_R
		\end{align*}
		where we used the condition $V_{H+1} = 0$. Thus, the inequality holds for $h = H.$ Suppose the inequality holds for $h+1$. Then, by Bellman optimality equation, 
		\begin{align*}
			V_h & = \min\{c_a + p_{h,1} c_R + (1 - p_{h,1}) V_{h+1}, p_{h,0} c_R + (1 - p_{h,0}) V_{h+1}\} \\
			& = \min\{c_a + p_{h,1} (c_R - V_{h+1}) + V_{h+1}, p_{h,0} (c_R - V_{h+1}) + V_{h+1}\} \\
			& \geq \min\{p_{h,1} (c_R - V_{h+1}) + V_{h+1}, p_{h,0} (c_R - V_{h+1}) + V_{h+1}\} \\
			& \geq p_{h,1} c_R + (1 - p_{h,1}) V_{h+1} \\
			& \geq \left(1-\prod_{i=h}^{H}\left(1-p_{i, 1}\right)\right) c_{R},
		\end{align*}
		where the last inequality follows from the inductive assumption. By the same induction argument as above, we complete the proof. 
		\hfill$\Box{}$
	\end{proof}

\subsection{Necessity of Estimating Risk Trajectory}
\label{app:pat-pri}

One may argue that some simple rule-of-thumb on patient prioritization could avoid the hassle of estimating the entire risk trajectory. Although there are special cases in which such ranking exists (see Proposition~\ref{prop: different patients} below), the conditions are often too restrictive to hold, and, in general, there is no easy ranking structure for practitioners to use. We first state the special-case proposition and then use two examples to show that prioritizing with respect to neither the absolute risk level nor the treatment effect is optimal.

Without loss of generality, consider two patients with different levels of readmission risks. We use superscripts $u$ and $l$ to denote the patients with high risk and low risk, respectively. 
\begin{proposition}[Prioritizing in Special Cases]
\label{prop: different patients}
	Suppose that for all $1\leq h\leq H$,
		\begin{equation}\label{eq: high and low risk patients}
			\Delta^l_h - \Delta^u_h < \frac{\beta}{\prod_{i=h+1}^{H}\left(1-p^l_{i, 1}\right)} - \frac{\beta}{\prod_{i=h+1}^{H}\left(1-p^u_{i, 0}\right)}, ~\forall 1 \leq h \leq H.
		\end{equation}	
		Then, for all $1\leq h\leq H$, if intervention $a=1$ is optimal for low-risk patient at epoch $h$, $a=1$ must also be optimal for high-risk at the same period, i.e.
		$a^{*, l}_{h}=1\quad \Rightarrow \quad a^{*, u}_{h}=1.$
	\end{proposition}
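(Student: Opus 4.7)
My plan is to translate condition~\eqref{eq: high and low risk patients} into a comparison of two Bellman intervention thresholds, then close the gap by bounding each threshold separately using Lemma~\ref{lemma: lb value func ub}. Applying the Bellman optimality equation to each patient $j\in\{l,u\}$ (exactly as invoked in the proof of Proposition~\ref{prop: temporal pattern}) yields the characterization $a^{*,j}_h=1 \iff \Delta^j_h > \tau_j$, where $\tau_j := \beta/(1-V^j_{h+1}/c_R)$. Thus the claim reduces to showing $\Delta^l_h>\tau_l \Rightarrow \Delta^u_h>\tau_u$. A sufficient condition is the threshold-gap inequality $\tau_l-\tau_u \geq \Delta^l_h-\Delta^u_h$, since then $\Delta^u_h-\tau_u \geq \Delta^l_h-\tau_l > 0$.

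Next, I would apply Lemma~\ref{lemma: lb value func ub} to each patient's MDP. For the low-risk patient, the lower bound on $V^l_{h+1}$ gives $\tau_l \geq \beta/\prod_{i=h+1}^H(1-p^l_{i,1})$. For the high-risk patient, the upper bound on $V^u_{h+1}$ gives $\tau_u \leq \beta/\prod_{i=h+1}^H(1-p^u_{i,0})$. Subtracting,
$$\tau_l-\tau_u \geq \frac{\beta}{\prod_{i=h+1}^{H}(1-p^l_{i,1})} - \frac{\beta}{\prod_{i=h+1}^{H}(1-p^u_{i,0})},$$
which is exactly the right-hand side of hypothesis~\eqref{eq: high and low risk patients}. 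Therefore the condition directly implies $\Delta^l_h-\Delta^u_h < \tau_l-\tau_u$, and combined with $\Delta^l_h>\tau_l$ we conclude $\Delta^u_h>\tau_u$, i.e., $a^{*,u}_h=1$.

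The main subtlety is selecting the correct side of Lemma~\ref{lemma: lb value func ub}'s two-sided bound for each patient: a lower bound on $\tau_l$ needs a lower bound on $V^l_{h+1}$ (so the ``$p^l_{i,1}$'' side of the lemma), while an upper bound on $\tau_u$ needs an upper bound on $V^u_{h+1}$ (so the ``$p^u_{i,0}$'' side). This asymmetric pairing of ``low-risk with intervention'' against ``high-risk without intervention'' is precisely why the hypothesis has its distinctive form, and slipping on either direction would produce a vacuous bound. No induction over $h$ or joint coupling of the two patients' trajectories is needed; the argument is purely algebraic once the Bellman threshold characterization is in hand, and Lemma~\ref{lemma: lb value func ub} applies separately to each patient's MDP under the same standing assumptions used throughout Section~\ref{subs:MDP}.
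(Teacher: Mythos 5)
Your proof is correct and is essentially the paper's argument: the paper proves the same claim by contradiction (assuming $a^{*,l}_h=1$, $a^{*,u}_h=0$ and deriving $\Delta^l_h-\Delta^u_h > \beta/\prod_{i=h+1}^{H}(1-p^l_{i,1}) - \beta/\prod_{i=h+1}^{H}(1-p^u_{i,0})$ via the same Bellman threshold characterization and the same asymmetric application of Lemma~\ref{lemma: lb value func ub}), whereas you phrase it directly through the threshold-gap inequality. The two are trivially equivalent reformulations of one argument.
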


\begin{proof}{Proof of Proposition \ref{prop: different patients}}
We prove the statement by contradiction. We will show that there must not exist $1 \leq h \leq H$ such that $a^{l,*}_h = 1$ and $a^{u,*}_h = 0$. By Bellman optimality equation,
\begin{align*}
			a^{l,*}_{h} = 1 \Leftrightarrow \Delta^l_h > \frac{\beta}{1-V^l_{h+1}/c_R}, \quad
			a^{u,*}_{h} = 0 \Leftrightarrow \Delta^u_h \le \frac{\beta}{1-V^u_{h+1}/c_R}.
\end{align*}
As a consequence,
\begin{align*}
			\Delta^l_h - \Delta^u_h & > \frac{\beta}{1-V^l_{h+1}/c_R} -\frac{\beta}{1-V^u_{h+1}/c_R}\geq \frac{\beta}{\prod_{i=h+1}^{H}\left(1-p^l_{i, 1}\right)} - \frac{\beta}{\prod_{i=h+1}^{H}\left(1-p^u_{i, 0}\right)},
\end{align*}
where the inequality follows Lemma \ref{lemma: lb value func ub}. This leads to a contradiction with         \eqref{eq: high and low risk patients}.\hfill$\Box{}$
\end{proof}
We can obtain a similar sufficient condition for prioritizing low-risk patients via modifying inequality~\eqref{eq: high and low risk patients} accordingly.

\noindent\textbf{Counter-examples. }
As illustrated by the following two examples, the conditions identified in Proposition \ref{prop: different patients} do not hold in general, hence, no simple rule-of-thumb on prioritization in general. 
\begin{example}\label{exmp: counter example 1} We construct a simple example with $H=2$, and we set $c_a = 1$ and $c_R = 10$. 
	The readmission probabilities and treatment effects of the two patients follow:
\begin{align*}
\textrm{High-risk:\quad} & p^u_{1,0} = 0.9, p^u_{1,1} = 0.5 \textrm{ for } h=1; p^u_{2,0} = 0.55, p^u_{2,1} = 0.5 
\textrm{ for } h=2;
\\
\textrm{Low-risk:\quad} & p^l_{1,0} = 0.5, p^l_{1,1} = 0.1 \textrm{ for } h=1; p^l_{2,0} = 0.5, p^l_{2,1} = 0.1 
\textrm{ for } h=2. 
\end{align*} 
We can solve that the optimal policy for the low-risk patient is $\{1,1\}$ and for the high-risk patient is $\{1, 0\}$; i.e., the optimal policy prioritizes the low-risk patient at $h=2$.
	\end{example}

\begin{example}\label{exmp: counter example 3}
This example illustrates that prioritizing over treatment effect is not optimal either. Consider the same setting as in Example~\ref{exmp: counter example 1}, except that the costs $c_a = 1.5$ and $c_R = 10$ ($\beta=0.15$) and the readmission risks change to: 
		\begin{align*}
			\text{High-risk: \quad}& p^u_{1,0} = 0.43, p^u_{1,1} = 0.2 \textrm{ for } h=1; p^u_{2,0} = 0.43,p^u_{2,1} = 0.2 \textrm{ for } h=2;\\ 
			\text{Low-risk: \quad}& p^l_{1,0} = 0.19,p^l_{1,1} = 0.01 \textrm{ for } h=1; p^l_{2,0} = 0.19,p^l_{2,1} = 0.01 \textrm{ for } h=2. 
		\end{align*}
In this case, the high-risk patient also yields more significant treatment effects; i.e. $\Delta^u_h=0.23>0.18 = \Delta^l_h$, for $h=1,2.$ 
However, the high-risk patient is still not prioritized. 
The optimal intervention decisions solved for the low-risk and high-risk patients are $\{1,1\}$ and $\{0,1\}$, respectively. 
\end{example}

\subsection{Model Misspecification}

	\begin{proof}{Proof of Proposition \ref{lemma: linear model for period}}
Since
	\begin{align*}
	p_i^h(a, x)  & = \alpha^h_i p(a,x)
	 = \alpha^h_i(c_{1} a+c_{2} x + c_{3})\\
	& = \alpha^h_i c_{1} a + \alpha^h_i c_{2} x + \alpha^h_i c_{3} = c^h_{1,i} a + c^h_{2,i} x + c^h_{3,i},
	\end{align*} 
	with $c_{j, i}^{h}=\alpha^{h}_{i} c_{j}$, we obtain part (I). 

	 Let $B$ be a random variable corresponding to the class index, i.e., $\PP(B=i)=q_i$ for $i=1,2$. Denote by $f_i(x)$ the density of feature $x$ in class $i$. Then, we can compute the conditional probability
\begin{align*}
    P(B=i|X=x) & 
    = \frac{q_i f_i(x)}{\sum_{j=1}^2 q_j f_j(x)} \triangleq g_i(x), \text{ for }i=1,2.
\end{align*}
Note that under the assumption that the feature distributions $f_1(x)$ and $f_2(x)$ are different, the function $g_i(x)$ varies with $x$ and can not be a constant for $i=1,2$. By definition, 
\begin{align*}
    p^h(a,x) &=  g_1(x)p_1^h(a,x) + g_2(x)p_2^h(a,x) =  \left(g_1(x)\alpha^h_1+g_2(x)\alpha^h_2\right)\left(c_{1}a+c_{2}x+c_{3}\right)\\
    & = \left(\alpha^h_2+g_1(x)(\alpha^h_1-\alpha^h_2)\right)\left(c_{1}a+c_{2}x+c_{3}\right),
\end{align*}
where the second equality follows from the fact that $g_1(x)$ and $g_2(x)$ sum up to one. 
As $\alpha_1^h\neq \alpha_2^h$ and  $g_1(x)$ is not constant in $x$, we can conclude that $p^h(x,a)$ cannot be a linear function of $x$.\hfill$\Box{}$

   

\end{proof}
\section{Additional Details for the Perturbed LSVI Algorithm}
\label{app:additional-PLSVI}

\subsection{Value Iteration and Policy Iteration for Finite-horizon MDP}\label{section:VIPI}
For an MDP model $\mathcal{M}=(H, \mathcal{S}, \mathcal{A}, R, P)$ and a policy $\pi$, we define the state-value function (value-to-go) for $0 \leq h \leq H$ as
$$
V_h(\mathcal{M}, \pi)(s):= \mathbb{E}\left[\sum_{t=h}^H R\left(t, s_t, a_t=\pi\left(s_t,t\right)\right) \mid s_h=s\right],
$$
and the optimal value function
$
V_h(\mathcal{M}, \pi^*):= \max\limits_{\pi}V_h(\mathcal{M}, \pi)$. The state-action value function (Q-function) is defined as
$$
Q_h(\mathcal{M}, \pi)(s, a):= \mathbb{E}\left[\sum_{t=h}^H R\left(t, s_t, a_t=\pi\left(s_t,t\right)\right) \mid s_h=s, a_h=a\right], 
$$
and the optimal Q-function
$
Q_h(\mathcal{M}, \pi^*)(s, a):= \max\limits_{\pi}Q_{h}(\mathcal{M}, \pi)(s, a)$. For a given policy $\pi$, the value functions can be computed recursively as: 
\begin{align*}
 V_h(\mathcal{M}, \pi)(s) 
& =r(h,s,\pi(s,h))+\sum_{s^{\prime} \in \mathcal{S}} P(s'; h, s, a) V_{h+1}(\mathcal{M}, \pi)(s')  ,\\
Q_h(\mathcal{M}, \pi)(s,a) & =r(h,s,a) + \sum_{s^{\prime} \in \mathcal{S}} P(s'; h, s, a)  Q_{h+1}(\mathcal{M}, \pi)(s',\pi(s',h+1)),
\end{align*}
with $V^{H+1}(\mathcal{M},\pi)$ and $Q^{H+1}(\mathcal{M},\pi)\equiv 0$.
For simplicity of notation, we denote the optimal Q-functions by $Q^*(h,s,a):=Q_h(\mathcal{M},\pi^*)(s,a)$.
The Bellman optimality equation characterizes $Q^*$: 
\begin{equation}\label{eq: bellman optimal Q function}
Q^*(h,s,a)  =r(h,s,a) + \sum_{s^{\prime} \in \mathcal{S}} P(s'; h, s, a) \max\limits_{y\in \mathcal{A}} Q^*(h+1,s',y),\quad \text{ with }Q^*(H+1,\cdot,\cdot)\equiv 0.
\end{equation}
Policy iteration (see in Algorithm \ref{alg:policy iteration}) and value iteration (see in Algorithm \ref{alg:value iteration}) are among the most fundamental methods for computing optimal policy of an MDP. The policy iteration algorithm iteratively evaluates and improves the policy until convergence, while the value iteration algorithm computes the optimal Q-function and decides the policy accordingly. 

\vspace{-0.05in}
\begin{algorithm}[htbp]
    \caption{Policy Iteration Algorithm}
    \label{alg:policy iteration}
\begin{algorithmic}
\STATE Step 1. Initialization. Start with initialization $Q(H+1, s, a)=0, \forall s\in\mathcal{S}, a\in\mathcal{A}$
\STATE Step 2. Policy Evaluation.
 \FOR{$h=H,H-1,\ldots,1$}
\STATE 
$$
Q(h,s,a) \leftarrow r(h,s,a) + \sum_{s^{\prime} \in \mathcal{S}} P(s'; h, s, a) Q(h+1,s',\pi_h(s'))
$$
\ENDFOR
\\
\STATE Step 3. Policy Improvement
\quad $\pi_h(s) \leftarrow \arg \max _a Q(h,s,a), \forall 1 \leq h \leq H$
\IF {$\pi_h(s)$ changes} 
    \STATE go to Step 2.
\ELSE
    \STATE stop and return $\pi_h(s)$
\ENDIF
\end{algorithmic}
\end{algorithm}


\vspace{-0.2in}
\begin{algorithm}[htbp]
\caption{Value Iteration Algorithm}
\label{alg:value iteration}
    \begin{algorithmic}
 \STATE  1. Start with initialization $Q(H+1, s, a)=0, \forall s\in\mathcal{S}, a\in\mathcal{A}$
 \STATE 2. Compute Optimal Q-function
 \FOR{$h=H,H-1,\ldots,1$}
\STATE 
$$
Q(h,s,a) \leftarrow r(h,s,a) + \sum_{s^{\prime} \in \mathcal{S}} P(s'; h, s, a) \max\limits_{y\in \mathcal{A}} Q(h+1,s',y)
$$
\ENDFOR
\STATE Step 3. Get optimal policy
\FORALL{$s \in \mathcal{S}, 1 \leq h \leq H$}
\STATE $$\pi_h(s) \leftarrow \arg \max _a Q(h,s,a)$$
\ENDFOR
    \end{algorithmic}
\end{algorithm}

 
\subsection{The Least-Squares Value Iteration(LSVI)}\label{section:LSVI}
The original LSVI algorithm was proposed by \cite{bradtke1996linear}. For a given target MDP $\mathcal{M}$ and its generated target data $\mathcal{D}$, LSVI uses a linear approximation with feature function $\phi(s,a)$ and parameter $\beta_h\in \mathbb{R}^{K}$, which is estimated via least squares method, to approximate the value function for each state-action pair. Specifically, we set the optimal Q-function $Q^* (h, s,a) \approx \beta_h \cdot \phi(s,a)\equiv Q^{\beta_h}(s,a)$. 
Given target data $\mathcal{D}_h$ in decision epoch $h$, we learn the coefficients $\beta_h$ via minimizing the empirical temporal-difference (TD) loss function: 
\begin{equation}
\mathcal{L}(\beta_h; \hat{\beta}_{h+1}, \mathcal{D}_h) 
= \sum_{i\in\mathcal{D}_h} \left(r_i + \max_{y\in\mathcal{A}} Q^{\hat{\beta}_{h+1}}(s'_i, y) - \beta_h\cdot \phi(s_i, a_i) \right)^2,
\label{eq:convent-LSVI}
\end{equation}
where $s_i'$ is the state observed in the next decision epoch after data point $i$ and $Q^{\hat{\beta}_{h+1}}(s,a)=\hat{\beta}_{h+1} \cdot \phi(s,a)$ is the estimated value of $Q^*(h+1,s,a)$ for epoch $h+1$, with least square estimator $\hat{\beta}_{h+1}$.
The TD loss in \eqref{eq:convent-LSVI} function comes from the Bellman optimality equation \eqref{eq: bellman optimal Q function}. 

In the tabular case, the feature functions become indicators for $(s,a)$ pairs and the true value of parameter $\beta_{h,s,a}=Q^*(h,s,a)$. Then, for each $(h,s,a)$ tuple, the solution for \eqref{eq:convent-LSVI} yields
$$\hat{\beta}_{h,s,a} = \bar{r}(h,s,a) + \sum_{s'}\bar{P}(s';s,a)\max_{y\in\mathcal{A}}\hat{\beta}_{h+1,s',y},$$
or equivalently,
$$\hat{Q}(h,s,a) = \bar{r}(h,s,a) + \sum_{s'}\bar{P}(s';s,a)\max_{y\in\mathcal{A}}\hat{Q}(h+1,s',y),$$
where $\bar{r}$ and $\bar{P}$ are the empirical averages (MLE) given by 
\begin{align*}
  \bar{r}(h,s,a)   = \frac{ \sum_{i\in\mathcal{D}_h}r_i\cdot 1\left\{\left(s_i, a_{i}\right)=(s, a)\right\} }{\sum_{i\in\mathcal{D}_h} 1\left\{\left(s_i, a_{i}\right)=(s, a)\right\} }, ~
 \bar{P}(s';h, s, a)=\frac{ \sum_{i\in\mathcal{D}_h} 1\left\{\left(s_{i}, a_{i}, s'_{i}\right)=\left(s, a, s^{\prime}\right)\right\}}{\sum_{i\in\mathcal{D}_h} 1\left\{\left(s_{i}, a_{i}\right)=\left(s, a\right)\right\}}.  
\end{align*}
\subsection{UCB-based VI}

In UCB-based algorithm, given the history up to iteration $t$, the perturbation term is deterministic and is proportional to the confidence radius of the value function. Specifically, 
\begin{equation}
w^t(h,s,a)=\min\Big(H-h+1,~ L(H,S,A)\varepsilon_V^H\big( n_t(h,s,a)\big) 
\Big),
\label{eq:UCB-perturb-term}
\end{equation}
where $L(H,S,A)$ is some constant depending on the number of horizons $H$, state space size $S$ and action space size $A$, and $n_t(h,s,a)$ the number of observations in $(h,s,a)$ by the $t$-th iteration. 
Intuitively, the perturbation term is injected to $(h,s,a)$ tuples that are visited less frequently (sampled less) to encourage exploration. UCB-based VI algorithm has many different versions, e.g., UCRL \citep{ortner2007logarithmic}, UCRL2 \citep{jaksch2010near}, and UCBVI \citep{azar2017minimax}. 

\subsection{RLSVI}

In RLSVI, a random Gaussian noise is injected into the estimated value functions from the target data. \cite{osband2016generalization} and \cite{russo2019worst} show that this perturbation is equivalent to sampling an MDP, whose value function is drawn from some Gaussian distribution centered around the empirical mean of the true MDP of the target class (i.e., the maximum likelihood estimator). 
For the tabular representation, \citet{russo2019worst} suggested using the perturbation term 
\begin{equation} 
w_t(h,s,a)=\varepsilon_V^H(n_t(h,s,a))\cdot\xi_t(h,s,a),\quad \xi\stackrel{i.i.d.}{\sim}N(0,SH).
\label{eq:Russo-Q-estimator}
\end{equation}
This perturbation term is particularly important for finite-horizon problems, in which the variance (in estimating the value function) propagates when we do backward induction; see more discussion in \cite{osband2016generalization}. 
The RLSVI algorithm resembles the Thompson Sampling (TS) algorithm for bandit problems in the sense that getting the optimal policy from the sampled MDP is similar to picking the optimal arm with respect to the sampled parameters from the posterior distribution in bandit problems via TS. 


\section{Proof of the Main Results}
\label{app:proof-main}

This section provides the proof for Theorem \ref{thm: data-pooling estimator} and Theorem \ref{thm: main}. In Section \ref{subsec: connectiong regret to CI},  we introduce the formal regret analysis framework (debriefed in Section \ref{subsec:regret-framework} of the main paper) and establish a regret upper bound for the (no-pooing) perturbed LSVI algorithm in terms of the Hoeffding confidence radii. In Section~\ref{subsec: CI for data-pooling estimators}, we analyze the confidence radii for the data-pooling estimator as a function of the weight $\lambda$ and optimize $\lambda$ by minimizing the confidence radii, thus completing the proof of Theorem \ref{thm: data-pooling estimator}. Based on these results, we are able to connect the regret bound of Algorithm \ref{Alg:DP-LSVI} with the data-pooling weight $\lambda$, and hence reduce the regret bound via the properly chosen $\lambda$. In Section~\ref{subsec: regret bound for data-pooling RL}, we formally state our main results on the regret bound of Algorithm \ref{Alg:DP-LSVI} and complete the proof of Theorem \ref{thm: main}. In Section \ref{subsec: extension-to-multi-group}, we extend the main results to a more general setting, in which the historical data are collected from multiple sources.

\noindent\textbf{Summary of Notations.}
Before the proof, we summarize the notations here.

\begin{itemize}
	\item $\mathcal{M}=(r, P)$: true model of the target class. 
	\item $\tilde{\mathcal{M}}_t = (\tilde{r}_t, \bar{P}_t)$: model estimated with the perturbed rewards in iteration $t$.
		\item $\pi^*$: the optimal policy under the true model $\mathcal{M}$.
	\item $\pi_t$: the policy learned in iteration $t$.
	\item $V(\mathcal{M},\pi)$: value function of model $\mathcal{M}$ under policy $\pi$. 
	\item $\tilde{V}_{t,h}$: value-to-go from horizon $h$ of model $\tilde{\mathcal{M}}_t $ under policy $\pi_t$.

	
	\item  $\{\varepsilon_V^H(n),\varepsilon_P^H(n),\varepsilon_R^H(n)\}$: Hoeffding confidence radii. 
	
	\item $\{\varepsilon_V^{DP}(n,N),\varepsilon_P^{DP}(n,N),\varepsilon_R^{DP}(n,N)\}$: data-pooling confidence radii. 
\end{itemize}

\subsection{Connecting Regret Bound with the Confidence Radii}
\label{subsec: connectiong regret to CI}

In this part, we explain the framework of regret analysis for perturbed LSVI algorithms and specify the regret bound given in~\eqref{eq: regert informal} in the main paper, connecting the bound with the confidence radii.

Recall that $\bar{P}_t(\cdot; h,s,a)$ and $\bar{r}_t(h,s,a)$ are estimated values of the transition probabilities and mean rewards in iteration $t$ of the perturbed LSVI algorithm without data pooling, and $n_t(h,s,a)$ is the number of observed target data samples at $(h,s,a)$ by iteration $t$. The Hoeffding confidence radii $\{\varepsilon_{R}^H(n),\varepsilon_{P}(n)^H, \varepsilon_{V}^H(n)\}$ satisfying  \eqref{eq: CI fixed n} with fixed sample size $n$ can be applied to random and path-dependent sample sizes $n_t(h,s,a)$ as specified in Lemma \ref{lmm: CI definition} below.

\begin{lemma}[Hoeffding Confidence Radii]\label{lmm: CI definition} 
	Let $\{\varepsilon_V^H(n),\varepsilon_P^H(n),\varepsilon_R^H(n): 1\leq n\leq T\}$ be the Hoeffding confidence radii with confidence parameter $\delta$ that satisfy \eqref{eq: CI fixed n} for fixed sample size, i.e.
	\begin{align}\label{eq: RL Hoeffding confidence radius}
		\varepsilon_R^H(n) &:=\sqrt{\log(2HSAT/\delta)/2n}~,\notag\\
		\varepsilon_P^H(n) &:= \sqrt{2(S\log(2)+\log(2HSAT/\delta))/n}~,\\
		\varepsilon_V^H(n) &:=H\sqrt{\log(2HSAT/\delta)/2n}~,\notag
	\end{align}
	for $n\geq 1$. For $n=0$, i.e., no observation, the trivial bounds are $\varepsilon_R^H(0)=1, \varepsilon_P^H(0)=2$, and $\varepsilon_R^V(0)=H$. Then, the following inequalities hold:
	\begin{enumerate}[label=(\roman*)]
		\item $\PP\Big( |\bar{r}_t(h,s,a)-r(h,s,a)+\langle\bar{P}_t(\cdot;h,s,a)-P(\cdot;h,s,a),V^*_{h+1}\rangle|\leq\varepsilon_V^H \big( n_t(h,s,a) \big), \forall~1\leq t\leq T, 1\leq h\leq H, s\in \mathcal{S}, a\in\mathcal{A}  \Big) \geq 1-\delta$. 
		\item $\PP\Big( \|\bar{P}_t(\cdot; h,S_h^t,A_h^t )-P(\cdot;h,S_h^t,A_h^t)\|_1\leq\varepsilon_P^H\big( n_t(h,S_h^t,A_h^t)\big), \forall~ 1\leq t\leq T, 1\leq h\leq H \Big) \geq 1-\delta$.
		\item $\PP\Big( |\bar{r}_t(h,S_h^t,A_h^t)-r(h,S_h^t,A_h^t)|\leq\varepsilon_R^H\big( n_t(h,S_h^t,A_h^t) \big),\forall~ 1\leq t\leq T, 1\leq h\leq H \Big) \geq 1-\delta$.
	\end{enumerate}
\end{lemma}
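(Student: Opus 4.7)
\medskip

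\noindent\textbf{Proof proposal for Lemma~\ref{lmm: CI definition}.}

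The plan is to establish each inequality by a two-step argument: first, prove a concentration bound for a \emph{fixed} sample size $n$ using classical tail inequalities; second, convert the fixed-$n$ bound into one that holds uniformly over the random and path-dependent sample size $n_t(h,s,a)$ by a union bound over $(h,s,a)$ and $n\in\{1,\dots,T\}$. Since $n_t(h,s,a)\leq T$ deterministically, if the concentration event holds for every $n\in\{1,\dots,T\}$, it automatically holds when we plug in $n_t(h,s,a)$.

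For part (iii), given any fixed $(h,s,a)$ and fixed $n\geq 1$, the empirical mean $\bar{r}_t(h,s,a)$ is an average of $n$ i.i.d.\ bounded random variables in $[0,1]$, so Hoeffding's inequality gives a sub-Gaussian tail with parameter $1/(2n)$; choosing the radius $\varepsilon_R^H(n)=\sqrt{\log(2HSAT/\delta)/2n}$ makes each individual failure probability at most $\delta/(HSAT)$. For part (i), I would apply Hoeffding to the $n$ i.i.d.\ terms $R_i-r(h,s,a)+V^*_{h+1}(S_i')-\langle P(\cdot;h,s,a),V^*_{h+1}\rangle$; because $V^*_{h+1}$ is deterministic (it depends only on the unknown but fixed MDP) and bounded by $H$, each summand has range at most $H$, yielding the $H^2/(2n)$ sub-Gaussian parameter and the stated $\varepsilon_V^H(n)=H\sqrt{\log(2HSAT/\delta)/2n}$. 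For part (ii), the Hoeffding bound on a scalar is insufficient for the $L_1$ deviation of an empirical distribution over $S$ states; I would instead invoke the Bretagnolle–Huber–Carol inequality, $\PP(\|\bar{P}_n-P\|_1\geq\epsilon)\leq 2^S\exp(-n\epsilon^2/2)$, and solve for $\epsilon$ to obtain $\varepsilon_P^H(n)=\sqrt{2(S\log 2+\log(2HSAT/\delta))/n}$, which again makes each fixed-$n$ failure probability at most $\delta/(HSAT)$.

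Once the fixed-$n$ bounds are in hand, a union bound over $(h,s,a)\in\mathcal{H}\times\mathcal{S}\times\mathcal{A}$ and over $n\in\{1,\dots,T\}$ (so at most $HSAT$ events per part) yields a uniform bound with overall failure probability at most $\delta$. Because $n_t(h,s,a)\in\{0,1,\dots,T\}$ is a measurable function of the history, the uniform-in-$n$ event immediately implies the stated statement with the random $n_t(h,s,a)$ substituted in; the boundary case $n=0$ is covered by the trivial radii $\varepsilon_R^H(0)=1$, $\varepsilon_P^H(0)=2$, $\varepsilon_V^H(0)=H$, which hold deterministically given the reward and value bounds.

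The step that needs the most care is the adaptivity of $n_t(h,s,a)$: the samples collected at a given tuple $(h,s,a)$ arise from an algorithm-chosen schedule, so they are not simply the first $n$ draws of an exogenous i.i.d.\ sequence. The standard workaround—which I would adopt—is a ``ghost sample'' coupling: for each $(h,s,a)$, let $\{Z_k^{h,s,a}\}_{k\geq 1}$ be an i.i.d.\ sequence from the true reward/transition kernel at $(h,s,a)$, and let the $k$-th actual visit to $(h,s,a)$ consume $Z_k^{h,s,a}$. Then $\bar{r}_t(h,s,a)$ and $\bar{P}_t(\cdot;h,s,a)$ are exactly the empirical averages of the first $n_t(h,s,a)$ elements of this i.i.d.\ sequence, so the fixed-$n$ bounds above apply to each prefix, and the union bound over $n$ absorbs the adaptivity of $n_t(h,s,a)$ without requiring a martingale argument.
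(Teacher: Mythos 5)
Your proposal is correct and follows essentially the same route the paper intends: the paper omits this proof and defers to the standard recipe of applying Hoeffding-type and $L_1$-deviation (Weissman et al.) concentration bounds for each fixed $n$ and $(h,s,a)$, taking a union bound over the at most $HSAT$ events, and handling the adaptive sample counts $n_t(h,s,a)$ via the ``stack-of-rewards'' probability model of \cite{russo2019worst} -- which is precisely your ghost-sample coupling. The only cosmetic difference is that your part~(i) bounds each visit's contribution $R_i+V^*_{h+1}(S_i')$ as a single range-$H$ term, whereas the paper's analogous derivation (for the pooled estimators) treats the reward and next-state coordinates separately with bounded differences $1$ and $H-1$; both yield the stated radius $\varepsilon_V^H(n)$.
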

The proof of Lemma \ref{lmm: CI definition} follows the standard recipe, i.e., applying concentration inequalities to  a so-called ``stack-of-rewards" probability model for tabular reinforcement learning. We omit the proof here and refer the readers to Appendix~A of~\cite{russo2019worst} and the references therein for more discussions.

\begin{remark}
	Inequalities (conditions) (ii) and (iii) are commonly used in the analysis of bandit and tabular RL algorithms to bound the ``on-policy" estimation errors. Inequality (condition) (i) will be used later to bound exploration cost in the proof of Proposition \ref{lmm: RL exploartion to estimation error}.
\end{remark}

A key component in the design of perturbed LSVI algorithm is the perturbation term $w_t(h,s,a)\equiv \varepsilon_{V}^H(n_t(h,s,a))\cdot\xi_t(h,s,a)$. 
On the one hand, to ensure enough perturbation, $w_t(h,s,a)$ should not be too small; on the other hand, to reduce exploration cost, $w_t(h,s,a)$ should not be too large. We impose the following assumption on the perturbation terms. 
\begin{assumption}[Perturbation Terms]\label{assmpt: perturbed terms}
	There exist  constants $p_0\in (0,1] ,\delta>0$ and $\bar{W}>0$ such that, 
	\begin{enumerate}
		\item[(iv)] For any sequence of non-negative real numbers $\varepsilon_{1},...,\varepsilon_{HS}$, $$\PP\left(\sum_{i=1}^{HS} (\xi_{i}-1)\cdot \varepsilon_{i}>0\right)\geq p_0>0,\text{ with } \xi_i\stackrel{i.i.d.}{\sim} \xi.$$
		\item[(v)] $\PP(\max_{h,s,a,t}|\xi_t(h,s,a)|\leq \bar{W})\geq 1-\delta.$
	\end{enumerate}
\end{assumption}
Following Lemma 6 of \cite{russo2019worst}, condition (iv) ensures sufficient exploration in perturbed LSVI algorithms. In particular, one can verify that for \texttt{UCB-based} RL, condition (iv) holds for any constant $\xi\geq 1$ and $p_0=1$; for the RLSVI algorithm (\texttt{TS-based} RL), it holds when $\xi$ is a normal r.v. with mean 0 and variance $HS$ and $p_0=\Phi(-1)$, where $\Phi(\cdot)$ is the CDF of a standard normal distribution. The constant $\bar{W}$ in condition (v) will be used to bound the exploration cost in the regret analysis. 

Now, we are ready to prove a regret bound for perturbed LSVI algorithms in terms of the confidence radii and the perturbation terms. Although the main idea of the analysis is based on \cite{russo2019worst}, we prove our version in this paper to \emph{unify} all the results within the perturbed LSVI framework, covering both the \texttt{TS-based} version (as in \cite{russo2019worst}) and the \texttt{UCB-based} version. This unified result also provides insights into how one should design the data-pooling estimator.

\begin{proposition} 
	\label{lmm: RL exploartion to estimation error} 
	Suppose a perturbed LSVI algorithm with $T$ iterations satisfies conditions (i) to (v) for constants $\delta>0$, $p_0\in (0,1)$, $\bar{W}>0$ and the Hoeffding confidence radii.
	Then,   
	\begin{equation}
		\label{eq: PLSVI general regret bound}
		Regret(T) \leq (1+2p_0^{-1})HSA\sum_{n=1}^{\lceil  T/SA \rceil}\left(\varepsilon^H_R(n)+H(\bar{W}+1)\varepsilon^H_P(n)+\bar{W}\varepsilon_{V}^H(n)\right)+4HT\delta.
	\end{equation}
\end{proposition}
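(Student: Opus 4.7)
The plan is to follow the optimism-via-randomization template of \cite{russo2019worst}, but \emph{unified} across the full perturbed LSVI family rather than specialized to the Gaussian/RLSVI case.

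\textbf{Good event and decomposition.} First I would define the good event $E$ as the intersection of the four high-probability events in conditions (i)--(iii) and (v); a union bound gives $\PP(E^c)\le 4\delta$. Since rewards lie in $[0,1]$, per-iteration regret is at most $H$, so the contribution off $E$ is at most $4HT\delta$, yielding the additive error term in the bound. On $E$, I would split the instantaneous regret by inserting the perturbed value $\tilde V_{t,1}(s_1)$:
\[ V(\mathcal M,\pi^*)-V(\mathcal M,\pi_t) \;=\; \underbrace{V(\mathcal M,\pi^*)-\tilde V_{t,1}(s_1)}_{\text{pessimism gap}} \;+\; \underbrace{\tilde V_{t,1}(s_1)-V(\mathcal M,\pi_t)}_{\text{on-policy error}}. \]

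\textbf{On-policy error.} Writing $n_t^h := n_t(h,S_h^t,A_h^t)$, a standard telescoping Bellman-error expansion along the trajectory induced by $\pi_t$ (using $V\le H$ to bound $\|V_{h+1}\|_\infty$ when propagating transition-probability error) gives on $E$
\[ \big|\tilde V_{t,1}(s_1) - V(\mathcal M,\pi_t)\big| \;\le\; \sum_{h=1}^H\Big[\varepsilon_R^H(n_t^h) + H\,\varepsilon_P^H(n_t^h) + \bar W\,\varepsilon_V^H(n_t^h)\Big], \]
where conditions (ii) and (iii) handle the one-step errors in $\bar P_t$ and $\bar r_t$, and condition (v) replaces $|w_t|=\varepsilon_V^H(n_t^h)|\xi_t|$ by $\bar W\,\varepsilon_V^H(n_t^h)$.

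\textbf{Pessimism via stochastic optimism.} For the pessimism gap, I would expand $\tilde V_{t,1}(s_1) - V^*(s_1)$ backwards along the \emph{optimal} trajectory, lower-bounding $\max_y \tilde Q_t(h+1,\cdot,y) \ge \tilde Q_t(h+1,\cdot,\pi^*)$ at each step so that the induction stays linear in the perturbations. On $E$, condition (i) bounds each one-step centered Bellman error $\bar r_t - r + \langle \bar P_t - P, V_{h+1}^*\rangle$ uniformly by $\varepsilon_V^H(n_t(h,s,a))$, so the expansion reduces to a linear functional of the perturbations of the form $\sum_{(h,s)}\big(\xi_t(h,s,\pi^*(s,h))-1\big)\,\varepsilon_V^H(n_t(h,s,\pi^*(s,h)))$. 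Condition (iv), applied with these $HS$ non-negative coefficients, guarantees this functional is positive with probability at least $p_0$ conditional on $\mathcal F_{t-1}$, so $\PP(\tilde V_{t,1}(s_1)\ge V^*(s_1)\mid\mathcal F_{t-1})\ge p_0$. A conditional-expectation trick (cf.\ Lemma 4 of \cite{russo2019worst}) then converts this stochastic optimism into $\EE[(V^*(s_1)-\tilde V_{t,1}(s_1))^+\mid\mathcal F_{t-1}]\le (2/p_0)\,\EE[|\tilde V_{t,1}(s_1)-V(\mathcal M,\pi_t)|\mid\mathcal F_{t-1}]$, which produces the $(1+2/p_0)$ prefactor.

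\textbf{Pigeonhole and main obstacle.} Summing over $t=1,\dots,T$ and grouping visits by $(h,s,a)$ triples: the total visit count across all triples is exactly $HT$, spread over $HSA$ triples, and each of $\varepsilon_R^H,\varepsilon_P^H,\varepsilon_V^H$ is non-increasing in $n$. A rearrangement argument then gives $\sum_{t,h} g(n_t^h)\le HSA\sum_{n=1}^{\lceil T/SA\rceil}g(n)$ for each radius $g$; combining the three contributions yields the stated bound. The main obstacle I expect is the stochastic-optimism step: \cite{russo2019worst} exploit the specific Gaussian law of $\xi$ (and its sign-symmetry), whereas here we only have the abstract one-sided-positivity guarantee in condition (iv). The delicate point is to carry out the backward induction past the $\max_y\tilde Q_t(h+1,\cdot,y)$ operator without destroying the linearity in $\{\xi_t\}$ required to invoke (iv); the trick is that freezing the policy to $\pi^*$ inside the max can only decrease $\tilde V_{t,1}$, so the inequality goes in the right direction, and the resulting object is a genuine linear combination of the $\xi_t$'s with non-negative weights as required by (iv).
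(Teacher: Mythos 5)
Your overall architecture is the same as the paper's: decompose the per-iteration regret into a pessimism/optimism gap plus an on-policy error, use condition (i) together with the linearity of the perturbation in the $\xi_t$'s and condition (iv) to get stochastic optimism with probability $p_0$ (the paper invokes Lemma~6 of Russo for the conversion to the $(1+2p_0^{-1})$ prefactor, exactly as you sketch), collect the off-event contribution as $4HT\delta$, and finish with the pigeonhole over visited $(h,s,a)$ triples. Your handling of the optimism step, including the worry about pushing the induction past the $\max_y$ operator by freezing $\pi^*$, matches the paper's argument.

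There is, however, a genuine gap in your on-policy error step. You bound the transition-error term using ``$V\le H$ to bound $\|V_{h+1}\|_\infty$,'' but the value function that actually appears in the simulation-lemma expansion is the \emph{perturbed} value-to-go $\tilde V_{t,h+1}$, not the true one. This is forced on you: to use conditions (ii)--(iii), which hold only at the actually visited pairs $(h,S_h^t,A_h^t)$, and to make the pigeonhole counting over real visits work, the expectation in the expansion must be taken along the trajectory of the true MDP $\mathcal{M}$ under $\pi_t$; the value-difference identity then puts $\tilde V_{t,h+1}$ (the value under $\tilde{\mathcal{M}}_t$) inside the inner product $\langle \bar P_t - P,\cdot\rangle$. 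Since $\tilde V_{t,h+1}$ contains the injected perturbations, it is not bounded by $H$; on the good event one only gets $\|\tilde V_{t,h+1}\|_\infty \le H + H\max_{h,s,a}|w_t(h,s,a)| \le H + \bar W H^2$ via condition (v). This is precisely why the stated bound carries a $\bar W$-dependent coefficient on $\varepsilon_P^H$ (the paper's proof produces the factor $H^2\bar W + H$ multiplying $\varepsilon_P^H$, reported as $H(\bar W+1)$ in the proposition), whereas your per-step bound $\varepsilon_R^H + H\varepsilon_P^H + \bar W\varepsilon_V^H$ silently claims something strictly stronger that your argument does not establish. The fix is local --- replace $\|V_{h+1}\|_\infty\le H$ by the good-event bound on $\|\tilde V_{t,h+1}\|_\infty$ --- but as written the on-policy error inequality does not follow.
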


\begin{proof}{Proof of Proposition \ref{lmm: RL exploartion to estimation error}}
	
	Recall that $\text{Regret}(T)=V(\mathcal{M},\pi^*)T-\EE[V(\mathcal{M},\pi_t)]$, where $\pi^*$ is the (unknown) true optimal policy. For convenience, we define $\rho(t)$ as the regret accumulated in each iteration $1\leq t\leq T$, which can be calculated and bounded as 
	\begin{align}
		\rho(t) &= V(\mathcal{M},\pi^*)-\mathbb{E}[V(\mathcal{M}, \pi_t)] 
		\nonumber\\ 
		& = \mathbb{E}\Big[\left(V(\mathcal{M},\pi^*)-\mathbb{E}[V(\tilde{\mathcal{M}}_t, \pi_t)|\mathcal{F}_{t-1}]\right) + \left(\mathbb{E}[V(\tilde{\mathcal{M}}_t, \pi_t)-V(\mathcal{M}, \pi_t)|\mathcal{F}_{t-1}]\right)\Big]
		\nonumber\\
		& \leq \mathbb{E}\Big[\underbrace{\left(V(\mathcal{M},\pi^*)-\mathbb{E}[V(\tilde{\mathcal{M}}_t, \pi^*)|\mathcal{F}_{t-1}]\right)}_{I_1:\text{ exploration}} + \underbrace{\left(\mathbb{E}[V(\tilde{\mathcal{M}}_t, \pi_t)-V(\mathcal{M}, \pi_t)|\mathcal{F}_{t-1}]\right)}_{I_2:\text{ estimation error} + \text{perturbation}}\Big].
		\label{eq:PLSVI-regret-decomp}
	\end{align}
	The inequality above follows from the fact that $\pi_t$ is chosen from the greedy-optimal under the estimated model $\tilde{\mathcal{M}}_t$, so the gap increases when changing to policy $\pi^*$. 
	The regret bound is separated into two terms for the ease of analysis: term $I_1$ captures the difference (when using the same policy $\pi^*$) between the value-to-go in the true model ${\mathcal{M}}$ and the estimated model $\tilde{\mathcal{M}}_t$; term $I_2$ captures a similar difference, between ${\mathcal{M}}$ and $\tilde{\mathcal{M}}_t$ but under the actual policy $\pi_t$ that is applied in each iteration $t$. 

	The $I_1$ term plays a key role in the proof and captures how the perturbation term $\{w_t(h,s,a)\}$ leads to better exploration. To bound $I_1$, we follow the analysis in \cite{russo2019worst} and relate the bound of $I_1$ to that of $I_2$. Below we briefly describe the main ideas and refer the readers to \cite{russo2019worst} for more details. First, conditional on the good event that $$\left\{|\bar{r}_t(h,s,a)-r(h,s,a)+\langle\bar{P}_t(\cdot;h,s,a)-P(\cdot;h,s,a),V^*_{h+1}\rangle|\leq \varepsilon_V^H(n_t(h,s,a)), \forall 1\leq t\leq T, 1\leq h\leq H\right\},
	$$
one can derive that, conditional on $\mathcal{F}_{t-1}$, i.e., the information revealed right before iteration $t$,
	\begin{align*}
		V(\tilde{\mathcal{M}}_t,\pi^*)-V(\mathcal{M},\pi^*)&\geq \EE_{\pi^*,\bar{\mathcal{M}}_t}\left[\sum_{h=1}^H(w_t(h,S_h,A_h)-\varepsilon^H_V(n_t(h,S_h,A_h)))\right]\\
		&= \sum_{h=1}^H\sum_{s\in\mathcal{S}}\Big( \xi_t(h,s,\pi^*(s))-1)\cdot \varepsilon_{V}^H(n_t(h,s,\pi^*(s)) \Big)
		\PP(S_h=s).
	\end{align*}
Note that $\xi_t(h,s,\pi^*(s))\stackrel{i.i.d.}{\sim} \xi$, by condition (iv), we can conclude that  $\PP(V(\tilde{\mathcal{M}}_t,\pi^*)-V(\mathcal{M},\pi^*)>0)\geq p_0>0$. Given condition (i) and applying Lemma~6 in~\cite{russo2019worst}, one can conclude that
	\begin{equation}
		\rho(t)\leq (1+2p_0^{-1})\EE\left[\EE[V(\tilde{\mathcal{M}}_t,\pi_t)-V(\mathcal{M},\pi_t)|\mathcal{F}_{t-1}]\right]+HT\delta.
		\label{eq-app:russo-rho}
	\end{equation}

The remaining task is to bound $I_2$. We apply a representation of $V(\tilde{\mathcal{M}}_t,\pi_t)-V(\mathcal{M},\pi_t)$ that is commonly used in RL literature. Specifically, conditional on $\mathcal{F}_{t-1}$,
	\begin{align}\label{eq: value difference decomposition}
		&V(\tilde{\mathcal{M}}_t,\pi_t)-V(\mathcal{M},\pi_t)\notag\\
		=~&\EE_{\pi_t,\mathcal{M}}\left[\sum_{h=1}^H (\bar{r}_t(h,S_h,A_h)-r(h,S_h,A_h))+w_t(h, S_h,A_h)+\left<\bar{P}_t(h,S_h,A_h)-P(h,S_h,A_h), \tilde{V}_{t,h+1}\right> \right].
	\end{align}
	Here the expectation is taken with respect to the distribution of a trajectory $\{(S_h, A_h)\}_{h=1}^H$ of the target MDP $\mathcal{M}$ under policy $\pi_t$.
Define a good event 
\begin{align*}
\mathcal{E}=&\left\{\|\bar{P}_t(h, S_h^t,A_h^t)-P(h,S_h^t,A_h^t)\|_1\leq \varepsilon^H_P\big( n_t(h,S_h^t,A_h^t) \big),\right.
\\
&~~ |\bar{r}_t(h, S_h^t,A_h^t)-r(h, S_h^t,A_h^t)|\leq \varepsilon_R^H\big( n_t(h,S_h^t,A_h^t) \big),  \forall 1\leq t\leq T, 1\leq h\leq H,
\\
&~~\left. \max_{h,s,a,t}|\xi_t(h,s,a)|\leq \bar{W}\right\}.
\end{align*}
Following conditions (ii), (iii) and (v), $\PP(\mathcal{E})\geq 1-3\delta$. Conditional on $\mathcal{E}$, we have
	\begin{align*}
		V(\tilde{\mathcal{M}}_t,\pi_t)-V(\mathcal{M},\pi_t)
		\leq \EE_{\pi_t,\mathcal{M}}\left[\sum_{h=1}^H \varepsilon^H_R(n_t(h,S_h^t,A_h^t))+\bar{W}\varepsilon_{V}^H(n_t(h,S_h^t,A_h^t))+\varepsilon_P^H(n_t(h,S_h^t,A_h^t))\|\tilde{V}_{t,h+1}\|_\infty \right].
	\end{align*}
	As the rewards are bounded in $[0,1]$, conditional on $\mathcal{E}$, the value function 
	$$\|\tilde{V}_{t,h}\|_\infty\leq H+H\max_{h,s,a}|w_t(h,s,a)|\leq H+\bar{W}H^2 \text{ for all }1\leq h\leq H,$$
	as $|w_t(h,s,a)|\leq \bar{W}\varepsilon_{V}^H(n_t(h,s,a))\leq \bar{W}H$.

Then, we have
	\begin{align*}
		&V(\tilde{\mathcal{M}},\pi_t)-V(\mathcal{M},\pi_t)\\
		\leq~& \EE_{\pi_t,\mathcal{M}}\left[\sum_{h=1}^H \varepsilon^H_{R}(n_t(h,S_h^t,A_h^t))+\bar{W}\varepsilon_{V}^H(n_t(h,S_h^t,A_h^t))
		+(H^2\bar{W}+H)\varepsilon_P^H(n_t(h,S_h^t,A_h^t))\right],
	\end{align*}
	conditional on $\mathcal{E}$. Then, by the pigeonhole principle, we have that, conditional on the good event,
	\begin{align*}
		\sum_{t=1}^T\EE\left[V(\tilde{\mathcal{M}},\pi_t)-V(\mathcal{M},\pi_t)\right]\leq HSA\sum_{n=1}^{\lceil  T/SA \rceil}\left(\varepsilon_R^H(n)+\bar{W}\varepsilon_{V}^H(n)+(H^2\bar{W}+H)\varepsilon_P^H(n)\right),
	\end{align*}
	and via~\eqref{eq-app:russo-rho}, we conclude that
	$$Regret(T)\leq (1+2p_0^{-1})HSA\sum_{n=1}^{\lceil  T/SA \rceil}\left(\varepsilon_R^H(n)+\bar{W}\varepsilon_{V}^H(n)+(H^2\bar{W}+H)\varepsilon_P^H(n)\right)+4HT\delta.$$

Recall that for ease of exposition, we write the regret bound \eqref{eq: regert informal} in the main paper into two parts. Based on the above regret bound, the expressions of the two parts, in terms of the confidence radii, are as follows. 
\begin{align*}
f_1(\varepsilon^H_R(n),\varepsilon^H_P(n)) &= (1+2p_0^{-1})HSA(\varepsilon^H_R(n)+H\varepsilon^H_P(n))+2HSA\delta,
\\
f_2(\varepsilon^H_V(n),\varepsilon_P^H(n),\bar{W}) &= (1+2p_0^{-1})HSA(H^2\bar{W}\varepsilon_P^H(n) + \bar{W}\varepsilon^H_V(n))+2HSA\delta.
\end{align*}
\hfill $\Box{}$ 
\end{proof}

\begin{remark}[Recovering Results in \cite{russo2019worst}] 	
In the RLSVI algorithm, 
the random variable $\xi$ is set to be normal with mean 0 and variance $SH$. If we pick $\delta = (HT)^{-1}$, the Hoeffding confidence radii for value functions becomes $\varepsilon^H_V(n)= H\sqrt{\log(2H^2T^2SA)/2n}=\tilde{O}(H\sqrt{1/n})$. Then, Assumption \ref{assmpt: perturbed terms} holds with $p_0 = \Phi(-1)>0$ and $\bar{W}=\tilde{O}(\sqrt{SH})$, where $\Phi(\cdot)$ is the CDF of a standard normal distribution. Plugging the other two Hoeffding confidence radii $\varepsilon_{R}(n)=\sqrt{\log(2H^2SAT^2)/2n}=\tilde{O}(\sqrt{1/n})$ and $\varepsilon_{P}(n)=\sqrt{2(S\log(2)+\log(2H^2SAT^2))/n}=\tilde{O}(H\sqrt{S/n})$ into \eqref{eq: PLSVI general regret bound}, we recover the regret bound in \cite{russo2019worst}.\footnote{The actual regret bound derived in the paper should be of $\tilde{O}(H^{7/2}S^{3/2}\sqrt{AT})$. The proof for Theorem 1 there missed a $H^{1/2}$ term. }
\end{remark}


\subsection{Confidence Radii of Data-pooling Estimators}
\label{subsec: CI for data-pooling estimators}
In this part, we first derive the confidence radii for the data-pooling estimators as a function of the weight $\lambda$ and choose the optimal data-pooling weight  $\lambda^{DP}$ by minimizing the confidence radii. Then, we compare the minimized confidence radii (referred to as the \emph{data-pooling confidence radii}) with the standard Hoeffding confidence radii, showing that using data-pooling estimators can help reduce confidence radii,  and thus complete the proof of Theorem \ref{thm: data-pooling estimator}.

In the following analysis, we consider a fixed $(h,s,a)$ triple. We assume that there are $n$ random samples from the target data for the reward $R_k\sim R(h,s,a)$ and transition $S_k\sim P(\cdot|h, s,a)$, and there are $N$ random samples $(R^0_1, S^0_1),...,(R^0_N,S^0_N)$ from the historical data. We first analyze the data-pooling confidence radii for mean rewards, through which we derive the data-pooling weight $\lambda_{n,N}^{DP}$ as in Theorem \ref{thm: data-pooling estimator}. Then, we show that using $\lambda_{n,N}^{DP}$ can also reduce the confidence radii for transition probabilities and value functions.

\noindent\textbf{Confidence radii for mean rewards:} 	For any $\lambda\in[0,1]$, we define the mapping $h^n_{\lambda}: [0,1]^{n+N_i}\to \mathbb{R}$ as 
\begin{align*}
	h^n_{\lambda}(\mathbf{x})\triangleq~&h^n_{\lambda}(x_1,x_2,...,x_{n}, x^0_1,...,x^0_{N_i}) =\lambda\sum_{j=1}^{n}\frac{x_j}{n} + (1-\lambda)\sum_{j=1}^{N}\frac{x^0_j}{N}.
\end{align*}
The data-pooling estimate corresponding to weight $\lambda$ can be written as 
$$\hat{r}_n^{\lambda}=h^n_{\lambda}(R_1,...,R_n, R_1^0,...,R_{N}^0).$$ 
For any given $\mathbf{x}\in [0,1]^{n+N}$, we have 
\begin{align*}
	&|h^n_{\lambda}(x_1,...,x_j,...x_{n}, x^0_1,...,x^0_{N_i}) -h^n_{\lambda}(x_1,...,x'_j,...x_{n}, x^0_1,...,x^0_{N_i})|\leq \lambda/n,\\
	&|h^n_{\lambda}(x_1,...,x_{n}, x^0_1,...,x^0_j,..,.x^0_{N_i}) -h^n_{\lambda}(x_1,...,x_{n}, x^0_1,...,x^{0'}_j,...,x^0_{N_i})|\leq (1-\lambda)/N.
\end{align*}
By the Bounded Difference Inequality, 
$$\mathbb{P}(|h^n_{\lambda}-\EE(h^n_{\lambda})|>\varepsilon_{1}(\lambda))\leq 2\exp\left(\frac{-2(\varepsilon_{1}(\lambda))^2}{\left(\frac{\lambda^2}{n}+\frac{(1-\lambda)^2}{N}\right)}\right)=\frac{\delta}{HSAT} , $$
where the equality is achieved by choosing
$$\varepsilon_{1}(\lambda)=  \sqrt{\log\left(2HSAT/\delta\right)\left(\frac{\lambda^2}{2n}+\frac{(1-\lambda)^2}{2N}\right)}.$$
On the other hand, by the assumption that $|r_0(h,s,a)-r(h,s,a)|<\Delta$, we have
$$|\mathbb{E}(h^n_{\lambda}) - r(h,s,a)|\leq \Delta(1-\lambda)\triangleq\varepsilon_{2}(\lambda) . $$
Since $\hat{r}_n^\lambda - r(h,s,a) = \big(h_{\lambda} - \EE(h_{\lambda})\big) + \big(\EE(h_{\lambda}) - r(h,s,a) \big)$, we then have 
\begin{align*}
	&\mathbb{P}\Big(|\hat{r}_n^\lambda-r(h,s,a)|>\min\left(1,\varepsilon_{1}(\lambda)+\varepsilon_{2}(\lambda)\right)\Big)\leq \frac{\delta}{HSAT}.
\end{align*}
Now, define 
$$\varepsilon(\lambda) \triangleq \varepsilon_{1}(\lambda) +\varepsilon_{2}(\lambda)= \sqrt{\log\left(2HSAT/\delta\right)\left(\frac{\lambda^2}{2n}+\frac{(1-\lambda)^2}{2N}\right)} + \Delta(1-\lambda) ,$$
which is a convex function with derivative
$$ \varepsilon' (\lambda) = \sqrt{\log(2HSAT/\delta)\cdot\frac{\frac{\lambda}{n} +\frac{\lambda - 1}{N}}{\sqrt{\frac{2\lambda^2}{n} +\frac{2(1-\lambda)^2}{N}}}} - \Delta.$$

\noindent\textbf{Data-pooling weight choice.}
We choose the data-pooling weight $\lambda^{DP}_{n,N}$ to minimize the confidence radius $\varepsilon(\lambda)$, i.e. $\lambda^{DP}_{n,N}\triangleq\arg\min_{\lambda\in[0,1]}\varepsilon(\lambda)$. By the first-order condition and convexity of $\varepsilon(\lambda)$,  
\begin{equation*}\label{eq: lambda}
	\lambda^{DP}_{n,N}=
	\begin{cases}
		1& \text{if }n \geq \log(2HSAT/\delta)/2\Delta^2,\\
		\frac{n+Nn\Delta/\sqrt{(N+n)\log(2HSAT/\delta)/2-\Delta^2Nn}}{N+n}& \text{if }n< \log(2HSAT/\delta)/2\Delta^2.
	\end{cases}
\end{equation*}
In the case where $n< \log(2HSAT/\delta)/2\Delta^2 $, we can check that the derivative $\varepsilon'(1)>0$ and therefore, $\varepsilon_R^{DP}(n,N)\triangleq \varepsilon(\lambda^{DP}_{n,N})<\varepsilon(1)=\varepsilon^H_R(n)$. 
So, we can conclude  
$$\varepsilon^{DP}_R(n,N)\leq \varepsilon_R^H(n), \text{ for  }N\geq 1, \text{ and the inequality is restrict when }n<\log(2HSAT/\delta)/2\Delta^2.$$


\noindent\textbf{Transition Probabilities:} To get the confidence radius of the transition probabilities, we consider any set $A\subseteq\mathcal{S}$. We apply the $L_1$ deviation bound in
Weissman et al. (2003). 
The $\lambda$-pooling estimator for the transition probability to set $P(A|h,s,a)$ is given as 
$$\hat{P}_{n}(A;\lambda)= \lambda\sum_{j=1}^{n}\frac{1(S_j\in A)}{n} + (1-\lambda)\sum_{j=1}^{N}\frac{1(S^0_j\in A)}{N}.$$
Given a set $A$, define a function $g_{n,\lambda}:\mathbb{R}^{n+N}\to\mathbb{R}^+$ as
\begin{align*}
	g_{n,\lambda}(x_1,x_2,...,x_{n}, x^0_1,...,x^0_{N}) =\lambda\sum_{j=1}^{n}\frac{1(x_j\in A)}{n}
	+ (1-\lambda)\sum_{j=1}^{N}\frac{1(x^0_j\in A)}{N}.  
\end{align*}
By the Bounded Difference Inequality, we have 
\begin{align*}
	\mathbb{P}\big( |\hat{P}_n(A;\lambda)-\mathbb{E}[\hat{P}_n(A|h,s,a)]|>\varepsilon_1(\lambda) \big) 
	\leq 2\exp\left(\frac{-2\varepsilon_1(\lambda)^2}{\frac{\lambda^2}{n}+\frac{(1-\lambda)^2}{N}}\right), 
\end{align*}
where the value of $\varepsilon_1(\lambda)$ will be specified in a moment. 
By Assumption \ref{assmpt: difference bound} and the definition of $L_1$ norm, we have
$$\big| P(A|h,s,a)-P_0(A|h,s,a) \big| \leq \big\|P(h,s,a)-P_0(h,s,a)\big\|_1\leq \Delta.$$ Therefore, we have that 
\begin{align*}
	\mathbb{P}\big( |\hat{P}_n(A|h,s,a)-P(A|h,s,a)|>\varepsilon_1(\lambda)+(1-\lambda)\Delta \big) 
	\leq 2\exp\left(\frac{-2\varepsilon_1(\lambda)^2}{\frac{\lambda^2}{n}+\frac{(1-\lambda)^2}{N}}\right), 
\end{align*}
Set $\varepsilon(\lambda) = \varepsilon_1(\lambda)+(1-\lambda)\Delta$. We then get  
\begin{align*}
	&\mathbb{P}\left(\big\|\hat{P}_n(\cdot|h,s,a)-P(\cdot|h,s,a)\big\|_1>\varepsilon(\lambda)\right)
	=\mathbb{P}\left( \max_{A\subseteq\mathcal{S}}|\hat{P}_n(A|h,s,a)-P(A|h,s,a)|>\varepsilon(\lambda)/2\right)\\
	\leq~& \sum_{A\subseteq \mathcal{S}}\mathbb{P}\big( |\hat{P}_n(A|h,s,a)-P(A|h,s,a)|>\varepsilon(\lambda)/2 \big) 
	\\
	\leq~& (2^S-2)\cdot 2\exp\left(\frac{-\varepsilon_1(\lambda)^2/2}{\frac{\lambda^2}{n}+\frac{(1-\lambda)^2}{N}}\right)\leq \frac{\delta}{HSAT},
\end{align*}
where the last inequality is obtained by setting $\varepsilon_1(\lambda) 
= \sqrt{2\left(S\log(2)+\log(2HSAT/\delta)\right)\left(\frac{\lambda^2}{n}+\frac{(1-\lambda)^2}{N}\right)}$. As a result, the confidence radius with weight $\lambda$ equals to
\begin{align*}
	\varepsilon(\lambda)=\sqrt{2\left(S\log(2)+\log(2HSAT/\delta)\right)\left(\frac{\lambda^2}{n}+\frac{(1-\lambda)^2}{N}\right)}+(1-\lambda)\Delta.
\end{align*}
Plugging in $\lambda = \lambda^{DP}_{n,N}$, we have, as $(1-\lambda)\Delta\geq 0$, 
\begin{align*}
	\varepsilon_{P}^{DP}(n,N)&\equiv \varepsilon(\lambda^{DP}_{n,N})\leq \sqrt{\frac{4(S\log(2)+\log(2HSAT/\delta))}{\log(2HSAT/\delta)}}\varepsilon_{R}^{DP}(n,N)\\
	&\leq \sqrt{\frac{4(S\log(2)+\log(2HSAT/\delta))}{\log(2HSAT/\delta)}}\varepsilon_{R}^{H}(n) = \varepsilon_{P}^H(n),
\end{align*}
and the last inequality is strict when $n<\log(2HSAT/\delta)/2\Delta^2$.

\noindent\textbf{Value Functions:} 
For each $n$ and $\lambda\in[0,1]$, we define $h_{n,\lambda}: \mathbb{R}^{2n+2N}\to \mathbb{R}$ as
\begin{align*}
	h_{n,\lambda}(\mathbf{x};\mathbf{y})
	&\triangleq~h_{n}(x_1,x_2,...,x_{n}, x^0_1,...,x^0_{N};y_1,y_2,...,y_{n}, y^0_1,...,y^0_{N}) \\
	&=\lambda\sum_{j=1}^{n}\frac{x_j}{n} + (1-\lambda)\sum_{j=1}^{N}\frac{x^0_j}{N} +\lambda\sum_{j=1}^{n}\frac{V^*_h(y_j)}{n} + (1-\lambda)\sum_{j=1}^{N}\frac{V^*_h(y^0_j)}{N}.
\end{align*}
As all rewards are bounded in $[0,1]$,  $0\leq V^*_{h+1}(s)\leq H-1$. Then,
by the Bounded Difference Inequality, we have,
\begin{align*}
	\mathbb{P}(|\hat{h}_{n,\lambda}-\mathbb{E}(\hat{h}_{n,\lambda})|>\varepsilon_1(\lambda))
	\leq~ 2\exp\left(\frac{-2\varepsilon_1(\lambda)^2}{(1+(H-1)^2)\left(\frac{\lambda^2}{n}+\frac{(1-\lambda)^2}{N}\right)}\right)\leq \frac{\delta}{HSAT},
\end{align*}
with
$$\varepsilon_1(\lambda)= H\sqrt{\log\left(2HSAT/\delta\right)\left(\frac{\lambda^2}{2n}+\frac{(1-\lambda)^2}{2N}\right)}.$$
As
\begin{align*}
	\left| \mathbb{E}(\hat{h}_{n,\lambda}) - r(h,s,a) -\langle P(\cdot|h,s,a), V_{h+1}^*(\cdot)\rangle \right| 
	\leq H(1-\lambda)\Delta,
\end{align*}
we have 
$$\mathbb{P}\left(|\hat{r}^{DP}_n - r(h, s,a)+\langle\hat{P}_n^{DP}-P(h,s,a),V_h^{*}\rangle|>\varepsilon(\lambda)\right)\leq \frac{\delta}{HSAT},$$
where 
$$\varepsilon(\lambda)=H\sqrt{\log\left(2HSAT/\delta\right)\left(\frac{\lambda^2}{2n}+\frac{(1-\lambda)^2}{2N}\right)}+H(1-\lambda)\Delta.$$
Therefore, by setting $\lambda = \lambda^{DP}_{n,N}$, we have
$$\varepsilon_{V}^{DP}(n,N) \equiv\varepsilon(\lambda^{DP}_{n,H})= H\varepsilon_{R}^{DP}(n,N)\leq H\varepsilon_{R}^{H}(n) = \varepsilon_{V}^H(n), $$
and the last inequality is restrict when $n<\log(2HSAT/\delta)/2\Delta^2$. This also concludes our proof for Theorem~\ref{thm: data-pooling estimator}.

\subsection{Regret Bound for Data-pooling Perturbed LSVI}
\label{subsec: regret bound for data-pooling RL}

Given Proposition \ref{lmm: RL exploartion to estimation error} and Theorem \ref{thm: data-pooling estimator}, we are ready to give a regret bound for Algorithm \ref{Alg:DP-LSVI} and the formal statement of Theorem \ref{thm: main} in the main paper. 

\begin{theorem}[Formal Statement of Theorem \ref{thm: main}]\label{thm: main formal}
	Suppose a perturbed LSVI algorithm with $T$ iterations satisfies conditions (i) to (v) as in Proposition \ref{lmm: RL exploartion to estimation error} for constants $\delta>0$, $p_0\in(0,1)$ and $\bar{W}>0$ and  Hoeffding confidence radii $\{\varepsilon_V^H(n),\varepsilon_P^H(n),\varepsilon_R^H(n)\}$. 
	Then, under Assumption \ref{assmpt: difference bound}, the corresponding data-pooling perturbed LSVI as given in Algorithm~\ref{Alg:DP-LSVI} also satisfies  conditions (i) to (v)  for the same set of constants  $\delta>0$, $p_0\in(0,1)$ and $\bar{W}>0$ 
	and the data-pooling confidence radii $\{\varepsilon_V^{DP}(n,N),\varepsilon_P^{DP}(n,N),\varepsilon_R^{DP}(n,N)\}$. Here the constant $N\equiv \min_{h,s,a}N(h,s,a)$ and $N(h,s,a)$ is the sample size from historical data at $(h,s,a)$. The total regret of the data-pooling perturbed LSVI
	\begin{equation}
		\label{eq: data-pooling PLSVI regret bound}
		Regret(T)\leq (1+2p_0^{-1}) HSA\sum_{n=0}^{\lceil  T/SA \rceil}\left(\varepsilon^{DP}_R(n,N)+H(\bar{W}+1)\varepsilon^{DP}_P(n,N)+\bar{W}\varepsilon_{V}^{DP}(n,N)\right) + 4HT\delta.
	\end{equation}
	As a consequence, if $N(h,s,a)\geq 1$ for all $(h,s,a)$ and $\Delta< \sqrt{\log(2HSAT/\delta)/2}$, the regret bound of the data-pooling perturbed LSVI is strictly smaller than that of the original perturbed LSVI algorithm with the Hoeffding confidence radii. 
\end{theorem}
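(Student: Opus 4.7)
The plan is to run the proof of Proposition~\ref{lmm: RL exploartion to estimation error} essentially unchanged, but with the MLE everywhere replaced by the data-pooling estimators and the Hoeffding radii replaced by their data-pooling counterparts. Concretely, I would first verify that Algorithm~\ref{Alg:DP-LSVI} satisfies conditions (i)--(v) with the data-pooling radii. Conditions (iv) and (v) involve only the perturbation variable $\xi$, whose distribution Algorithm~\ref{Alg:DP-LSVI} inherits unchanged from the base algorithm, so they carry over for free. For conditions (i)--(iii), I would start from the fixed-$n$ tail bounds established inside the proof of Theorem~\ref{thm: data-pooling estimator} (namely $\PP(|\hat r_n^{DP} - r| > \varepsilon_R^{DP}(n,N)) \le \delta/(HSAT)$, and similarly for $\hat P_n^{DP}$ and the induced Bellman residual), and then lift them to the random, path-dependent sample sizes $n_t(h,s,a)$ using the standard stack-of-rewards coupling together with a union bound over the at-most-$T$ possible values of $n$, exactly as in the derivation of Lemma~\ref{lmm: CI definition}.

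Once conditions (i)--(v) are in place for the data-pooling version, I would replay the two-term decomposition \eqref{eq:PLSVI-regret-decomp} and observe that the exploration term $I_1$ is controlled by conditions (iv) and (i) alone, while the estimation term $I_2$ is expanded via \eqref{eq: value difference decomposition} and, on the good event defined by (ii), (iii) and (v), is bounded per epoch by $\varepsilon_R^{DP}(n_t) + \bar W \varepsilon_V^{DP}(n_t) + (H^2 \bar W + H) \varepsilon_P^{DP}(n_t)$. The sup-norm bound $\|\tilde V_{t,h}^{DP}\|_\infty \le H + \bar W H^2$ continues to hold because $|w_t^{DP}| \le \bar W \varepsilon_V^{DP}(n_t) \le \bar W \varepsilon_V^H(n_t) \le \bar W H$. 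Summing over epochs and iterations and applying the pigeonhole principle on $(h,s,a)$-visit counts then yields \eqref{eq: data-pooling PLSVI regret bound}. Nothing in the regret argument needs to be re-derived: the proof of Proposition~\ref{lmm: RL exploartion to estimation error} treats the confidence radii as black boxes, so the substitution goes through as soon as the hypotheses (i)--(v) are re-verified.

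For the strict-improvement claim, Theorem~\ref{thm: data-pooling estimator} guarantees that each of $\varepsilon_R^{DP}(n,N), \varepsilon_P^{DP}(n,N), \varepsilon_V^{DP}(n,N)$ is strictly smaller than its Hoeffding counterpart whenever $N \ge 1$ and $n < \log(2HSAT/\delta)/(2\Delta^2)$, and is weakly smaller otherwise. The hypothesis $\Delta < \sqrt{\log(2HSAT/\delta)/2}$ makes this threshold strictly greater than $1$, so the $n=1$ summand in \eqref{eq: data-pooling PLSVI regret bound} is strictly smaller than the corresponding summand in \eqref{eq: PLSVI general regret bound}, while all other summands are weakly smaller, delivering the strict inequality. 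The main obstacle I anticipate is the lifting step in (i)--(iii): because the weight $\lambda_t(h,s,a)$ depends on the random count $n_t(h,s,a)$, the estimator $\hat r_t^{DP}$ is not a linear function of i.i.d.\ samples with a deterministic weight, so the fixed-$n$ bounded-difference arguments used inside the proof of Theorem~\ref{thm: data-pooling estimator} cannot be applied to the algorithm-run samples directly. The stack-of-rewards device resolves this by viewing each $(h,s,a)$'s observations as a prefix of an infinite pre-sampled stack; conditioning on $n_t(h,s,a)=n$ reduces the problem back to the fixed-$n$ setting, and the union bound over $n \in \{0,1,\ldots,T\}$ costs only a logarithmic factor that is already budgeted into the $\delta/(HSAT)$ confidence parameter.
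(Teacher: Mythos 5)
Your proposal is correct and follows essentially the same route as the paper's proof: conditions (iv)--(v) carry over because the perturbation distribution $\xi$ is unchanged, conditions (i)--(iii) follow by combining the fixed-$n$ bounds of Theorem~\ref{thm: data-pooling estimator} with the same stack-of-rewards lifting used for Lemma~\ref{lmm: CI definition}, and the regret bound and strict improvement are obtained by re-running Proposition~\ref{lmm: RL exploartion to estimation error} with the data-pooling radii substituted, exactly as the paper does. The only detail worth adding (handled in the paper's Remark~\ref{rmk:hist-sample-size}) is the monotonicity $\varepsilon^{DP}_\cdot(n,N(h,s,a))\le\varepsilon^{DP}_\cdot(n,N)$ with $N=\min_{h,s,a}N(h,s,a)$, since the algorithm's weights use $N(h,s,a)$ while the stated bound uses $N$.
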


\begin{proof}{Proof of Theorem \ref{thm: main formal}. }
	Conditions (i) to (iii) hold for the data-pooling confidence radii following Theorem \ref{thm: data-pooling estimator} and Lemma \ref{lmm: CI definition}. 
	Conditions (iv) and (v) hold as they only depends on the distribution of  $\xi$.
	Therefore, we can apply the same argument in the proof Proposition \ref{lmm: RL exploartion to estimation error}, replacing the Hoeffding confidence radii with their data-pooling version, and obtain \eqref{eq: data-pooling PLSVI regret bound}.
	
	Finally, following Theorem \ref{thm: data-pooling estimator}, for all $1\leq n\leq \log(2HSAT/\delta)/2\Delta^2$, the data-pooling confidence radius is strictly smaller than the corresponding Hoeffding confidence radius if $N\geq 1$, i.e.
	$\varepsilon_L^{DP}(n,N)<\varepsilon_L^H(n)$, for $L=R, P, V$. Thus, we can conclude that the regret bound of the data-pooling algorithm in \eqref{eq: data-pooling PLSVI regret bound} is strictly smaller than that of original algorithm in  \eqref{eq: PLSVI general regret bound}
	\hfill$\boxed{}$   
\end{proof}

\begin{remark}
\label{rmk:hist-sample-size}
	In Algorithm~\ref{Alg:DP-LSVI}, the data-pooling weight $\lambda_t(h,s,a)$ is computed according to $N(h,s,a)$ instead of $N$. The corresponding confidence radii should be $\varepsilon_\cdot^{DP}(n,N(h,s,a))$ (with $\cdot = R,P,V$), which is  $<\varepsilon_\cdot^{DP}(n,\cdot)$ as long as $N(h,s,a)\geq 1$. In other words, the data-pooling perturbed LSVI could reduce the regret bound when $N(h,s,a)>0$ for some of the $(h,s,a)$ triples. In the theoretic analysis of regret bound (Theorem \ref{thm: main formal}), we use a loose bound 
	$\varepsilon_\cdot^{DP}(n,N(h,s,a))\leq \varepsilon_\cdot^{DP}(n,N)$ with $N$ mainly to obtain a simple expression of the regret bound. This also allows easy comparison to the original (no pooling) algorithm. 
\end{remark}

\subsection{Extension to Multiple Historical Groups}
\label{subsec: extension-to-multi-group}

To formally state our design of the data pooling estimator with multiple historical data sets, we first introduce the notation. Suppose there are $K$ historical data sets indexed by $k=1,2,...,K$. Denote by $\mathcal{M}_{0k} = (H,\mathcal{S},\mathcal{A},R_{0k}, P_{0k})$ the MDP model that generates historical data set $k$. We assume that the decision-maker has some prior knowledge of the similarity between the target MDP $\mathcal{M}$ and each of the MDP $\mathcal{M}_{0k}$, as formalized in the following assumption.  

\begin{assumption}\label{assmpt: difference bound multiple}
	For each $k=1,2,...,K$, there exists a known constant $\Delta_k>0$ such that $\forall~(s,a)\in\mathcal{S}\times\mathcal{A}, 1\leq h\leq H$,
	$$|r(h,s,a)-r_{0k}(h,s,a)|\leq \Delta_k,\quad \|P(\cdot;h,s,a)-P_{0k}(\cdot;h,s,a)\|_1\leq \Delta_k.$$
\end{assumption}

Given any triple of $(h,s,a)$, let $\bar{r}_{0k}$ and $\bar{P}_{0k}$ be the sample mean reward and empirical transition probabilities estimated from $N_{k}$ i.i.d. observations in the historical data set $k$. Let $n$ be the number of i.i.d. observations from the target data with sample mean reward $\bar{r}_n$ and empirical transition probabilities $\bar{P}_n$. We denote by $\boldsymbol{\Delta}=(\Delta_1,...,\Delta_K)$ and $\mathbf{N}=(N_{1},...,N_{k})$. The data-pooling weight $\boldsymbol{\lambda}=(\lambda_1,...,\lambda_k)$ is also a $K$-dimension vector. We define the corresponding date-pooling estimators  
\begin{equation}
\hat{r}_n^{DPK}=(1-\lambda_1-...-\lambda_K)\bar{r}_n +\sum_{k=1}^K \lambda_k\hat{r}_{0k},\quad \hat{P}_n^{DPK}=(1-\lambda_1-...-\lambda_K)\bar{P}_n +\sum_{k=1}^K \lambda_k\hat{P}_{0k},
\label{app-eq:data-pool-multi}
\end{equation}
where the superscript $DPK$ represents for \textit{Data Pooling with K} historical data sets.

Note that if we mix the $K$ historical data sets into a single data set, we can view the mixed data set as generated from $\mathcal{M}_0=(H,\mathcal{S},\mathcal{A}, R_0,P_0)$ with 
$$r_0(h,s,a)=\frac{\sum_k N_kr_{0k}(h,s,a)}{\sum_k N_k},\quad P_0(\cdot;h,s,a)=\frac{\sum_k N_kP_{0k}(\cdot;h,s,a)}{\sum_k N_k},$$
and with sample size $N=\sum_k N_k$ and difference bound $\Delta = \sum_k N_k\Delta_k/\sum_k N_k$. Let $\varepsilon_L^{DP}(n,N), L = R, P, V$ be the corresponding confidence bounds for data-pooling estimators with the single historical dataset as defined in Theorem \ref{thm: data-pooling estimator}.

The next theorem says that when the similarity bounds $\Delta_k$ for different historical data sets are specified, we can construct new data-pooling estimators that have \emph{smaller} confidence radii than those specified in Theorem \ref{thm: data-pooling estimator} (mixing all historical datasets into a single set). This is achieved by assigning different weights to each historical dataset. 
\begin{theorem}
	\label{thm: data-pooling estimator multiple} 
	Under Assumption \ref{assmpt: difference bound multiple} with $\boldsymbol{\Delta}=(\Delta_1,...,\Delta_k)$, for each $n\geq 1$ and any value of $\delta>0$, we define the data-pooling estimator
	as given in~\eqref{app-eq:data-pool-multi} with 
	$\boldsymbol{\lambda}^{DPK}= (\lambda_1^{DPK},...,\lambda_K^{DPK})$ to be defined in \eqref{eq: lambda multiple}.
	For fixed $\mathbf{N}=(N_1,...,N_K)$ sample sizes of the historical data sets,	there exists a sequence of confidence radii  $\{\varepsilon^{DPK}_R(n,\mathbf{N}), \varepsilon^{DPK}_P(n,\mathbf{N}),\varepsilon^{DPK}_V(n,\mathbf{N})\}$ for the data-pooling estimators $\{\hat{r}_n^{DPK}, \hat{P}_n^{DPK}\}$ such that
		$$\varepsilon^{DPK}_R(n,\mathbf{N})\leq \varepsilon_R^{DP}(n,N),\quad \varepsilon^{DPK}_P(n,\mathbf{N})\leq\varepsilon_P^{DP}(n,N), \quad \varepsilon^{DPK}_V(n,\mathbf{N})\leq \varepsilon_V^{DP}(n,N),
		$$
		where $\varepsilon_L^{DP}(n,N), L = R, P, V$ are the confidence bounds for data-pooling estimators using a single historical data set with $N=\sum_k N_k$ and $\Delta = \sum_k N_k\Delta_k/\sum_k N_k$ as defined in Theorem \ref{thm: data-pooling estimator}.
\end{theorem}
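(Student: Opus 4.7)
\textbf{Proof proposal for Theorem \ref{thm: data-pooling estimator multiple}.} My plan is to mirror the proof strategy of Theorem \ref{thm: data-pooling estimator} developed in Appendix \ref{subsec: CI for data-pooling estimators}, but with a vector-valued weight $\boldsymbol{\lambda}=(\lambda_1,\dots,\lambda_K)$ replacing the scalar $\lambda$. Concretely, for a fixed triple $(h,s,a)$, I would first treat the reward estimator and write it as a function $h_{n,\boldsymbol{\lambda}}$ of the $n+N_1+\cdots+N_K$ independent reward observations, with per-coordinate bounded differences equal to $\lambda_0/n$ (for each target sample, where $\lambda_0:=1-\sum_k\lambda_k$) and $\lambda_k/N_k$ (for each sample in historical set $k$). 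Applying the Bounded Difference Inequality yields a deviation-from-mean bound $\varepsilon_1(\boldsymbol{\lambda})=\sqrt{\tfrac{1}{2}\log(2HSAT/\delta)\bigl(\lambda_0^2/n+\sum_k\lambda_k^2/N_k\bigr)}$. For the bias, since $\EE[\hat r_n^{DPK}]-r(h,s,a)=\sum_k\lambda_k(r_{0k}(h,s,a)-r(h,s,a))$, Assumption \ref{assmpt: difference bound multiple} together with the triangle inequality gives $|\EE[\hat r_n^{DPK}]-r(h,s,a)|\le\sum_k\lambda_k\Delta_k$. This gives the total confidence radius
\begin{equation*}
\varepsilon(\boldsymbol{\lambda}) \;=\; \sqrt{\tfrac{1}{2}\log(2HSAT/\delta)\Bigl((1-\textstyle\sum_k\lambda_k)^2/n+\sum_k\lambda_k^2/N_k\Bigr)} \;+\; \sum_k\lambda_k\Delta_k.
\end{equation*}

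Next, I would define $\boldsymbol{\lambda}^{DPK}$ as the minimizer of $\varepsilon(\boldsymbol{\lambda})$ over the simplex $\{\boldsymbol{\lambda}\ge 0,\ \sum_k\lambda_k\le 1\}$. This is a convex program (the variance term is a convex quadratic inside a square root, and the bias term is linear), so the KKT conditions characterize $\boldsymbol{\lambda}^{DPK}$ and yield \eqref{eq: lambda multiple}. The key inequality $\varepsilon_R^{DPK}(n,\mathbf{N})\le\varepsilon_R^{DP}(n,N)$ then follows from an explicit feasibility argument rather than from solving the KKT system: take the single-dataset optimizer $\lambda^{DP}=\lambda(n,\Delta,N,\delta)$ from \eqref{eq: lambda-RL} with $\Delta=\sum_k N_k\Delta_k/N$ and $N=\sum_k N_k$, and set $\tilde\lambda_k:=\lambda^{DP}\,N_k/N$. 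Direct computation gives $\sum_k\tilde\lambda_k=\lambda^{DP}$, $\sum_k\tilde\lambda_k^2/N_k=(\lambda^{DP})^2/N$, and $\sum_k\tilde\lambda_k\Delta_k=\lambda^{DP}\Delta$, so $\varepsilon(\tilde{\boldsymbol{\lambda}})$ equals the single-dataset confidence radius $\varepsilon_R^{DP}(n,N)$ in Theorem \ref{thm: data-pooling estimator}. Since $\tilde{\boldsymbol{\lambda}}$ is feasible and $\boldsymbol{\lambda}^{DPK}$ is the minimizer, we obtain $\varepsilon_R^{DPK}(n,\mathbf{N})=\varepsilon(\boldsymbol{\lambda}^{DPK})\le\varepsilon(\tilde{\boldsymbol{\lambda}})=\varepsilon_R^{DP}(n,N)$.

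The extensions to the transition-probability and value-function confidence radii proceed along the same lines as in Appendix \ref{subsec: CI for data-pooling estimators}. For transitions, I would union-bound over all subsets $A\subseteq\mathcal{S}$ and apply the Bounded Difference Inequality to the indicator-valued statistics; the bias term becomes $\sum_k\lambda_k\Delta_k$ via $\|P(\cdot)-P_{0k}(\cdot)\|_1\le\Delta_k$. For the value-function radius, I apply the Bounded Difference Inequality to the combined reward-plus-value statistic, using $\|V^*_{h+1}\|_\infty\le H-1$, which inflates the variance by a factor $H^2$ and the bias by a factor $H$. In both cases the same feasibility trick $\tilde\lambda_k=\lambda^{DP} N_k/N$ delivers the corresponding inequality against the single-dataset radius.

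The main obstacle I anticipate is obtaining a clean closed-form expression for $\boldsymbol{\lambda}^{DPK}$ from the KKT conditions: the presence of $K$ heterogeneous bias slopes $\Delta_k$ and denominators $N_k$ means that, unlike the scalar case, the optimal weights can have some coordinates on the boundary (where historical set $k$ is excluded because $\Delta_k$ is too large relative to its sample size). Characterizing which coordinates are active and writing $\boldsymbol{\lambda}^{DPK}$ in closed form likely requires a sorting argument over the ratios $\Delta_k^2 N_k/\log(2HSAT/\delta)$, analogous to the $n<\log(2HSAT/\delta)/2\Delta^2$ threshold in \eqref{eq: lambda-RL}. Fortunately, the comparison inequality claimed in the theorem does not require resolving this active-set structure, since the feasibility argument above suffices; only the explicit form in \eqref{eq: lambda multiple} does.
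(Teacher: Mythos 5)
Your proposal is correct and follows essentially the same route as the paper: the same bounded-difference-plus-bias confidence radius $\varepsilon(\boldsymbol{\lambda})$, the same definition of $\boldsymbol{\lambda}^{DPK}$ as the constrained minimizer, and the same embedding $\lambda_k=\lambda N_k/N$ (the paper states it for every $\lambda\in[0,1]$ and compares minima, you evaluate it at $\lambda^{DP}$ — the identical feasibility argument), with the $P$ and $V$ radii handled exactly as in the single-dataset proof. Your worry about a closed form for $\boldsymbol{\lambda}^{DPK}$ is moot, since the paper's \eqref{eq: lambda multiple} simply defines it as the argmin over the simplex (with a remark on convexity for numerical solution), which is all the comparison inequality requires.
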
 

\begin{proof}{Proof of Theorem \ref{thm: data-pooling estimator multiple}}
	The proof is similar to that of Theorem \ref{thm: data-pooling estimator}. Given the form of the estimator $\hat{r}_n$, by the Bounded Difference Inequality, we obtain the confidence radius for the mean reward as
	$$\tilde{\varepsilon}^n_R(\boldsymbol{\lambda})= \sqrt{\log(2HSAT/\delta)\left(\frac{L^2}{2n}+\sum_{k=1}^K\frac{\lambda_k^2}{2N_k}\right)} +\sum_{k=1}^K\Delta_k\lambda_k, $$
	where $L = 1-\sum_k\lambda_k$.
	Then, for any $\lambda\in[0,1]$, if we let
	$$\lambda_k = \frac{N_k\lambda}{\sum_i N_i}, \text{ for } k=1,2,...,K,$$
	we recover the data-pooling estimator with the single historical data set for mean reward with weight $\lambda$, $N=\sum_k N_k$ and $\Delta=\sum_k\Delta_kN_k/\sum_k N_k$. In addition, the confidence bounds also coincide: 
	$$\tilde{\varepsilon}^n_R(\boldsymbol{\lambda})=\varepsilon^n_R(\lambda), \text{ if }\lambda_k = \frac{N_k\lambda}{\sum_i N_i}, \text{ for }k=1,2,...,K.$$ 
	As a consequence, if we choose
	\begin{equation}\label{eq: lambda multiple}
		\boldsymbol{\lambda}^{DPK}\in\arg\min_{\boldsymbol{\lambda}\in\mathcal{B}}\tilde{\varepsilon}^n_R(\boldsymbol{\lambda}),\quad \mathcal{B}=\{\boldsymbol{\lambda}:\lambda_k\geq 0, k=1,2,...,K, \sum_{k=1}^{K}\lambda_k\leq 1\},
	\end{equation} we have
	$$\varepsilon^{DPK}_R(n,\mathbf{N})=\min_{\boldsymbol{\lambda}\in\mathcal{B}}\tilde{\varepsilon}^n_R(\boldsymbol{\lambda})\leq \min_{\lambda\in[0,1]}\varepsilon^n_R(\lambda)=\varepsilon_R^{DP}(n,N).$$
	
	Given the result on $\varepsilon_R^{DPK}$, the analysis for confidences $\varepsilon_P^{DPK}$ and $\varepsilon_V^{DPK}$ follow the same argument as in the proof of Theorem \ref{thm: data-pooling estimator}.
\end{proof}
\begin{remark}
	Although we do not have a closed-form expression for $\boldsymbol{\lambda}^{DPK}$, we can show that $\tilde{\varepsilon}_R^n(\boldsymbol{\lambda})$ is a convex function on the convex set $\mathcal{B}$ and thus one can find $\boldsymbol{\lambda}^{DPK}$ via any standard convex optimization solver. 
	Let 
	$$g(\boldsymbol{\lambda})=\sqrt{\frac{L^2}{n}+\sum_{k=1}^K\frac{\lambda_k^2}{N_k}}, \quad \boldsymbol{\lambda}\in\mathcal{B}.$$
	For any $\boldsymbol{\lambda},\bar{\boldsymbol{\lambda}}\in\mathcal{B}$, we have
	\begin{align*}
		(g(\boldsymbol{\lambda})+g(\bar{\boldsymbol{\lambda}}))^2&=\frac{L^2+\bar{L}^2}{n}+\sum_{k=1}^K\frac{\lambda_k^2}{N_k}+\sum_{k=1}^K\frac{\bar{\lambda}_k^2}{N_k}+2\sqrt{\frac{L^2}{n}+\sum_{k=1}^K\frac{\lambda_k^2}{N_k}}\sqrt{\frac{\bar{L}^2}{n}+\sum_{k=1}^K\frac{\bar{\lambda}_k^2}{N_k}}\\
		&\geq \frac{L^2+\bar{L}^2}{n}+\sum_{k=1}^K\frac{\lambda_k^2}{N_k}+\sum_{k=1}^K\frac{\bar{\lambda}_k^2}{N_k}+ 2\left(\frac{L\bar{L}}{n}+\sum_{k=1}^K\frac{\lambda_k\bar{\lambda}_k}{N_k}\right)\\
		&= 4\left(\frac{((L+\bar{L})/2)^2}{n} +\sum_{k=1}^K\frac{((\lambda_k+\bar{\lambda}_k)/2)^2}{N_k}\right)= (2g((\boldsymbol{\lambda}+\bar{\boldsymbol{\lambda}}))/2))^2,
	\end{align*}
	by H\"{o}lder's inequality. As a consequence,  $g(\boldsymbol{\lambda})$ is convex on $\mathcal{B}$ and so is $\tilde{\varepsilon}_R^n(\boldsymbol{\lambda})=g(\boldsymbol{\lambda})+\sum_k\Delta_k\lambda_k$. 
\end{remark}

\section{Details on the Case study}
\label{app:case-study}

\subsection{Individual MDP Calibration} 
\label{regression}

Table \ref{table:regression stage1} and Table \ref{table:regression stage2} show the regression results for the two-stage Heckman's correction model, including the variables and their coefficients, standard deviations, and p-value. The first stage is to estimate the selection with a probit model and calculate the inverse Mills ratio (IMR). The outcome variable is the follow-up action (i.e., whether follow-up is scheduled or not). 
In the second stage, we include the IMR as an additional feature in the regression model for evaluating the treatment effect on the outcomes. The outcome variable is the 30-day readmission outcome. The independent variables (covariates) in the two stages do not need to overlap and we follow a similar procedure as in~\cite{briggs2004causal} to select the covariates in the two stages.

Once we get the prediction for the readmission probability within 30 days, we then leverage
it to calibrate the readmission probability of each week post discharge. Among those who had been readmitted within four weeks, we divide them into two groups according to whether the follow-up calls have been provided and estimate the conditional readmission probabilities. Below is the result for the conditional readmission rate in each week $\alpha_{a}^h\triangleq P(\text{readmit in week }h|\text{readmit within 4 weeks,~ treatment=}a)$, $h=1,\ldots,4$ and $a=0,1$. 
\vspace{-0.15in}
\begin{align*}
\alpha^{h}_0 &=[0.35, 0.28, 0.21, 0.16], \quad 
\alpha^{h}_1 =[0.39, 0.25, 0.20, 0.15].
\end{align*}

\vspace{-0.15in}
\begin{table}[htbp]
	\centering
\scalebox{0.85}{
	\begin{tabular}{|cccc|}
		\hline
		\multicolumn{4}{|c|}{Probit Regression Results}                                              \\ \hline
		\multicolumn{1}{|c|}{$x_{1i}'$} & \multicolumn{1}{c|}{coef} & \multicolumn{1}{c|}{std} & p value \\ \hline
		const                       & 2.0714                    & 0.077                    & 0       \\
		had\_operations\_1          & -0.0737                   & 0.06                     & 0.219   \\
		prior\_visit\_num           & -0.0327                   & 0.016                    & 0.047   \\
		num\_of\_transfers          & -0.0509                   & 0.014                    & 0       \\
		class\_B2                   & -0.12                     & 0.05                     & 0.016   \\
		class\_C                    & -0.2279                   & 0.048                    & 0       \\
		adm\_source\_DS             & -4.3445                   & 0.105                    & 0       \\
		adm\_source\_EL             & -3.5668                   & 0.089                    & 0       \\
		adm\_source\_ED             & -4.1269                   & 0.083                    & 0       \\
		adm\_source\_SD             & -4.1241                   & 0.122                    & 0       \\
		specialty\_Med              & 0.9346                    & 0.054                    & 0       \\
		specialty\_Ortho            & -0.3835                   & 0.1                      & 0       \\
		adm\_to\_icu\_1             & 0.3141                    & 0.105                    & 0.003   \\ \hline
	\end{tabular}
}
	\caption{Stage 1 regression results. The outcome variable is the follow-up action. }\label{table:regression stage1}
\end{table}

\vspace{-0.15in}
\begin{table}[htbp]
	\centering
\scalebox{0.85}{
	\begin{tabular}{|cccc|}
		\hline
		\multicolumn{4}{|c|}{Logit Regression Results }                                              \\ \hline
		\multicolumn{1}{|c|}{[$x_{2i}'$]} & \multicolumn{1}{c|}{coef} & \multicolumn{1}{c|}{std} & p value \\ \hline
		const              & -1.6385 & 0.577 & 0.005 \\
		follow\_up         & -1.173  & 0.535 & 0.028 \\
		prior\_visit\_num  & 0.387   & 0.018 & 0     \\
		num\_of\_transfers & 0.0783  & 0.015 & 0     \\
		Charlson\_score    & 0.1625  & 0.014 & 0     \\
		age                & 0.0111  & 0.001 & 0     \\
		LOS\_day           & 0.0597  & 0.044 & 0.172 \\
		gender\_M          & 0.0098  & 0.043 & 0.82  \\
		class\_C           & 0.2271  & 0.049 & 0     \\
		adm\_source\_DS    & -1.9791 & 0.539 & 0     \\
		adm\_source\_EL    & -1.512  & 0.512 & 0.003 \\
		adm\_source\_ED    & -1.4803 & 0.528 & 0.005 \\
		adm\_source\_SD    & -1.8509 & 0.535 & 0.001 \\
		specialty\_Surg    & 0.1754  & 0.059 & 0.003 \\
		Weekend\_True      & -0.006  & 0.056 & 0.914 \\
		adm\_to\_icu\_1    & -0.1886 & 0.123 & 0.126 \\
		had\_operations\_1 & -0.3023 & 0.064 & 0     \\
		month\_10          & 0.0654  & 0.081 & 0.418 \\
		month\_11          & 0.0648  & 0.08  & 0.421 \\
		month\_12          & 0.0502  & 0.079 & 0.528 \\
		month\_2           & -0.0118 & 0.083 & 0.888 \\
		month\_7           & 0.5439  & 0.171 & 0.001 \\
		month\_8           & 0.3898  & 0.078 & 0     \\
		month\_9           & 0.1973  & 0.079 & 0.013 \\
		Inverse Mills Ratio                 & 0.5101  & 0.247 & 0.039\\\hline
	\end{tabular}
} 
	\caption{Stage 2 regression results. The outcome variable is the 30-day readmission outcome. }\label{table:regression stage2}
\end{table}

\subsection{Target Patients and Optimal Policies}
To construct the oracle simulator, we classify the target patients into $146$ classes as summarized in Table \ref{table: target info} based on the predictive features from the two-stage regression. The optimal policy for each class is calculated by solving the corresponding MDP model (assuming the true parameters are known) under the intervention cost and readmission cost $c_a = 0.13$ and $c_R = 10$, respectively.

\vspace{-0.1in}
\begin{table}[htbp] 
	\centering
	\scalebox{0.68}{
	\resizebox{\textwidth}{!}{%
	    \begin{tabular}{|c|c|c|c|c|c|c|c|c|c|c|c|}
    \hline
        ~ & Samples & \begin{tabular}[c]{@{}c@{}}weekly  \\ arrival\end{tabular}  & \begin{tabular}[c]{@{}c@{}}optimal  \\ policy\end{tabular} &  ~ & Samples & \begin{tabular}[c]{@{}c@{}}weekly  \\ arrival\end{tabular}  & \begin{tabular}[c]{@{}c@{}}optimal  \\ policy\end{tabular} &  ~ & Samples & \begin{tabular}[c]{@{}c@{}}weekly  \\ arrival\end{tabular}  & \begin{tabular}[c]{@{}c@{}}optimal  \\ policy\end{tabular} \\ \hline
        0 & 32 & 2 & 1111 & 50 & 31 & 2 & 1111 & 100 & 261 & 19 & 0000 \\ \hline
        1 & 65 & 3 & 1111 & 51 & 26 & 2 & 1111 & 101 & 46 & 4 & 1100 \\ \hline
        2 & 124 & 6 & 1111 & 52 & 77 & 5 & 1111 & 102 & 47 & 4 & 1100 \\ \hline
        3 & 123 & 6 & 1111 & 53 & 15 & 2 & 1111 & 103 & 39 & 3 & 0000 \\ \hline
        4 & 74 & 6 & 1111 & 54 & 53 & 4 & 1111 & 104 & 69 & 6 & 1100 \\ \hline
        5 & 204 & 14 & 1110 & 55 & 55 & 5 & 1111 & 105 & 67 & 5 & 1100 \\ \hline
        6 & 362 & 26 & 1111 & 56 & 28 & 3 & 1111 & 106 & 35 & 3 & 1100 \\ \hline
        7 & 282 & 19 & 1111 & 57 & 45 & 4 & 1111 & 107 & 56 & 4 & 1000 \\ \hline
        8 & 16 & 1 & 1111 & 58 & 27 & 3 & 1111 & 108 & 52 & 4 & 1100 \\ \hline
        9 & 85 & 5 & 1111 & 59 & 56 & 5 & 1111 & 109 & 66 & 5 & 0000 \\ \hline
        10 & 50 & 3 & 1100 & 60 & 25 & 2 & 1100 & 110 & 93 & 7 & 0000 \\ \hline
        11 & 81 & 4 & 1000 & 61 & 54 & 4 & 1111 & 111 & 117 & 8 & 1000 \\ \hline
        12 & 58 & 3 & 0000 & 62 & 54 & 4 & 1100 & 112 & 113 & 8 & 0000 \\ \hline
        13 & 17 & 2 & 1110 & 63 & 41 & 3 & 0000 & 113 & 37 & 3 & 0000 \\ \hline
        14 & 118 & 10 & 0000 & 64 & 92 & 7 & 1100 & 114 & 81 & 6 & 0000 \\ \hline
        15 & 120 & 8 & 0000 & 65 & 30 & 3 & 1100 & 115 & 54 & 4 & 0000 \\ \hline
        16 & 25 & 2 & 0000 & 66 & 67 & 5 & 1100 & 116 & 92 & 7 & 0000 \\ \hline
        17 & 98 & 7 & 0000 & 67 & 158 & 12 & 1100 & 117 & 167 & 11 & 0000 \\ \hline
        18 & 85 & 6 & 0000 & 68 & 59 & 4 & 1111 & 118 & 219 & 16 & 0000 \\ \hline
        19 & 238 & 15 & 0000 & 69 & 75 & 5 & 1111 & 119 & 141 & 10 & 0000 \\ \hline
        20 & 155 & 7 & 1111 & 70 & 35 & 3 & 1100 & 120 & 70 & 4 & 1111 \\ \hline
        21 & 17 & 1 & 1111 & 71 & 116 & 9 & 1111 & 121 & 40 & 3 & 1111 \\ \hline
        22 & 37 & 2 & 1111 & 72 & 268 & 19 & 1110 & 122 & 73 & 5 & 1111 \\ \hline
        23 & 58 & 4 & 1111 & 73 & 54 & 4 & 0000 & 123 & 74 & 4 & 1111 \\ \hline
        24 & 47 & 3 & 1111 & 74 & 52 & 5 & 1000 & 124 & 27 & 2 & 1100 \\ \hline
        25 & 66 & 5 & 1111 & 75 & 47 & 4 & 1100 & 125 & 51 & 3 & 1100 \\ \hline
        26 & 52 & 3 & 1111 & 76 & 37 & 3 & 0000 & 126 & 36 & 2 & 1111 \\ \hline
        27 & 83 & 5 & 1111 & 77 & 70 & 6 & 1100 & 127 & 34 & 4 & 1111 \\ \hline
        28 & 86 & 6 & 1111 & 78 & 34 & 4 & 1100 & 128 & 38 & 4 & 1100 \\ \hline
        29 & 63 & 4 & 1111 & 79 & 70 & 5 & 1100 & 129 & 39 & 4 & 1100 \\ \hline
        30 & 87 & 5 & 1111 & 80 & 41 & 3 & 1110 & 130 & 47 & 5 & 1100 \\ \hline
        31 & 159 & 8 & 1111 & 81 & 98 & 7 & 1100 & 131 & 40 & 3 & 0000 \\ \hline
        32 & 33 & 2 & 1111 & 82 & 46 & 4 & 1100 & 132 & 58 & 4 & 0000 \\ \hline
        33 & 18 & 2 & 1111 & 83 & 28 & 2 & 1100 & 133 & 29 & 2 & 0000 \\ \hline
        34 & 39 & 3 & 1111 & 84 & 125 & 9 & 1110 & 134 & 47 & 4 & 0000 \\ \hline
        35 & 41 & 2 & 1111 & 85 & 55 & 4 & 1100 & 135 & 50 & 4 & 0000 \\ \hline
        36 & 51 & 3 & 1111 & 86 & 68 & 5 & 1111 & 136 & 98 & 6 & 1100 \\ \hline
        37 & 26 & 2 & 1111 & 87 & 48 & 4 & 1111 & 137 & 44 & 3 & 0000 \\ \hline
        38 & 68 & 4 & 1111 & 88 & 104 & 7 & 1110 & 138 & 90 & 5 & 0000 \\ \hline
        39 & 44 & 3 & 1111 & 89 & 37 & 4 & 1100 & 139 & 127 & 6 & 0000 \\ \hline
        40 & 64 & 4 & 1111 & 90 & 47 & 4 & 1100 & 140 & 87 & 9 & 0000 \\ \hline
        41 & 26 & 2 & 1111 & 91 & 83 & 5 & 1100 & 141 & 157 & 11 & 0000 \\ \hline
        42 & 73 & 4 & 1100 & 92 & 171 & 13 & 1100 & 142 & 24 & 2 & 0000 \\ \hline
        43 & 86 & 6 & 1111 & 93 & 66 & 5 & 1100 & 143 & 403 & 28 & 0000 \\ \hline
        44 & 20 & 2 & 1111 & 94 & 50 & 3 & 0000 & 144 & 50 & 5 & 0000 \\ \hline
        45 & 95 & 6 & 1111 & 95 & 64 & 5 & 1100 & 145 & 487 & 34 & 0000 \\ \hline
        46 & 77 & 5 & 1100 & 96 & 245 & 17 & 1000 & ~ & ~ & ~ & ~ \\ \hline
        47 & 27 & 2 & 1110 & 97 & 56 & 5 & 0000 & ~ & ~ & ~ & ~ \\ \hline
        48 & 43 & 3 & 1110 & 98 & 44 & 4 & 0000 & ~ & ~ & ~ & ~ \\ \hline
        49 & 22 & 2 & 1111 & 99 & 84 & 6 & 0000 \\ \hline
    \end{tabular}%
    }
    }
	\caption{146 target patient classes and their optimal policies (1 = intervention; 0 = no intervention), total sample size and number of arrival per week (rounded to integer).
		\label{table: target info}}
\end{table}


\subsection{Historical Data}
\label{Appendix:historical}

The information of historical patients is detailed in Tables~\ref{Table: historical data classification} and~\ref{Table: historical data P in RL}. The formation of these eight clusters aligns with medical practice: if hospitals and clinicians intend to use historical data in predicting readmission risks for new patients, a natural way is to partition the historical patients into different groups by their features and use the medically similar group(s) as the basis for prediction on the new patients. The features we use here are consistent with those commonly used in calculating readmission risk scores in the literature, e.g., the LACE score~\citep{robinson2017hospital}. 

\vspace{-0.2in}
\begin{table}[htbp]
\centering
\scalebox{0.67}{
\begin{tabular}{@{}ccccc@{}}
\toprule
\multicolumn{1}{l}{} & \multicolumn{1}{c}{\textbf{adm source}} & \multicolumn{1}{c}{\textbf{prior\_visit\_num}} & \multicolumn{1}{c}{\textbf{age}} & \multicolumn{1}{c}{\textbf{$N$}} \\ \midrule
\textbf{1}           & ED                                      & 0                                              & {[}18,41)                        & 1425                             \\
\textbf{2}           & ED                                      & 0                                              & {[}41,61)                        & 1766                             \\
\textbf{3}           & ED                                      & 0                                              & {[}61,80{]}                      & 1728                             \\
\textbf{4}           & ED                                      & \textgreater{}0                                & -                                & 973                             \\
\textbf{5}           & nonED                                  & 0                                              & {[}18,41)                        & 1015                             \\
\textbf{6}           & nonED                                  & 0                                              & {[}41,61)                        & 1310                             \\
\textbf{7}           & nonED                                  & 0                                              & {[}61,80{]}                      & 859                              \\
\textbf{8}           & nonED                                  & \textgreater{}0                                & -                                & 563                              \\ \bottomrule
\end{tabular}
}
\caption{The medical features in each of the eight clusters of the historical data. The admission source is classified as emergency admission (ED) and non-emergency admission (nonED). }
\label{Table: historical data classification}
\end{table}


\vspace{-0.3in}
\begin{table}[htbp]
\centering
\scalebox{0.8}{
\begin{tabular}{@{}cccccccccc@{}}
\toprule
cluster    & \textbf{policy} & \textbf{$p_{00}$} & \textbf{$p_{01}$} & \textbf{$p_{10}$} & \textbf{$p_{11}$} & \textbf{$p_{20}$} & \textbf{$p_{21}$} & \textbf{$p_{30}$} & \textbf{$p_{31}$} \\ \midrule
     \textbf{1} &  0000 & 0.0311  & 0.0176  & 0.0257  & 0.0131  & 0.0199  & 0.0116  & 0.0156  & 0.0083  \\ 
     \textbf{2} &   1100 & 0.0473  & 0.0275  & 0.0399  & 0.0207  & 0.0314  & 0.0185  & 0.0250  & 0.0133  \\ 
        \textbf{3} &1111 & 0.0622  & 0.0371  & 0.0534  & 0.0282  & 0.0428  & 0.0254  & 0.0346  & 0.0185  \\ 
        \textbf{4} &1111 & 0.0886  & 0.0559  & 0.0791  & 0.0437  & 0.0660  & 0.0403  & 0.0552  & 0.0300  \\ 
        \textbf{5} &0000 & 0.0250  & 0.0143  & 0.0208  & 0.0107  & 0.0162  & 0.0095  & 0.0128  & 0.0068  \\ 
        \textbf{6} &1100 & 0.0406  & 0.0238  & 0.0344  & 0.0179  & 0.0273  & 0.0160  & 0.0217  & 0.0116  \\ 
        \textbf{7} &1111 & 0.0547  & 0.0327  & 0.0470  & 0.0249  & 0.0378  & 0.0225  & 0.0305  & 0.0163  \\ 
        \textbf{8} &1111 & 0.0631  & 0.0386  & 0.0551  & 0.0297  & 0.0449  & 0.0270  & 0.0368  & 0.0198 \\ 
\bottomrule 
\end{tabular}
}
\caption{The transition probability $p_{ha}$ at each $(h,a)$ pair (after being corrected for the selection bias) and the corresponding optimal policies in each of the eight clusters for the historical patients.}
\label{Table: historical data P in RL}
\end{table}

\medskip
\subsection{Algorithms Implementation Details}
\label{Appendix:algorithm details}

\paragraph{JS Algorithm. }  
JS algorithm updates estimator \eqref{eq: data-pooling estimates} in the main paper using a different weight for each historical patient class $k$, given by 
$$\lambda_{t,k}^{JS}(h,s,a)=\min\left(\max\left(1- \dfrac{(K-2)\sum_{i=1}^{N_k(h,s,a)}(q_{k,i}-\bar{q}_k)^2}{N_k(h,s,a)\sum_{l=0}^{K}(\bar{q}_l-\bar{q})^2},0\right) ,1 \right), \quad 1 \leq k \leq K.
$$
Here, we use $k=0$ to denote the target class. For each $k$, $N_k(h,s,a)$ is the number of samples in class $k$, $q_{k,i} = \hat{r}(h,s,a)+\hat{V}_{h+1}(s^i_{h+1})$ is the observed Q-function corresponding to the $i$-th sample in class $k$, $\bar{q}_k$ is the sample mean of the Q-functions in class $k$, and $\bar{q}$ is the sample mean over all classes. 
The weight for the target  class is $\lambda^{JS}_{t,0}(h,s,a)=1 - \sum_{k=1}^K \lambda^{JS}_{t,k}(h,s,a)$.

\paragraph{Contextual-P Algorithm. } We denote the collected target dataset right before iteration $t$ as $\mathcal{D}^t=\{(h,a,R_h,S_{h+1},x)_i\}$, with $\mathcal{D}^0$ being the historical dataset. Contextual-P algorithm merges the target dataset $\mathcal{D}^t$ with the historical dataset $\mathcal{D}^0$ to construct a merged dataset $\tilde{\mathcal{D}}^t$ and estimates the readmission probability using the merged dataset $\tilde{\mathcal{D}}^t$. 
For each sample $i$,  $x_i\in\mathbb{R}^{19}$ is its feature vector and the features are: 
\small 
\begin{align*}
x =[~ 'prior\_visit\_num', 'num\_of\_transfers', 'Charlson\_score', 'age',
'LOS\_day','gender\_M','class\_B2','class\_C',
\\
'adm\_source\_DS', 'adm\_source\_EL','adm\_source\_ED','adm\_source\_ES','adm\_source\_SD',\\'specialty\_Med','specialty\_Ortho',
'specialty\_Surg',
'Weekend\_True','adm\_to\_icu\_1','had\_operations\_1'].
\end{align*}

\normalsize
Recall that $a\in\{0,1\}$ is the action (taking intervention or not). The readmission probability of a patient with feature $x$ under action $a$ at decision epoch $h$ is fit by a linear model as
\begin{align*}
p^h(a,x) = c^h_1a + c^h_{2}\cdot x + c^h_3,
\end{align*}
where $c^h_1\in\mathbb{R}, c^h_2 \in \mathbb{R}^{19}$ and $c^h_3\in\mathbb{R}$ are the parameters to be estimated. Once the readmission probabilities are estimated, contextual-P then computes the Q-functions using the estimated readmission probabilities and selects the action for iteration $t$ accordingly.

\paragraph{Contextual-Q Algorithm. } Contextual-Q algorithm is similar to contextual-P, except that it directly estimates the Q-function value instead of the readmission probability. In contextual-Q algorithm, the Q-function value $Q_h(a;x)$ is fit by a linear model as 
\begin{align*}
    Q_h(a;x) = \theta^h_1 a + \theta^h_2 \cdot x + \theta^h_3,
\end{align*}
where $\theta^h_1\in\mathbb{R}, \theta^h_2 \in \mathbb{R}^{19}$ and $\theta^h_3\in\mathbb{R}$ are the parameters to be estimated.
For each patient sample $i$,  the observed Q-function $\tilde{Q}^i_h(a;x)$ at decision epoch $h$ is calculated from data following the one-step backward induction
\begin{align*}
\tilde{Q}^i_h(a;x) = c_a \mathbf{1}(a=1) + \hat{V}^i_{h+1}\cdot \mathbf{1}(S_{h+1}=0)+c_R\cdot \mathbf{1}(S_{h+1}=1), 
\end{align*}
where $x_i$ is the feature of patient sample $i$, $\hat{V}^i_{h+1}=\min\limits_a \hat{Q}_{h+1}(a;x_i)$ is the estimated value function of patient $i$ at decision epoch $h + 1$, and $\hat{Q}_{h+1}(a;x_i)$ is the estimated Q-function value of patient sample $i$ at decision epoch $h+1$.

\paragraph{Clustering Algorithm. } We follow the idea in~\cite{miao2019context} to perform an adaptive clustering algorithm. 
At each iteration $t$, we first compute the MLE $\hat{P}^t(\cdot;h,\cdot,a)$ using $n_t(h,\cdot,a)$ target samples. Then, we cluster the target and historical patients according to the closeness in the estimation, defined via 
$$
||\bar{P}^k(\cdot;h,\cdot,a)-\hat{P}^t(\cdot;h,\cdot,a)||_2\le C/\sqrt{n_t(h,\cdot,a)}, 
$$
where $C$ is a hyper-parameter and $\bar{P}^k(\cdot;h,\cdot,a)$ is the MLE estimators from class $k$'s historical data. Lastly, we use the clustered data to estimate the value function for the target patients and move on to the next iteration.

\paragraph{Data-pooling Algorithm. } 
Weights for $K$ different historical datasets are computed by minimizing 
\begin{align*}
\varepsilon_t^{ha}(\lambda) &=\sqrt{\log(2HSAn^2)(1+(H-h)^2)\left(\dfrac{(1-\lambda_1-\lambda_2\ldots-\lambda_K)^2}{n}+\dfrac{\lambda_1^2}{N_1}+\dfrac{\lambda_2^2}{N_2}+\ldots+\dfrac{\lambda_K^2}{N_K}\right) } \\+&(1+H-h)(\lambda_1\Delta_{t1}^{ha}+\lambda_2\Delta_{t2}^{ha}+\ldots+\lambda_K\Delta_{tK}^{ha}), 
\end{align*}
where $\Delta_{tk}^{ha} = ||\bar{P}^k(\cdot;h,\cdot,a)-\hat{P}^t(\cdot;h,\cdot,a)||_1\times\gamma$ with $\gamma$ being a hyper-parameter.

\noindent\textbf{Perturbed framework. } In all of the algorithms, we follow the perturbed LSVI framework. We assume that the extra randomness injected for exploration is a Gaussian noise with mean 0 and variance $\sigma^2/n^t(h,\cdot,a)$ for each $(h,a)$ pair at iteration $t$, with $\sigma$ being a hyper-parameter.

\vspace{-0.1in}
\subsection{Hyper-parameters Tuning}
\label{Appendix: Hyper-parameters}

The hyper-parameter $\sigma$ for exploration in each of the algorithms is chosen from the set $\{0.01,0.05,0.1,0.2,0.4\}$ to achieve the best performance via cross validation. For additional hyper-parameters in each of the algorithms: (i) the candidate of $\gamma$ for data-pooling is selected from $\{0.01,0.3,0.5,0.7,1,1.5,2,10\}$; (ii) the candidate of $C$ for clustering is selected from $\{0.01,0.1,0.3,0.5,0.7,1,2,4\}$.
We tune all these parameters to give enough benefit of doubts for each algorithm. The results under different hyper-parameters are showed in Tables~\ref{Table:Hyper,Data-pooling}, \ref{Table:Hyper,Clustering}, and~\ref{Table:Hyper,others}. The results show that there is a reasonably large range of hyper-parameters that our data-pooling method achieves better performance than other benchmarks. This also indicates the robustness of the performance of our algorithm over the hyper-parameter values; see more in Section~\ref{app:robust-gamma}. 

\vspace{-0.1in}
\begin{table}[htbp]
\centering
\scalebox{0.85}{
\begin{tabular}{|c|c|c|}
\hline
Algorithm\textbackslash{}Hyper-parameters & $\sigma$ & Extra Hyper-parameter \\
\hline
\textbf{Data-pooling}                                         & 0.1      & $\gamma=0.7$               \\\hline
\textbf{Clustering}                                           & 0.2      & $C=0.5$               \\\hline
\textbf{JS}                                                   & 0.1      &                       \\\hline
\textbf{Complete}                                             & 0.05      &                       \\\hline
\textbf{Personalized}                                         & 0.2      &                       \\\hline
\textbf{Contextual-Q}                                  & 0.05      &                       \\\hline
\textbf{Contextual-P}                              & 0.05      &    \\ \hline                  
\end{tabular}
}
\caption{The hyper-parameters used in the algorithms that achieved smallest regret.}\label{Table:Hyper-parameters}
\end{table}

\vspace{-0.3in}
\begin{table}[htbp]
\centering
\scalebox{0.85}{
\begin{tabular}{|c|c|c|c|c|c|c|c|c|}
\hline
$\sigma$\textbackslash$\gamma$ & 0.01     & 0.3      & 0.5      & 0.7      & 1        & 1.5      & 2        & 10       \\ \hline
0.01                              & 3764.702 & 3131.574 & 3153.299 & 3124.427 & 3333.851 & 3672.336 & 3892.774 & 4518.580 \\ \hline
0.05                              & 3887.107 & 3087.282 & 2959.274 & 2885.548 & 3116.548 & 3488.318 & 3730.835 & 4452.750  \\ \hline
0.1                               & 4018.540 & 3172.785 & 2930.379 & 2813.329  & 2992.442 & 3354.466 & 3625.853 & 4349.501  \\ \hline
0.2                               & 4319.077 & 3379.599 & 3020.867 & 2874.198  & 2973.911 & 3273.588 & 3524.715 &  4220.833 \\ \hline
0.4                               & 4697.759 & 3871.202 & 3415.256 & 3187.995 & 3165.118 & 3358.996 & 3579.635 & 4144.007 \\ \hline
\end{tabular}
}
\caption{Total regret in 50 iterations of data-pooling using different hyper-parameters}\label{Table:Hyper,Data-pooling}
\end{table}

\vspace{-0.3in}
\begin{table}[htbp]
\centering
\scalebox{0.85}{
\begin{tabular}{|c|c|c|c|c|c|c|c|c|}
\hline
$\sigma$\textbackslash   C & 0.01     & 0.1      & 0.3      & 0.5      & 0.7      & 1        & 2        & 4        \\ \hline
0.01                       & 4681.712 & 4449.651 & 4266.776 & 4503.931 & 4483.657 & 4179.126 & 3958.573 & 3927.770  \\ \hline
0.05                       & 4655.609 & 4371.936 & 4050.210 & 4075.693 & 4160.202 & 4168.430 & 4028.130 & 3993.867 \\ \hline
0.1                        & 4565.224 & 4295.532 & 3971.203 & 3924.858 & 4016.826 & 4063.454 & 4122.778 & 4100.217 \\ \hline
0.2                        & 4439.702 & 4205.748 & 3880.789 & 3848.244 & 4003.407 & 4169.209 & 4362.318 & 4366.421  \\ \hline
0.4                        & 4347.206 & 4161.077 & 3886.968 & 3926.420 & 4119.112 & 4367.080 & 4702.956 & 4727.316 \\ \hline
\end{tabular}
}
\caption{Total regret in 50 iterations of clustering using different hyper-parameters}\label{Table:Hyper,Clustering}
\end{table}

\begin{table}[H]
\centering
\scalebox{0.85}{
\begin{tabular}{@{}cccccc@{}}
\toprule
Algorithm\textbackslash$\sigma$ & 0.01     & 0.05     & 0.1      & 0.2      & 0.4      \\ \midrule
JS                                 & 3998.232 & 4018.119 & 3994.962 & 4042.544 & 4214.890  \\
Complete                           & 4188.623 & 4095.432 & 4163.520 & 4286.474 & 4419.856 \\
Personalized                       & 4469.011 & 4452.541 & 4417.864 & 4405.007 & 4317.601 \\
Contextual-Q                & 3548.832 & 3529.487 & 3570.606 & 3637.642 & 3713.021 \\
Contextual-P            & 4382.261 & 4342.012 & 4432.124 & 4415.164 & 4397.171 \\ \bottomrule
\end{tabular}
}
\caption{Total regret in 50 iterations using different hyper-parameters $\sigma$}
\label{Table:Hyper,others}
\end{table}

\vspace{-0.3in}
\subsection{Comparison with Clustering}
\label{app:comparison-cluster-context} 

Figure~\ref{fig:comparison_clustering_pooling} plots the trajectories under the clustering algorithm and our data-pooling algorithm on one sample path. Each dot $(1-p_{ha}, p_{ha})$ in the plane represents the combination of transition probabilities of staying at the non-readmit state or moving into the readmit state, respectively. The black dot $(0.96, 0.04)$ in the figure represents the true transition probabilities. Both algorithms start from the initial value $(1,0)$, and the closer it gets to the true estimation (black dot), the better. Note that within 20 iterations, the trajectory (red line) under data-pooling gets close enough to the black dot, and stays close around $(0.97, 0.03)$ from iteration 20 to iteration 50. In contrast, due to the variance in sampling, the trajectory (green line) under clustering gets far away from the black dot -- it stayed around the point $(0.88, 0.12)$ after iteration~30. This drift-away results in picking the suboptimal action ($a=0$) after iteration 30 since action $a=0$ results in a lower cost even with exploration (the exploration noise is not enough to overcome the gap between the two actions). Consequently, this leads to a larger regret and higher cost during the learning under the clustering method, in comparison to our data-pooling algorithm. 
It is worth mentioning that although we show one sample path here for illustration, similar situations happened over many other sample paths. 

\vspace{-0.2in}
\begin{figure}[htbp]
    \centering
    \includegraphics[width=0.5\linewidth]{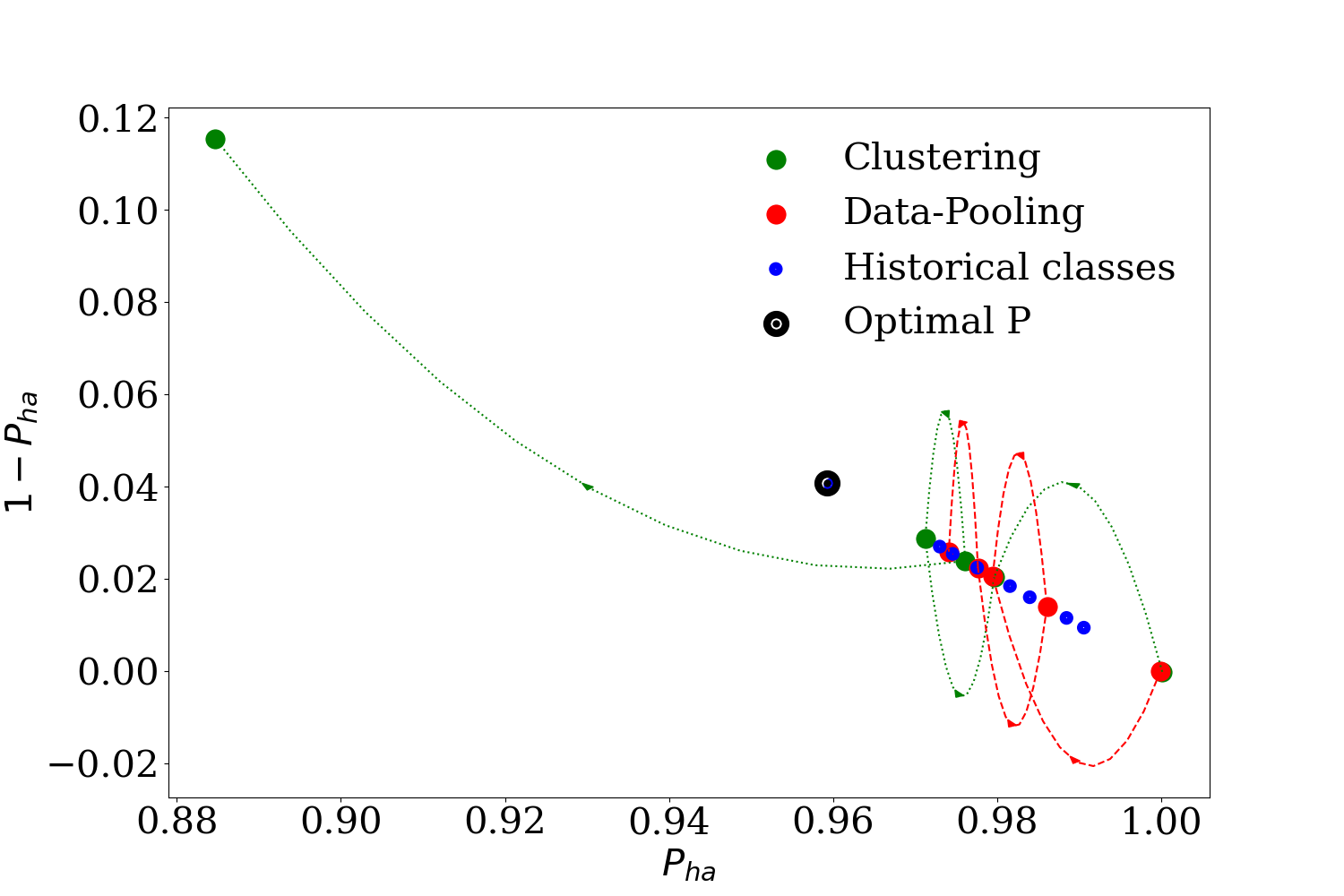}
    \caption{Trajectory of estimating transition probability in one sample path. }
    \label{fig:comparison_clustering_pooling}
\end{figure}

\subsection{Additional Results for the Synthetic Experiments}
\label{app:synth-exp}
Table \ref{table: regret synthetic} summarizes the performance in numeric values for data-pooling, contextual-Q and contextual-P in the synthetic experiments. Figure \ref{fig:syn_lam} illustrates that our data-pooling algorithm successfully pools historical class 1 to target class 0.

\begin{table}[htbp]
\centering 
\scalebox{0.9}{
\begin{tabular}{@{}cllcc@{}}
\toprule
Algorithm             & Total Regret & Total Cost & \begin{tabular}[c]{@{}c@{}}Readmission \\  Rate\end{tabular} & CPU time/class \\ \midrule
 Data-pooling            &       165.3$\pm$18.14  & 6860.49$\pm$50.31  & 5.80\% & 232.36 \\ 
  Contextual-Q     &       299.72$\pm$4.79  & 7007.82$\pm$42.36 & 6.24\% & 1411.15 \\ 
Contextual-P &         386.14$\pm$3.28 & 5415.19$\pm$37.35 & 4.88\% & 1485.36 \\ \hline
\end{tabular}
}
\caption{The Performance of different algorithms for 50 iterations and 100 rounds. 
The numbers following the $\pm$ sign are the half-width of the 95\% confidence interval of the corresponding value. 
}
\label{table: regret synthetic}
\end{table}

\vspace{-0.2in}
\begin{figure}[htbp]
\centering
\subfigure[$a=0$]{\includegraphics[width=0.38\linewidth]{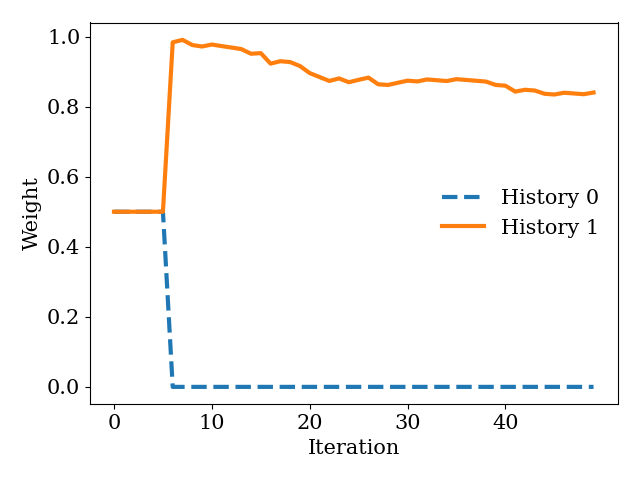}\label{fig:syn_lam_0}} 
\quad 
\subfigure[$a=1$]{\includegraphics[width=0.38\linewidth]{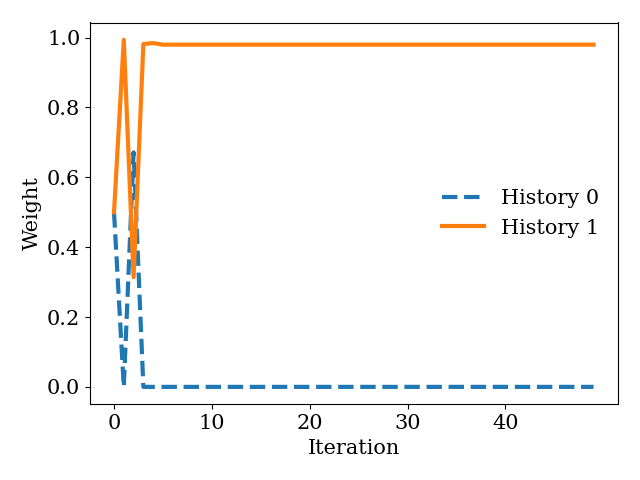}\label{fig:syn_lam_1}} 
\caption{The weight for target 0 at period 0 used in the data-pooling algorithm.}
\label{fig:syn_lam}
\end{figure}

\subsection{Robustness Analysis}
\label{app:robust-gamma}

When implementing algorithm-based decision support in practice, there could be different operational barriers. We discuss how to deal with a few possible ones in this section:
data-privacy environment, robustness to $\Delta$ (the difference between target and historical patients), and different treatment effect.

\vspace{-0.1in}
\subsubsection{Dealing with Data-sharing Restrictions}
\label{app:data-privacy}

We compare our data-pooling algorithm with the clustering method when data-sharing is restricted. 
In this numerical experiment, only the aggregate statistics in terms of empirical averages are available, i.e., $\bar{P}_0(h,a)$, $h=1,2,3,4$. It is worth stressing that this empirical average is sufficient for us to implement our data-pooling algorithm, whereas it is not for the clustering method. To provide the necessary input for the clustering method, we randomly label the historical data into eight groups. Figure~\ref{fig:nolabel, regret} compares the regret from the data-pooling algorithm and the clustering method. The performance gap between the two algorithms increases comparing to that in the baseline, with the clustering method having a worse result. We also note that the regret from the clustering method decreases much slower in this no-label environment than that in the baseline. In contrast, our data-pooling algorithm still performs very well using a single cluster -- that is, we only used the average and sample size of the ungrouped historical data. This robustness is particularly appealing when one does not have access to critical patient information to classify the historical data. 

\vspace{-0.2in}
\begin{figure}[!htb]
    \centering
    \hspace*{\fill}%
    \begin{minipage}[t]{.5\textwidth}
        \centering
        \vspace{0pt}
        \includegraphics[width=0.98\columnwidth]{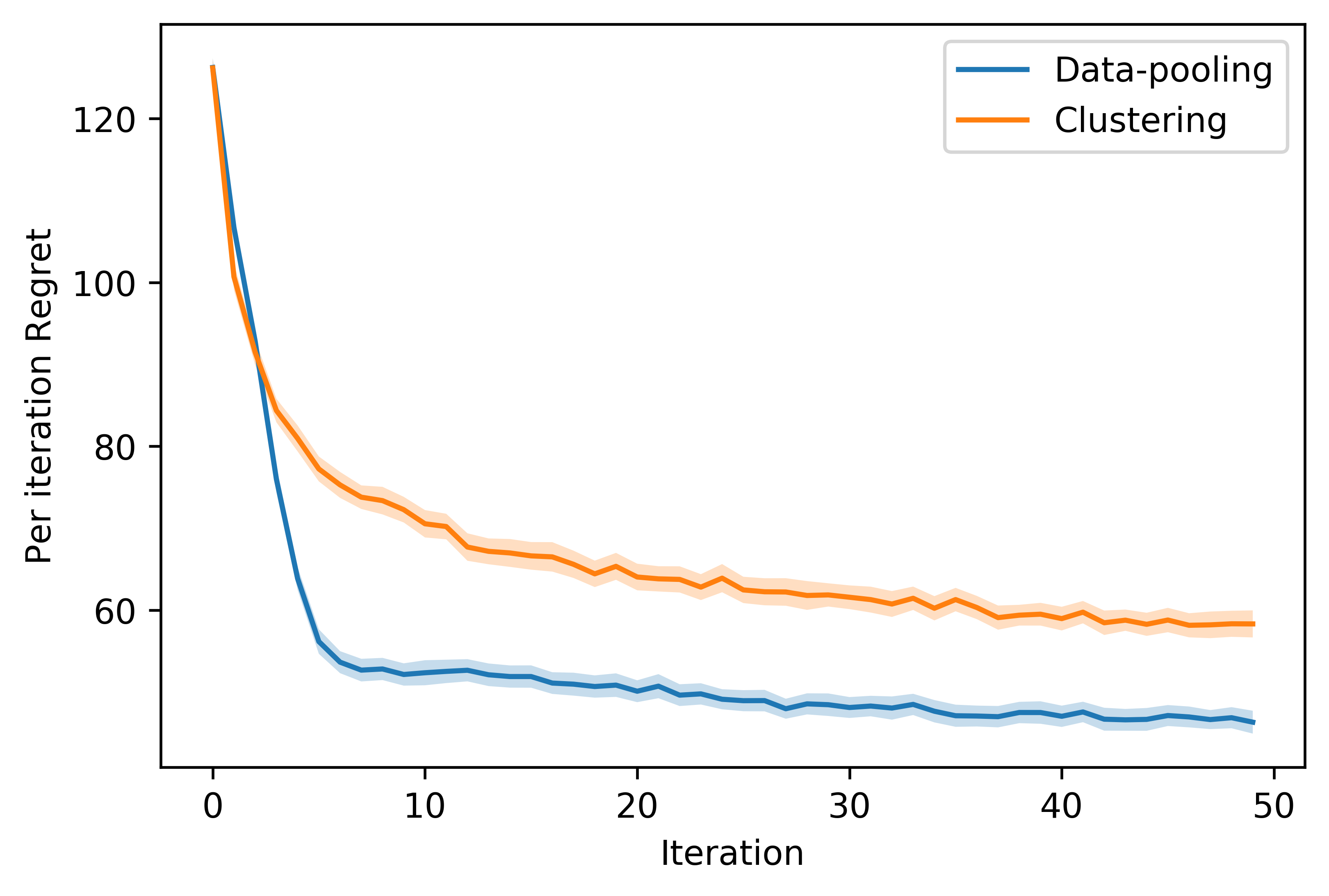}
        \caption{Total regret of 50 patient classes under no-label situation for 50 iterations and 100 rounds. The shaded area represents the 95\% confidence interval.}
        \label{fig:nolabel, regret}
    \end{minipage}%
    \hfill
    \begin{minipage}[t]{0.5\textwidth}
        \centering
        \vspace{0pt}
        \includegraphics[scale=0.4]{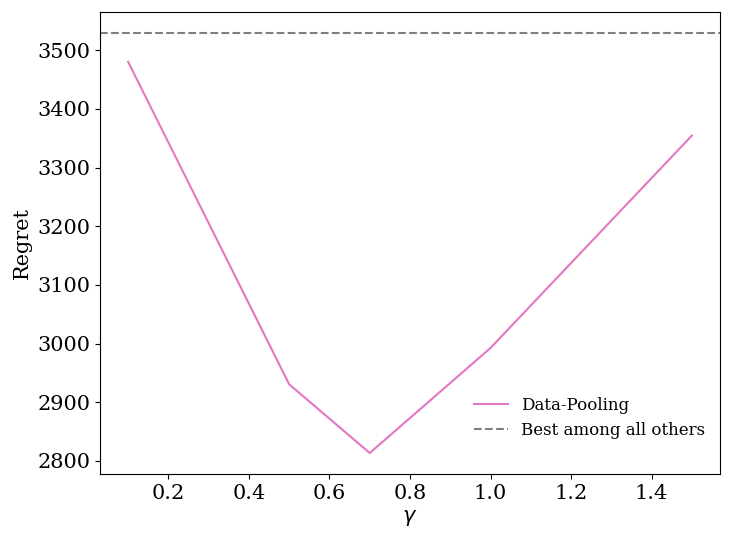}
        \caption{Robustness check for hyper-parameter $\gamma$.}
        \label{fig:robust_check}
    \end{minipage}
    \hspace*{\fill}
\end{figure}





\vspace{-0.1in}
\subsubsection{Robustness of Hyper-parameter $\gamma$}
\label{subsection:robust_check}
To deal with unknown value of difference gap $\Delta$, we introduce $\gamma$ as a tuning parameter in our implementation of the data-pooing method (see Section~\ref{Appendix:algorithm details}). Here we conduct simple robustness check experiments on $\gamma$ with other hyper-parameters unchanged. We compare the performance of data-pooling with different values of $\gamma$ with the best regret obtained by other benchmark algorithms. As shown in Figure \ref{fig:robust_check}, unless the choice of $\gamma$ is far away from the default value (1.0), the performance of data-pooling algorithm is significantly better than the best of other benchmarks.


\subsubsection{Different Treatment Effects}
\label{app:diff-treat}

In this experiment, we follow the synthetic experiment setting in Section~\ref{sec:robust-under-misspec}. However, we modify the coefficient of the action, $c_1$, to $-0.075$ for the target patients (no change for historical patients). Thus, the target class becomes even more different from the historical classes, imposing more difficulty in learning. The optimal policies for each class are summarized in Table~\ref{table:policy-diff-treat}.

\begin{table}[htbp]
\centering
\begin{tabular}{c|ccccccccc}
\multicolumn{1}{c|}{\textbf{class}} & \multicolumn{1}{c}{\textbf{$p_{00}$}} & \multicolumn{1}{c}{\textbf{$p_{01}$}} & \multicolumn{1}{c}{\textbf{$p_{10}$}} & \multicolumn{1}{c}{\textbf{$p_{11}$}} & \multicolumn{1}{c}{\textbf{$p_{20}$}} & \multicolumn{1}{c}{\textbf{$p_{21}$}} & \multicolumn{1}{c}{\textbf{$p_{30}$}} & \multicolumn{1}{c}{\textbf{$p_{31}$}} & \multicolumn{1}{c}{\textbf{opt policy}} \\ \hline
target                              & 2.9\%                            & 1.8\%                            & 2.9\%                            & 1.8\% & 6.7\% & 4.1\% & 6.7\% & 4.1\% & 0011   \\
target                              & 7.3\%                            & 4.7\%                            & 7.3\%                            & 4.7\% & 3.1\% & 2.0\% & 3.1\% & 2.0\% & 1100   \\
history                             & 7.4\%                            & 5.4\%                            & 7.4\%                            & 5.4\% & 3.2\% & 2.3\% & 3.2\% & 2.3\% & 0011   \\
history                             & 2.8\%                            & 2.0\%                            & 2.8\%                            & 2.0\% & 6.6\% & 4.7\% & 6.6\% & 4.7\% & 1100  
\end{tabular}
\caption{Readmission rate over the four-week time window for different treatment effects.}
\label{table:policy-diff-treat}
\end{table}

\noindent\textbf{Performance Comparison. } Figure \ref{fig:regret synthetic diff treat} plots the regret per iteration for the three tested policies. Table \ref{table: regret synthetic diff treat} summarize the performance comparison in numeric values. Our numerical results show that the data-pooling method has the smallest regret comparing to the other two methods. The gap in the regret increases from around 80\% in the setting with same treatment effect to more than 200\% in this setting with different treatment effect. Similar as before, the key to the superior performance of our method is that it properly pools the right historical class and the target class to learn the transition probabilities and policies. In contrast, contextual methods pool the wrong classes due to the model misspecifications. Additionally, the different treatment effect imposes another challenge to contextual methods since they try to learn a single coefficient, $c_1^h$ for the action, resulting in more deviation from the ground truth. As we can see in Figure \ref{fig:regret synthetic diff treat}, although having good initial policies, the regret trajectories of the two contextual methods have no obvious downward trend, which means they need a lot more data to correct the misspecification caused by historical data. 
Though the treatment effect is larger, our data-pooling method still pools correct class pairs together, achieving much better performance than the contextual methods. 

\begin{figure}[htbp]
	\begin{center}
		\centerline{\includegraphics[width=0.45\columnwidth]{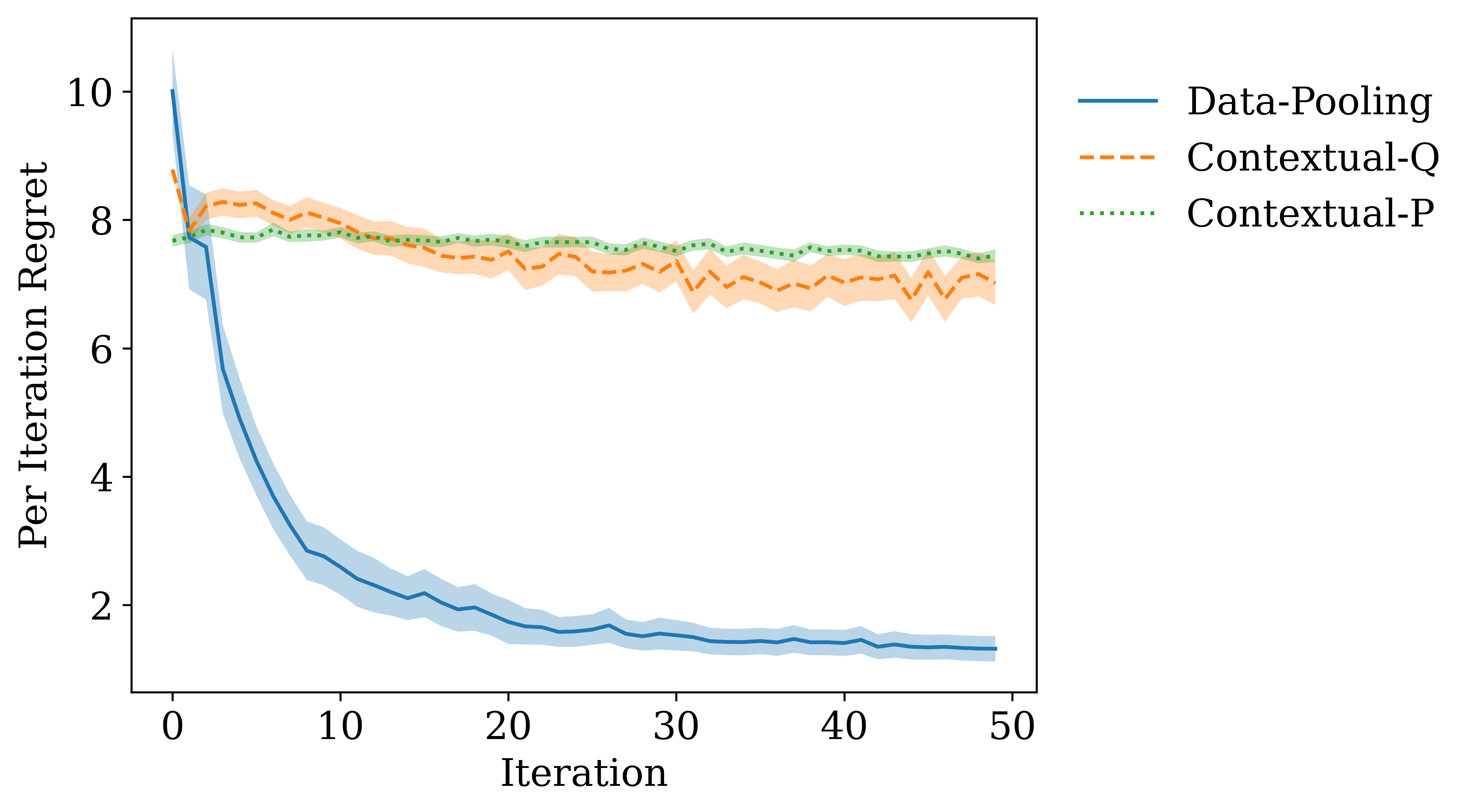}}
		\caption{Regret per iteration of 2 target patient classes. 
		The solid line represents the average from 100 replications, and the shaded area corresponds to the 95\% confidence interval.}
		\label{fig:regret synthetic diff treat}
	\end{center}
	\vskip -0.2in
\end{figure}

\vspace{-0.1in}
\begin{table}[htbp]
\centering 
\scalebox{0.9}{
\begin{tabular}{@{}cllcc@{}}
\toprule
Algorithm             & Total Regret & Total Cost & \begin{tabular}[c]{@{}c@{}}Readmission \\  Rate\end{tabular} & CPU time/class \\ \midrule
 Data-pooling            &       118.52$\pm$10.64  & 6327.71$\pm$40.47 & 5.31\% & 143.99 \\ 
  Contextual-Q     &       371.01$\pm$9.35  & 6605.74$\pm$41.42 & 5.18\% & 1384.08 \\ 
Contextual-P &         381.46$\pm$4.89 & 4901.53$\pm$34.57 & 4.14\% & 1543.71 \\ \hline
\end{tabular}
}
\caption{The Performance of different algorithms for 50 iterations and 100 rounds. 
The numbers following the $\pm$ sign are the half-width of the 95\% confidence interval of the corresponding value.} 
\label{table: regret synthetic diff treat}
\end{table}

\end{document}